\PassOptionsToPackage{unicode}{hyperref}
\PassOptionsToPackage{hyphens}{url}
\PassOptionsToPackage{dvipsnames,svgnames,x11names}{xcolor}
\documentclass[
  12pt]{article}
\usepackage{tabularx}
\usepackage{amsmath,amssymb}
\usepackage{amsthm}
\usepackage{mathrsfs}
\usepackage{comment}

\newtheorem{theorem}{Theorem}
\newtheorem{assumption}{Assumption}
\newtheorem{lemma}{Lemma}

\newtheorem{remark}{Remark}

\usepackage{titlesec}
\titlespacing*{\section}{0pt}{0.8ex plus 0.2ex minus 0.1ex}{0.5ex}
\titlespacing*{\subsection}{0pt}{0.8ex plus 0.2ex minus 0.1ex}{0.4ex}
\titlespacing*{\subsubsection}{0pt}{0.6ex plus 0.1ex minus 0.1ex}{0.3ex}
\titlespacing*{\paragraph}
  {0pt}   
  {0.4ex plus 0.1ex minus 0.1ex} 
  {0.6em} 

\usepackage{iftex}
\ifPDFTeX
  \usepackage[T1]{fontenc}
  \usepackage[utf8]{inputenc}
  \usepackage{textcomp} 
\else 
  \usepackage{unicode-math}
  \defaultfontfeatures{Scale=MatchLowercase}
  \defaultfontfeatures[\rmfamily]{Ligatures=TeX,Scale=1}
\fi
\usepackage{lmodern}
\ifPDFTeX\else  
\fi
\IfFileExists{upquote.sty}{\usepackage{upquote}}{}
\IfFileExists{microtype.sty}{
  \usepackage[]{microtype}
  \UseMicrotypeSet[protrusion]{basicmath} 
}{}
\makeatletter
\@ifundefined{KOMAClassName}{
  \IfFileExists{parskip.sty}{%
    \usepackage{parskip}
  }{
    \setlength{\parindent}{0pt}
    \setlength{\parskip}{6pt plus 2pt minus 1pt}}
}{
  \KOMAoptions{parskip=half}}
\makeatother
\usepackage{xcolor}
\makeatletter
\ifx\paragraph\undefined\else
  \let\oldparagraph\paragraph
  \renewcommand{\paragraph}{
    \@ifstar
      \xxxParagraphStar
      \xxxParagraphNoStar
  }
  \newcommand{\xxxParagraphStar}[1]{\oldparagraph*{#1}\mbox{}}
  \newcommand{\xxxParagraphNoStar}[1]{\oldparagraph{#1}\mbox{}}
\fi
\ifx\subparagraph\undefined\else
  \let\oldsubparagraph\subparagraph
  \renewcommand{\subparagraph}{
    \@ifstar
      \xxxSubParagraphStar
      \xxxSubParagraphNoStar
  }
  \newcommand{\xxxSubParagraphStar}[1]{\oldsubparagraph*{#1}\mbox{}}
  \newcommand{\xxxSubParagraphNoStar}[1]{\oldsubparagraph{#1}\mbox{}}
\fi
\makeatother

\usepackage{longtable,booktabs,array}
\usepackage{calc} 
\usepackage{etoolbox}
\makeatletter
\patchcmd\longtable{\par}{\if@noskipsec\mbox{}\fi\par}{}{}
\makeatother
\IfFileExists{footnotehyper.sty}{\usepackage{footnotehyper}}{\usepackage{footnote}}
\makesavenoteenv{longtable}
\usepackage{graphicx}
\makeatletter
\def\maxwidth{\ifdim\Gin@nat@width>\linewidth\linewidth\else\Gin@nat@width\fi}
\def\maxheight{\ifdim\Gin@nat@height>\textheight\textheight\else\Gin@nat@height\fi}
\makeatother
\setkeys{Gin}{width=\maxwidth,height=\maxheight,keepaspectratio}
\makeatletter
\def\fps@figure{htbp}
\makeatother

\makeatletter
\@ifpackageloaded{caption}{}{\usepackage{caption}}
\AtBeginDocument{%
\ifdefined\contentsname
  \renewcommand*\contentsname{Table of contents}
\else
  \newcommand\contentsname{Table of contents}
\fi
\ifdefined\listfigurename
  \renewcommand*\listfigurename{List of Figures}
\else
  \newcommand\listfigurename{List of Figures}
\fi
\ifdefined\listtablename
  \renewcommand*\listtablename{List of Tables}
\else
  \newcommand\listtablename{List of Tables}
\fi
\ifdefined\figurename
  \renewcommand*\figurename{Figure}
\else
  \newcommand\figurename{Figure}
\fi
\ifdefined\tablename
  \renewcommand*\tablename{Table}
\else
  \newcommand\tablename{Table}
\fi
}
\@ifpackageloaded{float}{}{\usepackage{float}}
\floatstyle{ruled}
\@ifundefined{c@chapter}{\newfloat{codelisting}{h}{lop}}{\newfloat{codelisting}{h}{lop}[chapter]}
\floatname{codelisting}{Listing}

\makeatother
\makeatletter
\@ifpackageloaded{caption}{}{\usepackage{caption}}
\@ifpackageloaded{subcaption}{}{\usepackage{subcaption}}
\makeatother


\ifLuaTeX
  \usepackage{selnolig}  
\fi
\usepackage[]{natbib}
\usepackage{bookmark}
\usepackage{amsmath}

\usepackage{bbm}
\usepackage{tikz}
\usepackage{subcaption}
\usepackage{algorithm}
\usepackage{algorithmic}

\usepackage{tikz}
\usetikzlibrary{decorations.pathreplacing}
\usetikzlibrary{arrows.meta,positioning}
\IfFileExists{xurl.sty}{\usepackage{xurl}}{} 
\hypersetup{
  pdftitle={Bayesian Deep Discrete Encoders with Unknown Layer Widths for Multivariate Data with Arbitrary Marginal Distributions},
  pdfauthor={Author 1; Author 2},
  pdfkeywords={3 to 6 keywords, that do not appear in the title},
  colorlinks=true,
  linkcolor={blue},
  filecolor={Maroon},
  citecolor={Blue},
  urlcolor={Blue},
  pdfcreator={LaTeX via pandoc}}

\newcommand{\anon}{1}

\usepackage{multirow}
\usepackage{makecell}
\usepackage{mathtools}
\usepackage{rotating}
\usepackage{lscape}
\usepackage{multicol}
\usepackage{csquotes}
\usepackage{setspace}
\usepackage{cases}
\usepackage{chngcntr}

\newlength\myindent
\theoremstyle{remark}

\newif\ifblinded
\blindedtrue

\begin{document}

\def\spacingset#1{\renewcommand{\baselinestretch}%
{#1}\small\normalsize} \spacingset{1}


\if1\anon
{
  \title{Identifiable Bayesian Deep Generative Copulas with Unknown Layer Widths for Data with Arbitrary Marginal Distributions}
  \author{Joseph Feldman\thanks{\textit{feldmanjr@wustl.edu}}\\
    Department of Statistics and Data Science\\
    Washington University in St. Louis \\[0.5em]
    Yuqi Gu\thanks{\textit{yuqi.gu@columbia.edu}} \\
    Department of Statistics\\
    Columbia University}
\date{}
  \maketitle
} \fi

\if0\anon
{
  \bigskip
  \bigskip
  \bigskip
  \begin{center}
    {\LARGE\bf Title}
\end{center}
  \medskip
} \fi

\date{}

\begin{abstract}
Deep generative models offer powerful tools for multivariate data analysis, but their black-box architectures are often unidentified and difficult to interpret. We introduce the Deep Discrete Encoder (DDE) Copula, an identifiable and interpretable generative model for multivariate data with arbitrary marginal distributions. The model places a hierarchical directed network of binary latent variables inside a copula framework, enabling flexible dependence modeling for mixed discrete and continuous data. Estimation is based on rank likelihoods, which decouple marginal modeling from posterior inference on the DDE parameters and avoid specifying the marginal distributions.
We establish conditions for identification of the DDE copula parameters, ensuring that layer-specific parameters provide meaningful summaries of multivariate dependence. We also prove quotient-space posterior consistency for continuous margins under the exact rank likelihood and treat the extended rank likelihood for tied or mixed margins as a generalized likelihood, with concentration under an additional contrast condition.
For computation, we propose a stochastic expectation-maximization algorithm for \emph{maximum a posteriori} estimation, together with initialization strategies that improve convergence. To learn network dimension adaptively, we extend Bayesian rank-selection priors to infer layer-specific widths. Simulations show strong finite-sample performance, and a personality-survey analysis reveals interpretable hierarchical latent structure in complex multivariate data.

\end{abstract}

\noindent%
{\it Keywords:} Generative Models, Interpretable AI, Identifiability, Latent Variables, Bayesian, Rank likelihood.
\vfill

\newpage
\spacingset{1.75} 
\setlength{\abovedisplayskip}{6pt plus 2pt minus 2pt}
\setlength{\belowdisplayskip}{6pt plus 2pt minus 2pt}
\setlength{\abovedisplayshortskip}{3pt plus 1pt minus 1pt}
\setlength{\belowdisplayshortskip}{4pt plus 1pt minus 1pt}

\section{Introduction}
The widespread popularity of AI tools are due to the impressive predictive capabilities of underlying deep generative models (DGMs). Many DGMs rely on black-box, deep learning architectures governed by extremely large numbers of parameters relative to the training sample size. Such DGMs are fundamentally unidentified, and the estimated parameters lack interpretation of the substantive relationships in the data. From a statistical perspective, this is problematic; indeed, statistical inference is a cornerstone for scientific discovery.

In this work, we contribute to a rapidly developing literature focusing on developing statistically identified and simultaneously interpretable DGMs. Specifically, we introduce the Deep Discrete Encoder (DDE) Copula, a generative model with multiple latent layers for multivariate data with arbitrary marginal distributions. The DDE copula features a hierarchical, pyramid-like directed graph where the latent variables in each layer are binary and conditionally independent given parents (See Figure \ref{fig:DDE_right}). We then nest this structure -- capable of capturing complex dependencies among diverse data types -- within a copula to accommodate variables with arbitrary scales and marginal features. 

\begin{figure}[h!]
\centering

\begin{subfigure}[t]{0.48\textwidth}
\centering
\resizebox{\linewidth}{!}{%
\begin{tikzpicture}[
  >=Stealth, thick,
  latent/.style={circle,draw,minimum size=8mm,inner sep=0pt},
  obs/.style={latent,fill=gray!30},
  x=1.1cm, y=1.3cm
]
\path[use as bounding box] (-4.9,0.7) rectangle (4.9,-4.4);

\node[latent] (La2) at (0,0) {$A^{(2)}_{1}$};

\node[latent] (La11) at (-1,-1.4) {$A^{(1)}_{1}$};
\node[latent] (La12) at ( 0,-1.4) {$A^{(1)}_{2}$};
\node[latent] (La13) at ( 1,-1.4) {$A^{(1)}_{3}$};

\foreach \i/\x in {1/-4,2/-3,3/-2,4/-1,5/0,6/1,7/2,8/3,9/4}
  \node[obs] (Ly\i) at (\x,-3.4) {$Y_{\i}$};

\draw[->] (La2) -- (La11);
\draw[->] (La2) -- (La12);
\draw[->] (La2) -- (La13);

\foreach \j in {1,7}   \draw[->,opacity=0.7] (La11) -- (Ly\j);
\foreach \j in {2,4,8} \draw[->,opacity=0.7] (La12) -- (Ly\j);
\foreach \j in {3,5,7,9} \draw[->,opacity=0.7] (La13) -- (Ly\j);

\end{tikzpicture}%
}
\caption{$D=2$-Layer DDE.}
\label{fig:DDE_left}
\end{subfigure}\hspace{0.02\textwidth}
\begin{subfigure}[t]{0.48\textwidth}
\centering
\resizebox{\linewidth}{!}{%
\begin{tikzpicture}[
  >=Stealth, thick,
  latent/.style={circle,draw,minimum size=8mm,inner sep=0pt},
  obs/.style={latent,fill=gray!30},
  x=1.1cm, y=1.3cm
]
\path[use as bounding box] (-4.9,0.7) rectangle (4.9,-5.3);

\node[latent] (a2) at (0,0) {$A^{(2)}_{1}$};

\node[latent] (a11) at (-1,-1.4) {$A^{(1)}_{1}$};
\node[latent] (a12) at ( 0,-1.4) {$A^{(1)}_{2}$};
\node[latent] (a13) at ( 1,-1.4) {$A^{(1)}_{3}$};

\foreach \i/\x in {1/-4,2/-3,3/-2,4/-1,5/0,6/1,7/2,8/3,9/4}
  \node[latent] (z\i) at (\x,-3.0) {$Z_{\i}$};

\foreach \i/\x in {1/-4,2/-3,3/-2,4/-1,5/0,6/1,7/2,8/3,9/4}
  \node[obs] (y\i) at (\x,-4.6) {$Y_{\i}$};

\draw[->] (a2) -- (a11);
\draw[->] (a2) -- (a12);
\draw[->] (a2) -- (a13);

\foreach \j in {1,7}     \draw[->,opacity=0.7] (a11) -- (z\j);
\foreach \j in {2,4,8}   \draw[->,opacity=0.7] (a12) -- (z\j);
\foreach \j in {3,5,7,9} \draw[->,opacity=0.7] (a13) -- (z\j);

\foreach \i in {1,...,9} \draw[->] (z\i) -- (y\i);

\end{tikzpicture}%
}
\caption{$D=2$-layer DDE Copula  }
\label{fig:DDE_right}
\end{subfigure}

\caption{Comparison of architectures with and without an intermediate latent layer.}
\label{fig:DDE_copDDE}
\end{figure}

We first provide identifiability conditions for the DDE copula and show that when these are satisfied, the DDE copula is \emph{strictly identifiable}. This is vitally important for real data analyses, as the parameters governing each layer of the network provide directly interpretable and increasingly fine grained insight into multivariate relationships. As we show in a real data analysis (Section \ref{sec:realdat}), these novel insights demonstrate how the DDE copula is a powerful exploratory model.

From a computational standpoint,  we introduce several algorithmic novelties that enable scalable Bayesian estimation of the DDE Copula with unknown layer widths. First, we decouple marginal modeling from DDE posterior inference using an extended rank likelihood \citep{hoffRL}. The extended rank likelihood leverages multivariate ranks -- not the observed values -- among the data to estimate the DDE parameters. As such, the univariate marginals are nuisance parameters, which greatly facilitates estimation. Due to this attractive property, the extended rank likelihood has been used for Bayesian estimation of Gaussian copulas and Gaussian mixture copulas \citep{murray2013bayesian,feldman2022bayesian, feldman2024nonparametric}. We show how to leverage the rank likelihood for deep copula models.

{We also clarify the large-sample target of rank likelihood inference for the DDE copula: for continuous margins, we prove quotient-space posterior consistency under the exact rank likelihood; for discrete or mixed margins, where ties make the extended rank likelihood \citep{hoffRL} a generalized likelihood rather than an exact likelihood, we state posterior concentration under an explicit extended-rank contrast condition (Section \ref{subsec:rl-identifiability}).} We also confirm excellent finite sample behavior in simulation (Section \ref{sec:sim}). 

Next, we construct an efficient expectation-maximization algorithm for \emph{maximum a posteriori} (MAP) estimation of the DDE parameters under the rank likelihood (Section \ref{sec:spec_est}). The algorithm conveniently accommodates the rank likelihood, as an intractable E-step is replaced by a simulation step for all latent variables \citep{booth1999maximizing, li2025sparse}. The stochastic E-step also further simplifies the M-step, enabling reasonable computation times for large networks and data sets. We also provide novel initialization strategies that facilitate rapid convergence (Section \ref{sec:init}).

Finally, we demonstrate how state-of-the-art Bayesian priors for rank-selection in Gaussian factor models \citep{legramanti2020bayesian} may be extended to automate layer-specific width selection for the DDE copula architecture. Our formulation enables information sharing between layers and provides structured sparsity that improves interpretability. The ability to estimate layer-specific widths under the DDE copula provides analysts with an additional tool to discover hierarchical latent structures in their data.

The DDE copula is directly motivated by recent work on Bayesian Pyramids \citep{gu2023bayesian} and Deep Discrete Encoders (Figure \ref{fig:DDE_left}; \cite{lee2026dde}), two deep and statistically identifiable generative models for multivariate data. However, the distinguishing features of the DDE copula are motivated by two primary limitations of these preceding models. First, the Deep Discrete Encoder features parametric models for each study variable at the observed data layer of the architecture, while the Bayesian Pyramid is only compatible with multivariate categorical data. The DDE copula is broadly compatible with mixed discrete and continuous variables \emph{without} any marginal distributional assumptions. Second, both models require pre-specification of at least some of the unknown layer widths. 
We instead treat the dimensions of the DDE network as unknown model components and
infer each layer-specific width within our MAP estimation algorithm.



More broadly, the DDE copula sits at the intersection of identifiable deep generative
models and semiparametric copula models.
Classical deep latent-variable architectures, including deep belief networks \citep{hinton2009deep}, deep Boltzmann machines \citep{salakhutdinov2009deep}, and variational autoencoders \citep{kingma2014autoencoding}, provide flexible representation learning but are typically not identifiable without additional structural assumptions \citep{ivae} and are difficult to interpret. In contrast, the recent DDE and Bayesian
Pyramid frameworks use sparse discrete latent layers to obtain interpretable hierarchical
representations in parametric models \citep{gu2023bayesian,lee2026dde}. On the other hand, semiparametric Gaussian copula
models accommodate mixed margins through rank likelihoods
\citep{hoffRL,murray2013bayesian,feldman2024nonparametric}, but typically rely on
shallow continuous latent factors. The DDE copula combines these strengths by retaining
a statistically identifiable deep discrete latent hierarchy while allowing arbitrary marginal distributions, and further addresses the crucial question of automatically estimating the latent layer widths in a statistically principled manner.

This paper is organized as follows: In Section \ref{sec:DDE}, we review the Deep Discrete Encoder of \cite{lee2026dde}. In Section \ref{sec:DDEcop}, we introduce the DDE copula, demonstrate how to leverage the rank likelihood for estimation, and establish key theoretical properties including identifiability and {rank-likelihood large-sample guarantees}. Section \ref{sec:spec_est} includes details on Bayesian specification of the DDE copula, as well as estimation and initialization algorithms. We provide an extensive simulation study in Section \ref{sec:sim}, and conclude with an analysis of survey responses on human personality in Section \ref{sec:realdat}. Concluding thoughts and ideas for future research are provided in Section \ref{sec:disc}. The Supplementary Material contains all proofs of the theoretical results and additional numerical results. Codes implementing the proposed approach are available \url{https://github.com/jfeldman396/DDE-Copula/tree/main}.

\section{Deep Discrete Encoders}\label{sec:DDE}

We first describe the Deep Discrete Encoder \citep[DDE,][]{lee2026dde}, which provides the basis for our proposed innovations. A $D$-layer DDE forms a directed graphical model with a pyramid-like shape. At the shallowest and widest layer ($d=0$) are the $J$-dimensional observed data $\boldsymbol{Y} = (Y_{1}, \dots, Y_{J})$. Each subsequent layer $d \in \{1, \dots, D\}$ is governed by $K^{(d)}$ binary latent variables $\boldsymbol A^{(d)} = (A^{(d)}_{1}, \dots, A^{(d)}_{K^{(d)}}) \in \{0,1\}^{K^{(d)}}$ with $K^{D}< \dots < K^{1} < J$. For identifiability, the dimension of each layer $d$ is constrained to be $\lfloor K^{(d-1)}/3\rfloor$, with $K^{(1)} = \lfloor J /3 \rfloor$ \citep{gu2023bayesian, lee2026dde}. In Figure \ref{fig:DDE_left}, we provide an illustration of this architecture.

To describe the data-generating model under a $D$-dimensional DDE, the top layer latent variables are assumed to be independent Bernoulli random variables with parameter $\boldsymbol \pi = (\pi_{1}, \dots, \pi_{K^{(D)}})$. Then, latent variables in each subsequent middle layer are conditionally independent given the binary variables in the previous layer, modeled with additive logistic regressions. That is, conditional on any $d$-th level binary representation $\boldsymbol \alpha^{(d)} \in \{0,1\}^{K^{(d)}}$, the joint probability for any $\boldsymbol \alpha^{(d-1)} \in \{0,1\}^{K^{(d-1)}}$ factors as
\begin{align}\label{recursmiddle}
p(\boldsymbol A^{(d-1)} = \boldsymbol \alpha^{(d-1)} \mid \boldsymbol \alpha^{(d)}) &= \prod_{k=1}^{K^{(d-1)}} p(A^{(d-1)}_{k} = \alpha^{(d-1)}_{k}\mid \boldsymbol \alpha^{(d)})\\
&= \prod_{k=1}^{K^{(d-1)}} g_{\text{logistic}}(\beta^{(d)}_{k,0}+ \sum_{\ell =1}^{K^{(d)}} \beta^{(d)}_{k,l} \alpha^{(d)}_{l}\label{recurslogistic} )
\end{align}
Here, $g_{\text{logistic}}$ is the logistic link function $g(x) = 1/1+e^{-x}$. Modifications are available for non-linear and non-additive relationships between the layers \citep{lee2024grcdm, liu2025exploratory}.
This formulation implies a set of layer-specific weight matrices $\boldsymbol B^{(d)} = \{\beta_{jk}\}, \  \forall d \in \{1, \dots, D\}$, where the $d$-th layer-specific dimension is $K^{(d-1)} \times K^{(d)}$.

At the observed data layer, the likelihood is
\begin{equation}\label{outcome}
    p(\boldsymbol Y \mid \boldsymbol \alpha^{(1)}) = \prod_{j=1}^{J} p_{j}\{g_{j}(Y_{j};\beta^{(1)}_{j,0}  + \sum_{k =1}^{K^{(1)}} \beta^{(1)}_{j,k} \alpha^{(1)}_{k}, \gamma_{j})\}
\end{equation}
where $p_{j}$ is the probability density/mass function of a user-specified parametric family corresponding to the scale of $Y_{j}$ (e.g., count, continuous, binary), $\gamma_{j}$ is a dispersion parameter when required, and $g_{j}$ is the appropriate link function. For example, for count-valued $Y_{j}$, the user may specify $p_{j}$ to be the Poisson probability mass function with link function $g_{j}(x) = e^{x}$ and no dispersion parameter.

From the construction \eqref{recursmiddle}--\eqref{outcome}, the joint probability of the observed and latent variables is 
\begin{equation} \label{eq::joint}
    p(\boldsymbol Y, \{\boldsymbol A^{(d)}\}_{d \in [D]}) = p(\boldsymbol Y \mid \boldsymbol A^{(1)})\prod_{d=2}^{D}p(\boldsymbol A^{(d-1)} \mid \boldsymbol A^{(d)})p(\boldsymbol A^{(D)})
\end{equation}
This distribution is parameterized by the set $\boldsymbol{\theta} = (\boldsymbol B, \boldsymbol \pi, \boldsymbol \gamma)$, where  $\boldsymbol B= \{\boldsymbol B^{(d)}\}_{d=1}^{D}$ are the layer-specific weight matrices and $\boldsymbol \gamma = (\gamma_{1},\dots, \gamma_{J})$. Marginalization over the latent variables in \eqref{eq::joint} reveals that the distribution of \textcolor{black}{$\boldsymbol Y$ is a mixture with $ 2^{\sum_{d=1}^{D} K^{(d)}}$  components}. This yields a highly expressive model capable of capturing hierarchical dependence in increasingly complex and high-dimensional data, including text, images, and multi-modal data sources \citep{lee2026dde}. A primary statistical advantage lies in the observation that, despite the saturated marginal model, the DDE requires estimation of only $J + K^{(D)} + \sum_{d=2}^{D} K^{(d)} \times K^{(d-1)}$ parameters, which comprises the exponential family dispersion parameters $\boldsymbol \gamma$, the top-level Bernoulli probabilities $\boldsymbol \pi$, and the layer-specific weight matrices $\boldsymbol B$, respectively.

In practice, the specification of a DDE requires the user to fix the widths of the layers given the dimension of $
\boldsymbol Y$, along with the parametric family for each marginal model $p_{j}$ \citep{lee2026dde}. For the former, the authors proposed several ad-hoc approaches based on score-based metrics and layer-specific spectral estimators. For the latter, depending on the structure of the data, the model is amenable to specification of different exponential families, including conditionally Normal, Poisson, and Bernoulli distributions. With these choices, the authors developed a stochastic approximation expectation-maximization algorithm \citep[SAEM,][]{delyon1999convergence} to infer $\boldsymbol \theta$. Next, we propose methods that enable \emph{learning} of the DDE layer widths, in addition to a \emph{relaxation} of the parametric marginal assumption.

\section{The DDE Copula and Its Theoretical Properties}\label{sec:DDEcop}

The reliance of the DDE on parametric marginal models for each $Y_j$ limits its applicability. For example, in survey data with ordinal responses (Section \ref{sec:realdat}), each $Y_j$ has bounded, discrete support, making it difficult to specify a suitable parametric model $p_j$ in \eqref{outcome}. As demonstrated in our simulations (Section \ref{sec:sim}), misspecification of marginal features such as boundedness, skewness, or multimodality can degrade estimation of the latent dependence structure through $\boldsymbol B$.

Copulas provide a natural remedy. By Sklar's theorem \citep{sklar1959fonctions}, any joint distribution $\boldsymbol F$ can be decomposed into marginal distribution functions $\{F_{Y_j}\}_{j=1}^J$ and a copula $\boldsymbol C$ parameterized by $\boldsymbol \Theta$,
$\boldsymbol F(y_1,\dots,y_J)
=
\boldsymbol C_{\boldsymbol{\Theta}}\{F_{Y_1}(y_1),\dots,F_{Y_J}(y_J)\}$.
We construct a DDE copula by combining the hierarchical dependence structure of a $D$-layer DDE with arbitrary marginals. This yields a flexible model capable of handling data with diverse marginal features, while preserving interpretable dependence through the DDE parameters.

We introduce a latent Gaussian layer $\boldsymbol Z$ between $\boldsymbol A^{(1)}$ and $\boldsymbol Y$. Fixing the middle-layer structure \eqref{recursmiddle}--\eqref{recurslogistic}, we define
\begin{align}
p(\boldsymbol Z \mid \boldsymbol \alpha^{(1)})
&=
\prod_{j=1}^{J}
p_j\{g_j(Z_j; \beta^{(1)}_{j,0} + \sum_{k=1}^{K^{(1)}} \beta^{(1)}_{j,k} \alpha_k^{(1)}, \gamma_j)\}, \label{condZ} \\
Y_j &= F_{Y_j}^{-1}\{F_{Z_j}(Z_j)\}, \quad j=1,\dots,J. \label{eq::PIT}
\end{align}
The transformation \eqref{eq::PIT} induces a copula in which dependence is governed by the DDE on $\boldsymbol Z$, while the marginals $\{F_{Y_j}\}$ remain unrestricted. The DDE copula architecture is visualized in Figure \ref{fig:DDE_right}.
For computational convenience, we take $p_j$ to be Gaussian and $g_j$ the identity,
$p(\boldsymbol Z \mid \boldsymbol \alpha^{(1)})
=
\prod_{j=1}^{J}
\phi\!\left(Z_j; \beta^{(1)}_{j,0} + \sum_{k=1}^{K^{(1)}} \beta_{j,k} \alpha_k^{(1)}, \gamma_j\right)$,
which yields substantial computational gains while retaining flexible marginal modeling.


Marginally, $\boldsymbol Z$ follows a Gaussian mixture induced by $\boldsymbol A$. The DDE copula thus resembles a Gaussian mixture copula \citep{feldman2024nonparametric}, but achieves substantially greater modeling capacity and statistical parsimony. For example, with $J=50$, $K^{(1)}=16$, and $K^{(2)}=5$, the implied mixture has \textcolor{black}{$2^{K^{(1)} + K^{(2)}}=2^{21}$} components, yet requires only $50 + 5 + 50\times16 + 16\times5$ parameters. This feature enables the DDE copula to be estimated on modestly-sized data sets.

\subsection{Identifiability Theory of the Canonical DDE Copula}
\label{subsec:copuladde-identifiability}
We next establish identifiability of the dependence parameters in the DDE copula. 
The result is stated for the copula law induced by the latent Gaussian vector \(Z\), 
because the univariate marginal distributions of the observed variables \(Y_1,\ldots,Y_J\) 
are left unrestricted and are therefore nuisance parameters.

Recall that the DDE copula introduces a latent Gaussian layer \(\boldsymbol Z=(Z_1,\ldots,Z_J)\)
between the shallowest binary latent layer \(\boldsymbol A^{(1)}\) and the observed data \(\boldsymbol Y\). 
For \(\boldsymbol \alpha\in\{0,1\}^{K^{(1)}}\), write
\[
    Z_j\mid \boldsymbol A^{(1)}=\boldsymbol \alpha
    \sim 
    N(\mu_{j\boldsymbol \alpha},\gamma_j),
    \qquad
    \mu_{j\boldsymbol \alpha}
    =
    \beta^{(1)}_{j0}
    +
    \sum_{k=1}^{K^{(1)}}\beta^{(1)}_{jk}\alpha_k,
    \qquad
    \gamma_j>0 .
\]
The observed variable is generated by the monotone marginal transformation
$Y_j
    =
    F_{Y_j}^{-1}\{F_{Z_j}(Z_j)\}$, $j=1,\ldots,J,$
where \(F_{Y_j}\) is unrestricted and \(F_{Z_j}\) is the marginal distribution function of
\(Z_j\) under the DDE model. Let
$U_j = F_{Z_j}(Z_j)$, $j=1,\ldots,J.$
Then \(\boldsymbol U=(U_1,\ldots,U_J)\) has uniform margins, and its joint distribution is the
copula induced by the latent DDE model. We denote this copula by \(C_{\boldsymbol{\Theta}}\), where
\[
    \boldsymbol{\Theta} =
    \Bigl(
       \boldsymbol \pi,~\{\boldsymbol B^{(d)}\}_{d=1}^D,~\boldsymbol \gamma,~\{\boldsymbol G^{(d)}\}_{d=1}^D
    \Bigr)
\]
collects the DDE parameters. Here \(\boldsymbol \pi=(\pi_1,\ldots,\pi_{K^{(D)}})\) are the independent
Bernoulli probabilities in the top latent layer, \(\boldsymbol B^{(d)}\) is the coefficient matrix between
layers \(d\) and \(d-1\), \(\boldsymbol \gamma=(\gamma_1,\ldots,\gamma_J)\), and
$g^{(d)}_{\ell k}=1\{\beta^{(d)}_{\ell k}\ne 0\}$.
defines the graphical matrix \(\boldsymbol G^{(d)}\). We use the convention \(K^{(0)}=J\), so that rows
of \(\boldsymbol B^{(1)}\) correspond to \(Z_1,\ldots,Z_J\), whereas rows of \(\boldsymbol B^{(d)}\), \(d\ge2\), correspond
to the variables in \(\boldsymbol A^{(d-1)}\).

\paragraph*{Canonical parameterization.}
The copula is invariant under strictly increasing transformations of each coordinate \(Z_j\).
Within the Gaussian first layer, positive affine transformations
\(Z_j\mapsto a_j Z_j+b_j\), \(a_j>0\), preserve the copula while changing
\((\beta^{(1)}_{j0},\beta^{(1)}_{j1},\ldots,\beta^{(1)}_{jK^{(1)}},\gamma_j)\).
We therefore impose the canonical normalization
\begin{equation}
    E_{\boldsymbol{\Theta}}(Z_j)=0,
    \qquad
    \operatorname{Var}_{\boldsymbol{\Theta}}(Z_j)=1,
    \qquad j=1,\ldots,J .
    \label{eq:canonical-normalization}
\end{equation}
Equivalently, if
$\eta^{(1)}_{\boldsymbol{\alpha}}
    =
    P_{\boldsymbol{\Theta}}(\boldsymbol A^{(1)}={\boldsymbol{\alpha}}),$ ${\boldsymbol{\alpha}}\in\{0,1\}^{K^{(1)}},$
then
\begin{equation}
    \sum_{{\boldsymbol{\alpha}}}\eta^{(1)}_{\boldsymbol{\alpha}} \mu_{j{\boldsymbol{\alpha}}}=0,
    \qquad
    \gamma_j+
    \sum_{{\boldsymbol{\alpha}}}\eta^{(1)}_{\boldsymbol{\alpha}} \mu_{j{\boldsymbol{\alpha}}}^2
    =
    1,
    \qquad j=1,\ldots,J .
    \label{eq:canonical-moment-equations}
\end{equation}
The first equation is \(E(Z_j)=0\); the second follows from \(E(Z_j^2)=E[\gamma_j+\mu_{j,A^{(1)}}^2]=1\),
which equals \(\operatorname{Var}(Z_j)=1\) precisely because the mean is zero.

As in latent-variable models generally, the latent coordinates within each layer are only
identifiable up to label permutations. We say that two parameters \(\boldsymbol{\Theta}\) and \(\widetilde{\boldsymbol{\Theta}}\)
are equivalent, written \(\boldsymbol{\Theta}\sim_K\widetilde{\boldsymbol{\Theta}}\), if there exist permutations
\(\sigma^{(d)}\in S_{K^{(d)}}\), \(d=1,\ldots,D\), such that all latent-layer coordinates,
columns and rows of the corresponding coefficient matrices, graphical matrices, and top-layer
probabilities are transformed according to these permutations, while the observed coordinates
\(j=1,\ldots,J\) are fixed.

\begin{assumption}[Known widths and positivity]
\label{ass:known-widths-positivity}
The number of latent layers \(D\) and the layer widths
\(K=(K^{(1)},\ldots,K^{(D)})\) are known. Moreover,
$\pi_k\in(0,1)$, $k=1,\ldots,K^{(D)}$,
all coefficients in \(\boldsymbol B^{(d)}\) are finite, and \(\gamma_j>0\) for every \(j=1,\ldots,J\).
Consequently, every latent configuration in every layer has positive probability.
\end{assumption}

\begin{assumption}[Faithfulness and orientation]
\label{ass:faithfulness-orientation}
For every layer \(d=1,\ldots,D\),
$g_{\ell k}^{(d)}=1$ if and only if
    $\beta_{\ell k}^{(d)}\ne0$.
No graphical matrix \(G^{(d)}\) has an all-zero column. To remove the trivial ambiguity
created by replacing a binary latent variable \(A_k^{(d)}\) by \(1-A_k^{(d)}\), we impose
the orientation condition
\begin{equation}
    \sum_{\ell=1}^{K^{(d-1)}} \beta_{\ell k}^{(d)} >0,
    \qquad
    k=1,\ldots,K^{(d)},
    \qquad
    d=1,\ldots,D .
    \label{eq:orientation-condition}
\end{equation}
\end{assumption}

\begin{assumption}[Two pure children per latent plus separation]
\label{ass:pure-separation}
For each layer \(d=1,\ldots,D\) and each latent variable \(A_k^{(d)}\), there exist
two distinct pure children
$r^{(d)}_{k,1},\ r^{(d)}_{k,2}
    \in
    \{1,\ldots,K^{(d-1)}\},$
such that
$g^{(d)}_{r^{(d)}_{k,a},k}=1$,
    $g^{(d)}_{r^{(d)}_{k,a},\ell}=0$
    for all $\ell\ne k$, $a=1,2$.
The \(2K^{(d)}\) pure-child indices are distinct. In addition, letting
$\mathcal P_d
    =
    \{r^{(d)}_{k,a}: k=1,\ldots,K^{(d)},\ a=1,2\},$
for every pair \({\boldsymbol{\alpha}}\ne{\boldsymbol{\alpha}}'\) in \(\{0,1\}^{K^{(d)}}\), there exists
    $r\in \{1,\ldots,K^{(d-1)}\}\setminus\mathcal P_d$
such that
\begin{equation}
    \sum_{k=1}^{K^{(d)}}\beta^{(d)}_{rk}(\alpha_k-\alpha_k')\ne0 .
    \label{eq:separation-condition}
\end{equation}
\end{assumption}

\begin{remark}[A simpler sufficient condition]
\label{rem:three-pure-children}
Assumption~\ref{ass:pure-separation} is implied by the more transparent condition that
each latent variable in every layer has three distinct pure children with nonzero coefficients.
Indeed, if \({\boldsymbol{\alpha}}\ne{\boldsymbol{\alpha}}'\), choose \(k\) such that \(\alpha_k\ne\alpha'_k\); the third pure
child of \(A_k^{(d)}\), not used in the two pure-child groups, satisfies
$\sum_{\ell}\beta^{(d)}_{r\ell}(\alpha_\ell-\alpha_\ell')
    =
    \beta^{(d)}_{rk}(\alpha_k-\alpha_k')
    \ne0$.
This matches the three-pure-children condition stated informally in \citet{lee2026dde}
(their Conditions~A and~B together are equivalent to Assumption~\ref{ass:pure-separation}).
\end{remark}

\begin{theorem}[Strict identifiability of the canonical DDE copula]
\label{thm:strict-identifiability-copuladde}
Suppose Assumptions~\ref{ass:known-widths-positivity}--\ref{ass:pure-separation} hold,
and suppose the canonical normalization \eqref{eq:canonical-normalization} is imposed.
If two canonical DDE-copula parameters \(\boldsymbol{\Theta}\) and \(\widetilde{\boldsymbol{\Theta}}\) induce the same
copula law,
    $C_{\boldsymbol{\Theta}} = C_{\widetilde{\boldsymbol{\Theta}}},$
then $\boldsymbol{\Theta}\sim_K \widetilde{\boldsymbol{\Theta}}$.
Thus the canonical DDE-copula parameter is identifiable from the copula distribution of
\(\boldsymbol Y\), up to permutation of latent variables within each layer.
\end{theorem}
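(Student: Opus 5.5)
The plan is to reduce the copula statement to an identifiability statement about the latent Gaussian vector $\boldsymbol Z$, and then peel off the layers from the bottom up.

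\emph{Step 1: recover the law of $\boldsymbol Z$.} Under the canonical normalization \eqref{eq:canonical-normalization}, each $Z_j$ is a finite Gaussian mixture with mean $0$ and variance $1$. Its distribution function $F_{Z_j}$ is continuous and strictly increasing, and it is determined by $\boldsymbol{\Theta}$. The copula $C_{\boldsymbol{\Theta}}$ is the law of $\boldsymbol U=(F_{Z_1}(Z_1),\ldots,F_{Z_J}(Z_J))$. So equality $C_{\boldsymbol{\Theta}}=C_{\widetilde{\boldsymbol{\Theta}}}$ means there are strictly increasing maps $h_j=\widetilde F_{Z_j}^{-1}\circ F_{Z_j}$ with
\[
(h_1(Z_1),\ldots,h_J(Z_J))\overset{d}{=}\widetilde{\boldsymbol Z}.
\]

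The first task is to show each $h_j$ is affine. I would use the mixture structure. Conditional on $Z_{j'}$, $j'\ne j$, lying in small sets, the law of $Z_j$ is a Gaussian mixture whose weights can be tilted. Alternatively, a monotone map taking one finite Gaussian mixture to another, together with compatible joint structure, must be affine; tail analysis supports this, since $h_j$ must map Gaussian tails of variance $\gamma_j$ to tails of variance $\tilde\gamma_j$. The simplest route uses a pure child pair of the same parent. Conditional on the parent configuration, $Z_{r_{k,1}}$ is exactly Gaussian. Varying $\boldsymbol A^{(1)}$ through conditioning on the other pure children shows that $h_j$ maps a family of Gaussians to Gaussians, which forces $h_j$ to be affine.

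Once the $h_j$ are affine, the normalization \eqref{eq:canonical-normalization} forces $a_j=1$ and $b_j=0$. Hence $\boldsymbol Z\overset{d}{=}\widetilde{\boldsymbol Z}$. I expect this step to be the main obstacle. The affine-rigidity argument must be made rigorous, for instance via characteristic functions of mixtures and the uniqueness of finite Gaussian mixtures, applied to the bivariate law of a pure-child pair.

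\emph{Step 2: identify the first layer.} The law of $\boldsymbol Z$ is a finite mixture of product Gaussians indexed by $\boldsymbol\alpha\in\{0,1\}^{K^{(1)}}$. Identifiability of finite mixtures of multivariate Gaussians (Teicher/Yakowitz--Spragins) recovers the mixing weights $\eta^{(1)}_{\boldsymbol\alpha}$ and the distinct component means and variances. The separation condition \eqref{eq:separation-condition} guarantees that distinct $\boldsymbol\alpha$ give distinct mean vectors, so there are exactly $2^{K^{(1)}}$ components and no merging. The next task is to attach binary labels to the components. The pure children $r_{k,1},r_{k,2}$ have means depending only on $\alpha_k$, taking two values. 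This partitions the components by each coordinate, up to a relabeling $\sigma^{(1)}$ and a flip of $\alpha_k$; the orientation condition \eqref{eq:orientation-condition} removes the flip. Then $\beta^{(1)}_{j0}$ and $\beta^{(1)}_{jk}$ follow by solving the linear system for $\mu_{j\boldsymbol\alpha}$ over all $\boldsymbol\alpha$, which is a full-rank design. This also gives $\gamma_j$ and $\boldsymbol G^{(1)}$ by faithfulness.

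\emph{Step 3: induct on layers.} We now know the law $\eta^{(1)}$ of $\boldsymbol A^{(1)}$, a binary vector generated by a $(D-1)$-layer logistic DDE with parameters $\boldsymbol B^{(2)},\ldots$. I would apply the argument of \citet{lee2026dde} for binary-response layers. Assumption~\ref{ass:pure-separation} at layer $2$ makes the law of $\boldsymbol A^{(1)}$ a finite product-Bernoulli mixture. Its identifiability follows from Kruskal-type tensor decomposition, using the two pure-child groups plus the remaining rows as three conditionally independent blocks. The separation condition ensures that the third block has distinct conditional distributions. This recovers $\eta^{(2)}$ and the conditional Bernoulli probabilities up to $\sigma^{(2)}$. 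Inverting the logit and using pure children plus orientation then gives $\boldsymbol B^{(2)}$ and $\boldsymbol G^{(2)}$. Iterating to layer $D$ recovers $\boldsymbol\pi$ as the product marginals. Collecting the permutations yields $\boldsymbol{\Theta}\sim_K\widetilde{\boldsymbol{\Theta}}$.
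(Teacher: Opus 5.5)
\paragraph{Gap in Step 1.} Your Step 3 matches the paper's layerwise Kruskal argument, and your Step 2 would work once the law of $\boldsymbol Z$ is known. The gap is Step 1, on which everything downstream rests. You assert that $h_j=\widetilde F_{Z_j}^{-1}\circ F_{Z_j}$ is affine, but none of the routes you sketch proves it.

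\emph{Mixture-to-mixture maps.} A monotone map between two finite Gaussian mixtures need not be affine: any two continuous, strictly increasing distribution functions $F,G$ are linked by the monotone map $G^{-1}\circ F$. Tail analysis constrains only the asymptotic slope.

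\emph{The ``simplest route''.} This fails because $\boldsymbol A^{(1)}$ is latent. Conditioning on an observable event such as $Z_{r_{k,2}}\in S$ gives $Z_{r_{k,1}}$ the law $w_S N(m_0,\gamma_{r_{k,1}})+(1-w_S)N(m_1,\gamma_{r_{k,1}})$. By positivity $w_S\in(0,1)$, so this is never a single Gaussian. On the $\widetilde{\boldsymbol\Theta}$ side you also do not know that $r_{k,1},r_{k,2}$ are pure children of a common latent, since the pure-child indices of $\widetilde{\boldsymbol\Theta}$ may differ. The image laws are therefore mixtures over unknown configurations, and ``Gaussians to Gaussians'' is not available.

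\emph{Gaussian-mixture uniqueness.} Characteristic-function or Gaussian-mixture uniqueness arguments are circular here. They apply to one random vector, whereas you must compare $\boldsymbol Z$ and $\widetilde{\boldsymbol Z}$ through unknown nonlinear $h_j$. The law of $h_j(Z_j)$ is a $\boldsymbol\Theta$-type Gaussian mixture only if $h_j$ is already affine. A two-way latent-class decomposition is also not unique in general, so an argument based only on one pure-child pair would need a new rigidity lemma that you have not supplied.

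\paragraph{The missing idea.} Do not recover $\boldsymbol Z$ first; work directly on the copula scale, where the unknown monotone maps have already been quotiented out. The $U_j=F_{Z_j}(Z_j)$ are conditionally independent given $\boldsymbol A^{(1)}$. Hence $C_{\boldsymbol\Theta}(\boldsymbol u)=\sum_{\boldsymbol\alpha}\eta^{(1)}_{\boldsymbol\alpha}\prod_j H_{j\boldsymbol\alpha}(u_j)$, where $H_{j\boldsymbol\alpha}(u)=\Phi\{(Q_j(u)-\mu_{j\boldsymbol\alpha})/\sigma_j\}$, $Q_j=F_{Z_j}^{-1}$ and $\sigma_j=\sqrt{\gamma_j}$.

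Apply Kruskal's theorem to discretized $(U_{I_1},U_{I_2},U_{I_3})$, exactly as you do in Step 3. Here $I_1,I_2$ are the two pure-child groups, which give full-rank $2^{K^{(1)}}\times 2^{K^{(1)}}$ blocks, and $I_3$ is the rest, where separation gives Kruskal rank at least two. This identifies $\boldsymbol\eta^{(1)}$ and all $H_{j\boldsymbol\alpha}$ up to coordinate permutation and flips, without knowing any $F_{Z_j}$.

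Next, $\Phi^{-1}\{H_{j\boldsymbol 0}(u)\}-\Phi^{-1}\{H_{j\boldsymbol\alpha}(u)\}=\sum_k\beta^{(1)}_{jk}\alpha_k/\sigma_j$ is constant in $u$, which identifies the standardized loadings. The normalization then gives $\gamma_j=1/(1+v_j)$, with $v_j=\operatorname{Var}\{\sum_k(\beta^{(1)}_{jk}/\sigma_j)A^{(1)}_k\}$, and the mean constraint gives $\beta^{(1)}_{j0}$.

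Equality $H_{j\boldsymbol\alpha}=\widetilde H_{j\boldsymbol\alpha}$ also yields $\widetilde Q_j(u)=\widetilde\mu_{j\boldsymbol\alpha}+(\widetilde\sigma_j/\sigma_j)\{Q_j(u)-\mu_{j\boldsymbol\alpha}\}$. So your affinity claim is true, but it is a consequence of the three-way tensor step rather than an input to it. Once you have that step, your Yakowitz--Spragins argument in Step 2 is redundant.
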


\subsection{Extended Rank Likelihoods}\label{sec:RL}
Though the DDE copula is identified with arbitrary marginals, estimation remains challenging in part because the marginal distributions $F_{Y_{j}}$ enter the likelihood. Thus, the modeler needs to specify $F_{Y_{j}}$ for each variable, which, in high dimensions, is arduous. In addition, developing scalable estimation algorithms for both sets of parameters -- the DDE dependence structure and the marginal distribution functions -- is increasingly complicated.

We instead separate dependence from marginal modeling using the extended rank likelihood \citep[RL;][]{hoffRL}, which provides an approximation to the full data likelihood under the DDE copula with several advantages. The RL renders $\{F_{Y_j}\}$ as nuisance parameters for estimation of the DDE copula. As a result, this avoids the need to specify each marginal model which improves scalability and extends the DDE copula to accommodate mixed data types \citep{feldman2022bayesian, feldman2024nonparametric}. {Section~\ref{subsec:rl-identifiability} states the corresponding large-sample guarantees for the exact continuous-margin rank likelihood and for the extended rank likelihood viewed as a generalized likelihood with ties.}
Under \eqref{eq::PIT}, $Y_j$ and $Z_j$ are monotone transforms, implying
$
y_{ij} < y_{\ell j} \;\Rightarrow\; z_{ij} < z_{\ell j}.
$
Define the rank-consistent set
\begin{equation}\label{eq:rank}
\mathcal{R}(\boldsymbol Y)
=
\{\boldsymbol Z \in \mathbbm{R}^{n \times J}: y_{ij} < y_{\ell j} \Rightarrow z_{ij} < z_{\ell j}, i = 1, \dots, n, \ j = 1,\dots, J, \}.
\end{equation}
 The likelihood for the DDE copula may be written 
\begin{align}
    &p(\boldsymbol Y, \boldsymbol{A}\mid \boldsymbol \theta, \{F_{Y_{j}}\}_{j=1}^{J}) =     p(\boldsymbol Y, \boldsymbol{A}, \boldsymbol Z \in \mathcal{R}(\boldsymbol Y) \mid \boldsymbol \theta, \{F_{Y_{j}}\}_{j=1}^{J}) \label{eq:equiv}\\
    &= \underbrace{p( \boldsymbol{A},\boldsymbol Z \in \mathcal{R}(\boldsymbol Y) \mid \boldsymbol \theta)}_{\text{extended rank likelihood}}p(\boldsymbol Y \mid \boldsymbol{A},\boldsymbol Z \in \mathcal{R}(\boldsymbol Y),\boldsymbol \theta, \{F_{Y_{j}}\}_{j=1}^{J})\label{eq:decomp}.
\end{align}

The equivalence in \eqref{eq:equiv} is by construction: observing the data $\boldsymbol Y$ implies that the corresponding latent data $\boldsymbol Z$ must fall into the rank-consistent set \eqref{eq:rank}. The first term on the right-hand side of \eqref{eq:decomp} is the extended rank likelihood, which depends only on $\boldsymbol\theta$ and not on $\{F_{Y_j}\}$. Thus, inference for $\boldsymbol\theta$ under the extended rank likelihood $p\{\boldsymbol A, \boldsymbol Z \in \mathcal R(\boldsymbol Y) \mid \boldsymbol\theta\}$ can proceed without specifying marginals.

Intuitively, $\boldsymbol Z$ captures much of the information about the DDE dependence structure. Prior work shows that the RL is sufficient for estimating Gaussian copula parameters \citep{hoffRL} and motivates Bayesian rank-likelihood inference for mixed data \citep{murray2013bayesian}. More recently, \cite{feldman2024nonparametric} studied posterior consistency for rank-likelihood inference in Gaussian mixture copulas. 

\subsection{Rank-likelihood Identifiability and Posterior Consistency}
\label{subsec:rl-identifiability}
Section~\ref{subsec:copuladde-identifiability} establishes identifiability of the
canonical DDE-copula parameter from the population copula law \(C_{\boldsymbol{\Theta}}\).
We now connect this result to rank-likelihood inference. The key distinction is
between continuous margins and margins with ties. With continuous margins, the rank
likelihood is the exact likelihood of the observed rank data, so a Doob martingale argument
can be applied on the quotient parameter space. With discrete or mixed margins, the
extended rank likelihood \citep{hoffRL} remains margin-free, but ties make it a generalized
likelihood rather than the exact likelihood of the observed weak-rank/tie pattern. We
therefore treat the continuous-margin and mixed-margin cases separately.

Let \(\boldsymbol U_i=(U_{i1},\ldots,U_{iJ})\), \(i=1,\ldots,n\), be i.i.d. from
the DDE copula \(C_{\boldsymbol{\Theta}}\), and let
\(Y_{ij}=F_{Y_j}^{-1}(U_{ij})\) for arbitrary nondegenerate margins
\(F=(F_{Y_1},\ldots,F_{Y_J})\). For a realized data array
\(\boldsymbol Y^{(n)}\), define the rank-consistent set on the copula scale by
\[
    \mathcal D_n(\boldsymbol Y^{(n)})
    =
    \Bigl\{
        \boldsymbol u\in(0,1)^{n\times J}:
        y_{ij}<y_{\ell j}\Rightarrow u_{ij}<u_{\ell j},
        \quad
        i,\ell=1,\ldots,n,
        \ j=1,\ldots,J
    \Bigr\}.
    \label{eq:Dn-rank-set}
\]
The marginal extended rank likelihood is
\begin{equation}
        L_n^R(\boldsymbol{\Theta};\boldsymbol Y^{(n)})
    =
    P_{\boldsymbol{\Theta}}\{\boldsymbol U^{(n)}\in
    \mathcal D_n(\boldsymbol Y^{(n)})\}.
    \label{eq:extended-rank-likelihood}
\end{equation}
Because \(F_{Z_j}\) is continuous and strictly increasing under
Assumption~\ref{ass:known-widths-positivity}, this is equivalent to the latent-Gaussian
rank likelihood based on \(\boldsymbol Z^{(n)}\in\mathcal R(\boldsymbol Y^{(n)})\). It
depends on \(\boldsymbol{\Theta}\) only through \(C_{\boldsymbol{\Theta}}\), and not on
the unknown margins.

Let \(\mathcal K=(K^{(1)},\ldots,K^{(D)})\) denote the active layer widths, and let
\(\mathcal T_K\) be the canonical parameter space satisfying the normalization
\eqref{eq:canonical-normalization}. Let
$\Psi_K=\mathcal T_K/\!\sim_K$
be the quotient space under within-layer permutation equivalence. We write
\(\psi=[\boldsymbol{\Theta}]\in\Psi_K\), and equip \(\Psi_K\) with a metric \(d_K\) generating
the quotient topology. Let \(\bar\Pi\) denote the prior induced on \(\Psi_K\). Since
\(L_n^R(\boldsymbol{\Theta};\boldsymbol Y^{(n)})\) is invariant under latent-label
permutations, write \(L_n^R(\psi;\boldsymbol Y^{(n)})\) for its common value on
\(\psi\). For \(A\in\mathcal B(\Psi_K)\), define the quotient-space rank-likelihood
posterior by
\begin{equation}
    \bar\Pi_n^R(A\mid \boldsymbol Y^{(n)})
    =
    \frac{
        \int_A L_n^R(\psi;\boldsymbol Y^{(n)})\,\bar\Pi(d\psi)
    }{
        \int_{\Psi_K}L_n^R(\psi;\boldsymbol Y^{(n)})\,\bar\Pi(d\psi)
    }.
    \label{eq:quotient-rank-posterior}
\end{equation}
The denominator is positive because \(\mathcal D_n(\boldsymbol Y^{(n)})\) is nonempty and
canonical DDE copulas in \(\mathcal T_K\) have positive densities on \((0,1)^J\).

Let
$\mathscr R_n
    =
    \sigma\!\bigl(
        1\{Y_{ij}<Y_{\ell j}\},
        1\{Y_{ij}=Y_{\ell j}\}:
        i,\ell=1,\ldots,n,
        \ j=1,\ldots,J
    \bigr)$
be the sigma-field generated by the coordinatewise weak ranks and ties, and define
$\mathscr R_\infty=\sigma\!\left(\bigcup_{n\ge1}\mathscr R_n\right).$
When all margins are continuous, ties occur with probability zero, and \(\mathscr R_n\)
coincides almost surely with the ordinary coordinatewise rank sigma-field.

\begin{assumption}[Compact canonical parameter space and prior support]
\label{ass:rank-compact-prior}
The active layer widths $\mathcal K$ are fixed. The canonical parameter space \(\mathcal T_K\) is a
compact subset of the normalized parameter space, and every
\(\boldsymbol{\Theta}\in\mathcal T_K\) satisfies
Assumptions~\ref{ass:known-widths-positivity}--\ref{ass:pure-separation}. The quotient
space \(\Psi_K=\mathcal T_K/\!\sim_K\) is equipped with its Borel \(\sigma\)-field. The prior
\(\Pi\) is supported on \(\mathcal T_K\) and induces a prior \(\bar\Pi\) that gives positive
mass to every nonempty open subset of \(\Psi_K\).
\end{assumption}
{The compactness condition is a theoretical truncation to a large nondegenerate subset of the
canonical parameter space; the computational prior in Section~\ref{sec:spec_est} is used as a
practical approximation to this population target.}

\subsubsection{Continuous Margins}

With continuous margins, ties occur with probability zero,
\(\mathcal D_n(\boldsymbol Y^{(n)})\) is the observed-rank cell, and
\(L_n^R\) is the exact rank-data likelihood.


\begin{theorem}[Exact rank-likelihood posterior consistency for continuous margins]
\label{thm:continuous-rank-posterior-consistency}
Suppose Assumption~\ref{ass:rank-compact-prior} holds, each \(F_{Y_j}\) is continuous,
and the strict copula-identifiability conditions of
Theorem~\ref{thm:strict-identifiability-copuladde} hold on \(\mathcal T_K\). Then:
\begin{enumerate}
\item[(i)] For every \(\psi_0=[\boldsymbol{\Theta}_0]\in\Psi_K\), the infinite rank
experiment identifies \(\psi_0\): there exists an \(\mathscr R_\infty\)-measurable map
\(h:\mathscr R_\infty\to\Psi_K\) such that
$h(\mathscr R_\infty)=\psi_0$,
    $P_{\psi_0}^{\infty}$-almost surely.

\item[(ii)] For \(\bar\Pi\)-almost every \(\psi_0\in\Psi_K\), and for every
\(\epsilon>0\), the following holds \(P_{\psi_0}^{\infty}\)-almost surely:
$\bar\Pi_n^R
    \left(
        \psi\in\Psi_K:
        d_K(\psi,\psi_0)<\epsilon
        \,\middle|\,
        \mathscr R_n
    \right)
    \longrightarrow 1$.
\end{enumerate}
\end{theorem}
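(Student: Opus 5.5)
The proof has two parts. Part (i) shows that the infinite rank experiment recovers the copula, and through Theorem~\ref{thm:strict-identifiability-copuladde} the parameter. Part (ii) then follows from Doob's posterior consistency theorem on the quotient space.

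For part (i), fix $\psi_0=[\boldsymbol\Theta_0]$ and use continuity of the margins. Each $F_{Y_j}$ is nondecreasing, $Y_{ij}=F_{Y_j}^{-1}(U_{ij})$, and ties occur with probability zero. Together these give $1\{Y_{ij}<Y_{\ell j}\}=1\{U_{ij}<U_{\ell j}\}$ almost surely, so $\mathscr R_n$ coincides a.s.\ with the rank $\sigma$-field of $\boldsymbol U^{(n)}$.

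Next, construct normalized ranks $R_{ij}^{(n)}=n^{-1}\sum_{\ell\le n}1\{U_{\ell j}\le U_{ij}\}$, which are $\mathscr R_n$-measurable. Form the empirical copula
\[
\hat C_n(\boldsymbol u)=n^{-1}\sum_{i\le n}\prod_j 1\{R_{ij}^{(n)}\le u_j\}.
\]
Since the marginal empirical CDFs of $U_{\cdot j}$ converge uniformly to the identity (Glivenko--Cantelli), a standard argument gives $\sup_{\boldsymbol u}|\hat C_n(\boldsymbol u)-C_{\boldsymbol\Theta_0}(\boldsymbol u)|\to0$ $P^\infty$-a.s. The argument sandwiches $\{R_{ij}^{(n)}\le u_j\}$ between $\{U_{ij}\le u_j\pm\delta\}$ and uses the multivariate Glivenko--Cantelli theorem together with continuity of $C_{\boldsymbol\Theta_0}$.

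Hence $C^\ast:=\limsup_n\hat C_n$ evaluated on rational $\boldsymbol u$, extended by right-continuity, is $\mathscr R_\infty$-measurable and equals $C_{\boldsymbol\Theta_0}$ a.s. Define $h=\Phi^{-1}(C^\ast)$ on the event that $C^\ast$ lies in $\{C_\psi:\psi\in\Psi_K\}$, and set $h$ equal to a fixed point otherwise. Theorem~\ref{thm:strict-identifiability-copuladde} shows that $\Phi:\psi\mapsto C_\psi$ is injective on $\Psi_K$.

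The step I expect to need the most care is measurability of $h$. I plan to argue that $\Phi$ is continuous from the compact space $\Psi_K$ into $C([0,1]^J)$ with the sup norm. It is well defined on the quotient, and continuity follows from dominated convergence applied to the finite Gaussian mixture defining $C_{\boldsymbol\Theta}$; the margins $F_{Z_j}$ vary continuously with $\boldsymbol\Theta$, are strictly increasing, and their inverses are continuous. A continuous injection from a compact space into a Hausdorff space is a homeomorphism onto its image, and that image is compact and hence Borel. Therefore $\Phi^{-1}$ is Borel on the image, and $h$ is $\mathscr R_\infty$-measurable. This gives (i).

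For part (ii), I view the model as follows: $\psi\sim\bar\Pi$, and given $\psi$ the rank data $\mathscr R_n$ are generated under $P_\psi$. For continuous margins the law of the ranks depends only on $C_\psi$, and the rank-likelihood posterior \eqref{eq:quotient-rank-posterior} is exactly the Bayesian posterior $\bar\Pi(\cdot\mid\mathscr R_n)$, because $L_n^R$ is the probability of the observed rank cell. The $\sigma$-fields $\mathscr R_n$ increase, so for a Borel set $A$ the sequence $\bar\Pi(A\mid\mathscr R_n)$ is a bounded martingale. By L\'evy's upward theorem it converges a.s.\ under the joint law to $\bar\Pi(A\mid\mathscr R_\infty)$. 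By (i), $\psi=h(\mathscr R_\infty)$ a.s., so $\psi$ is $\mathscr R_\infty$-measurable up to null sets and $\bar\Pi(A\mid\mathscr R_\infty)=1\{\psi\in A\}$.

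Since $\Psi_K$ is a compact metric space, it is Polish with a countable base, so I can apply this simultaneously to a countable collection of balls with rational radii around a countable dense set. The standard Doob argument then yields weak convergence of the posterior to $\delta_{\psi}$ for joint-almost every $(\psi,\text{data})$. Fubini converts this into the statement for $\bar\Pi$-a.e.\ $\psi_0$, $P^\infty_{\psi_0}$-a.s. Applying it to the open ball of radius $\epsilon$ around $\psi_0$ gives (ii).

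The remaining technical point is that the joint law of $(\psi,\text{ranks})$ is well defined. This requires $\psi\mapsto P_\psi(\text{rank cell})$ to be measurable, which holds because it is continuous by the same dominated convergence argument.
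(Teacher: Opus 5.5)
Your proposal is correct and follows essentially the same route as the paper. Both arguments use the exact rank likelihood for continuous margins, almost-sure recovery of $C_{\psi_0}$ from the rank-based empirical copula, and continuity of $\psi\mapsto C_\psi$ together with injectivity from Theorem~\ref{thm:strict-identifiability-copuladde} on the compact quotient $\Psi_K$ to obtain an $\mathscr R_\infty$-measurable recovery map, followed by the Doob martingale and Fubini argument.

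The main variation is how $h$ is constructed. The paper takes the almost-sure limit of $\mathscr R_n$-measurable approximate minimum-distance estimators over a countable dense subset of $\Psi_K$, working in the bounded-Lipschitz metric and using a compactness separation constant. You instead invert $\Phi$ directly, as a homeomorphism onto its compact image, applied to the limiting empirical copula. Both rest on the same continuity, injectivity and compactness ingredients. Your explicit check that $\psi\mapsto P_\psi$ is measurable covers a point the paper leaves implicit.
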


\paragraph*{Interpretation.}
For continuous margins, Theorem~\ref{thm:continuous-rank-posterior-consistency}
has the status of an ordinary posterior consistency result for an exact likelihood. Because
ties occur with probability zero, \(\mathcal D_n(\boldsymbol Y^{(n)})\) is the observed-rank
cell and \(L_n^R\) is the likelihood of the rank data. Thus the posterior is a regular
conditional posterior on the quotient space, and the Doob argument gives consistency for
\(\bar\Pi\)-almost every true equivalence class once the infinite ranks identify \(\psi_0\).

\subsubsection{Discrete or Mixed Margins}
\label{subsubsec:rl-mixed-margins}

When ties occur, the extended rank likelihood in \cite{hoffRL} remains margin-free, but it is no longer
an exact likelihood for the observed weak-rank/tie pattern. Tied observations impose no
strict ordering constraints in \(\mathcal D_n(\boldsymbol Y^{(n)})\), so the extended-rank
event need not have probability equal to the probability of the observed tie pattern. Thus,
with ties, the extended rank likelihood should be treated as a generalized likelihood rather
than as the probability mass function of the observed weak ranks. Concrete examples,
including the \(n=2,J=1\) tied case and a binary-threshold calculation, are given in the Supplement.

For mixed margins, posterior concentration of the extended-rank posterior requires more than
coarsened-rank identifiability. We use a high-level regularity condition, stated formally as
Assumption S.2 on ``extended-rank contrast consistency'' in the Supplementary Material: under the true
pair \((\psi_0,F)\), the normalized extended-rank log likelihood admits a deterministic
large-sample contrast that is uniquely maximized at \(\psi_0\), and convergence to this
contrast is uniform over the compact quotient space \(\Psi_K\). This is the
generalized-posterior analogue of ordinary likelihood separation. We do not verify this
condition for all mixed-margin DDE copulas in this manuscript; verification would require a
separate analysis of high-dimensional order-constrained probabilities and is left for future
work.

\begin{theorem}[Generalized posterior concentration under the extended rank likelihood]
\label{thm:mixed-rank-generalized}
Suppose Assumption~\ref{ass:rank-compact-prior} holds, and suppose Assumption S.2 on
``extended-rank contrast consistency'' in the Supplementary Material holds for the true pair
\((\psi_0,F)\). Then, for every \(\epsilon>0\),
\[
    \bar\Pi_n^R
    \left(
        \psi\in\Psi_K:
        d_K(\psi,\psi_0)<\epsilon
        \,\middle|\,
        \boldsymbol Y^{(n)}
    \right)
    \longrightarrow 1
\]
almost surely under \(P_{\psi_0,F}^{\infty}\).
\end{theorem}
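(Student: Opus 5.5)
The plan is the standard argument for a generalized (Gibbs) posterior whose log-likelihood converges uniformly to a contrast with a unique maximizer. Write $\ell_n(\psi)=n^{-1}\log L_n^R(\psi;\boldsymbol Y^{(n)})$. Assumption S.2 provides a deterministic function $M(\psi)$ on the compact quotient space $\Psi_K$ such that
\[
\sup_{\psi\in\Psi_K}|\ell_n(\psi)-M(\psi)|\to0
\]
almost surely under $P^\infty_{\psi_0,F}$, with $\psi_0$ the unique maximizer of $M$. I will also use that the uniform limit of the $\ell_n$ can be taken continuous in $\psi$, or at least upper semicontinuous. If S.2 does not state this, I will take it as part of that assumption. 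Then, for fixed $\epsilon>0$, compactness of $\Psi_K\setminus B_\epsilon(\psi_0)$ together with unique maximization gives a gap
\[
\delta=M(\psi_0)-\sup_{d_K(\psi,\psi_0)\ge\epsilon}M(\psi)>0 .
\]

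The posterior ratio is handled directly. Write
\[
\bar\Pi^R_n\{d_K(\psi,\psi_0)\ge\epsilon\mid \boldsymbol Y^{(n)}\}=\frac{N_n}{D_n}.
\]
Here $N_n=\int_{B_\epsilon^c}e^{n\ell_n(\psi)}\bar\Pi(d\psi)$, and $D_n$ is the same integral taken over all of $\Psi_K$.

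For the numerator, on the almost-sure event of uniform convergence, for $n$ large we have $\ell_n\le M(\psi_0)-\delta+\delta/4$ on $B_\epsilon^c$. Hence $N_n\le e^{n(M(\psi_0)-3\delta/4)}$, since $\bar\Pi$ is a probability measure.

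For the denominator, continuity of $M$ at $\psi_0$ gives a radius $\eta>0$ with $M(\psi)\ge M(\psi_0)-\delta/4$ on the open ball $B_\eta(\psi_0)$. By Assumption~\ref{ass:rank-compact-prior}, $\bar\Pi(B_\eta(\psi_0))=c>0$. For large $n$, uniform convergence then gives $\ell_n\ge M(\psi_0)-\delta/2$ on this ball. Therefore $D_n\ge c\,e^{n(M(\psi_0)-\delta/2)}$.

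Combining the two bounds, $N_n/D_n\le c^{-1}e^{-n\delta/4}\to0$ almost surely, which is the claim. The positivity of $D_n$, needed for the posterior to be well defined, was already noted after \eqref{eq:quotient-rank-posterior}. Measurability of $\ell_n$ in $\psi$ follows because $L_n^R$ is a finite probability of an event that depends continuously on the copula parameters.

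The main obstacle is the bridge between what S.2 literally states and the two properties the argument uses: continuity of $M$ near $\psi_0$ (for the denominator) and a strictly positive separation gap (for the numerator). If $M$ is only upper semicontinuous, the gap still follows from compactness, since an upper semicontinuous function attains its supremum on the compact set $B_\epsilon^c$ and that supremum is strictly below $M(\psi_0)$. The denominator bound would then be obtained instead from lower semicontinuity of $M$ at $\psi_0$. I expect to adopt whichever form S.2 provides, noting that in the quotient topology $d_K$ these properties are well defined because $\ell_n$ is permutation invariant. No Doob-type argument is needed here, in contrast to Theorem~\ref{thm:continuous-rank-posterior-consistency}, and the resulting statement holds for the specific true pair $(\psi_0,F)$ rather than for $\bar\Pi$-almost every $\psi_0$.
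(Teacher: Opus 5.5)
Your argument is the paper's: a separation gap from compactness and the unique maximizer (Lemma~\ref{lem:supp-erl-contrast-implies-separation}), a prior-positive neighbourhood of $\psi_0$ for the denominator, and the resulting exponential bound on the posterior ratio. One correction is needed. Assumption~\ref{ass:supp-erl-contrast} is stated for the centered contrast, $\sup_{\psi}\bigl|n^{-1}\{\ell_n^R(\psi)-\ell_n^R(\psi_0)\}-\{M_{\psi_0,F}(\psi)-M_{\psi_0,F}(\psi_0)\}\bigr|\to0$, with $M_{\psi_0,F}$ assumed finite and continuous, so your semicontinuity fallback is unnecessary; your uncentered version cannot hold with a finite $M$ once any margin is continuous, because then $L_n^R\le 1/n!$ for every $\psi$, but your bounds go through unchanged after dividing $N_n$ and $D_n$ by $L_n^R(\psi_0;\boldsymbol Y^{(n)})$.
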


The proof is a generalized posterior concentration argument. Uniform convergence
transfers the strict separation of the limiting contrast to the finite-sample extended-rank
log likelihood outside any fixed neighborhood of \(\psi_0\). The posterior numerator outside
the neighborhood is then exponentially smaller than a denominator lower bound obtained by
integrating over a positive-prior-mass set near the contrast maximizer. The full proof is given in the Supplement.

\paragraph*{Interpretation.}
Assumption S.2 is stronger than coarsened-rank identifiability. Injectivity of
\(\psi\mapsto Q_{\psi,F}\) says that infinite weak ranks contain enough information to
distinguish \(\psi\); the contrast condition additionally requires the particular
extended-rank generalized likelihood to exploit this information in a uniformly separating
way. The simulation settings below are favorable for this condition, with rich discrete
margins and identifiable DDE graph structure, and the empirical recovery patterns are
consistent with the predicted concentration behavior.

\section{Bayesian Specification and Estimation}\label{sec:spec_est}

We perform Bayesian inference for $\boldsymbol{\theta}$ by targeting the extended rank likelihood posterior
\begin{equation}
p\{\boldsymbol \theta \mid \boldsymbol Z \in \mathcal{R}(\boldsymbol Y)\}
\propto 
\int 
p\{\boldsymbol{A}, \boldsymbol Z \in \mathcal{R}(\boldsymbol Y) \mid \boldsymbol \theta\}
\, p(\boldsymbol \theta) 
\, d \boldsymbol A. 
\label{eq:RLDDEpost}
\end{equation}

Two key challenges arise. First, we seek to infer the layer-specific widths $\{K^{(d)}\}_{d=1}^{D}$ rather than fixing them a priori, as in \cite{lee2026dde}. To this end, we introduce cumulative shrinkage process (CSP; \cite{legramanti2020bayesian}) priors on the elements of each $\boldsymbol B^{(d)}$, which induce sparsity and enable automatic dimension selection within each layer. While CSP priors have been used to infer shallow-layer widths in related models \citep{gu2023bayesian}, our formulation extends this mechanism to all layers of the DDE. In doing so, we enable information sharing using layer-specific activity indicators.

Second, direct evaluation of \eqref{eq:RLDDEpost} is intractable. The rank likelihood involves the integral
$
p\{\boldsymbol{A}, \boldsymbol Z \in \mathcal{R}(\boldsymbol Y) \mid \boldsymbol \theta\}
=
\int_{\mathcal{R}(\boldsymbol Y)} 
p(\boldsymbol A, \boldsymbol Z \mid \boldsymbol \theta)
\, d\boldsymbol Z$,
while the posterior \eqref{eq:RLDDEpost} also marginalizes over the high-dimensional latent $\boldsymbol A$. Although data augmentation \citep{tanner1987calculation} is a natural approach for MCMC-based approaches, the joint latent space $(\boldsymbol Z, \boldsymbol A)$ is high-dimensional and strongly dependent, making such methods computationally inefficient.

To address this, we develop a stochastic optimization procedure inspired by expectation--maximization \citep[EM,][]{dempster1977maximum} and its recent variants for Bayesian dimension selection in linear regression and Gaussian factor models \citep{rovckova2014emvs, rovckova2016fast, rovckova2018spike, li2025sparse}. The resulting algorithm targets maximum a posteriori (MAP) estimation of \eqref{eq:RLDDEpost}, yielding point estimates of the layer-specific weight matrices along with MAP estimates of the effective dimension in each layer.
\subsection{Prior Elicitation}\label{sec:prior}

To complete the Bayesian specification of the DDE copula under extended rank likelihood, we assume prior independence,
$
p(\boldsymbol \theta) 
=
\prod_{d=1}^{D} p(\boldsymbol B^{(d)})
\prod_{j=1}^{J} p(\gamma_{j})
\prod_{k=1}^{K^{(D)}} p(\pi_{k}),
$
with $\gamma_{j} \sim \mbox{InverseGamma}(a,b)$ and flat priors $\pi_{k} \propto 1$.

We initialize a maximal-width DDE for $\boldsymbol Z$ and induce sparsity through the prior on $\boldsymbol B^{(d)}$, which also enables layer-wise dimension selection. Let $K^{(d)}_{\text{max}}$ denote the initialized width of layer $d$, chosen according to the identifiability conditions in Section~\ref{sec:DDEcop} as $K^{(d)}_{\text{max}} = \lfloor K^{(d-1)}/3 \rfloor$. For example, in a two-layer DDE with $J=50$, we take $K^{(1)}_{\text{max}} = 16$ and $K^{(2)}_{\text{max}} = 5$. The goal is then to shrink redundant dimensions through the prior.

For each element $\beta_{jk}^{(d)}$ of $\boldsymbol B^{(d)}$, we impose independent cumulative shrinkage process (CSP) priors. Specifically, for $j = 1,\dots,K^{(d-1)}_{\text{max}}$ and $k = 1,\dots,K^{(d)}_{\text{max}}$,
\begin{align}
\beta^{(d)}_{jk} 
\sim 
\{1- \mathbbm{1}(c^{(d)}_{k} \leq k)\}\,\phi(\lambda^{(d)}_{0k})
&+ 
\mathbbm{1}(c^{(d)}_{k} \leq k)\,\phi(\lambda^{(d)}_{1}), \label{eq:CSP}\\
p(c^{(d)}_{k} = \ell \mid \omega^{(d)}_\ell) &= \omega^{(d)}_{\ell}, \quad \ell = 1,\dots,K^{(d)}_{\text{max}},\\
\omega_{\ell}^{(d)} = v_{\ell}^{(d)} \prod_{m<\ell}(1-v_{m}^{(d)}), &
\quad v_{m}^{(d)} \sim \mbox{Beta}(1,\alpha^{(d)}). \label{eq:stick}
\end{align}
Here, $\phi(\lambda)$ denotes the Laplace density with rate $\lambda$, differing from the Gaussian formulation in \cite{legramanti2020bayesian}. This choice yields exact zeros under MAP estimation and facilitates dimension selection; See Section \ref{sec:EM}. We assign hyperpriors $\lambda_{0k}^{(d)} \sim \mbox{Gamma}(a^{(d)}, b^{(d)})$ and fix $\lambda_{1}^{(d)} \ll \mathbbm{E}[\lambda_{0k}^{(d)}]$ to distinguish spike and slab components. The parameter $\lambda_{1}^{(d)}$ may be layer-specific; see the supplement for tuning strategies.

Marginalizing over $c_{k}^{(d)}$, the prior for $\beta_{jk}^{(d)}$ is a spike-and-slab mixture with ordered shrinkage: the probability of assignment to the spike increases with $k$. Consequently, the effective layer width is
$K^{(d)*} = \sum_{k=1}^{K^{(d)}_{\text{max}}} \mathbbm{1}(c_{k}^{(d)} > k),$
since $\mathbbm{1}(c_{k}^{(d)} > k)$ indicates whether the $k$th column of $\boldsymbol B^{(d)}$ is drawn from the slab. When $c_{k}^{(d)} \leq k$, the corresponding column is shrunk toward zero and the associated latent unit is redundant. Thus, the CSP prior induces a prior for the unknown layer-specific dimension $K^{(d)^{*}}$ through the spike and slab indicator variables $c_{k}^{(d)}$. Finally, we fix $\alpha^{(d)} = K^{(d)}_{\text{max}}$, motivated by that in the infinite-width limit the expected number of active components satisfies $\sum_{\ell=1}^{\infty} \mathbbm{E}[\mathbbm{1}(c_{k}^{(d)} > k)] = \alpha^{(d)}$ \citep{legramanti2020bayesian}.

\subsection{MAP Estimation via Coordinate Ascent Monte Carlo EM}\label{sec:EM}
 
 We proceed to outline our expectation–maximization (EM; \cite{dempster1977maximum}) algorithm targeting maximum a posteriori (MAP) estimation of \eqref{eq:RLDDEpost}. Under the CSP prior \eqref{eq:CSP}--\eqref{eq:stick}, the parameter set $\boldsymbol \theta$ is augmented to include the layer-specific dimensions $\{K^{(d)*}\}_{d=1}^{D}$.

Let $\boldsymbol \lambda_{0}^{(d)} = \{\lambda^{(d)}_{0k}\}_{k=1}^{K^{(d)}_{\text{max}}}$ and $\boldsymbol v^{(d)} = \{v^{(d)}_{k}\}_{k=1}^{K^{(d)}_{\text{max}}}$. We partition the full parameter set into model parameters $\boldsymbol{\theta} = (\{\boldsymbol B^{(d)}\}_{d=1}^{D}, \{K^{(d)*}\}_{d=1}^{D}, \{\gamma_j\}_{j=1}^{J}, \boldsymbol \pi)$, latent variables $\boldsymbol \theta_{\text{lat}} = (\{\boldsymbol A^{(d)}\}_{d=1}^{D}, \boldsymbol Z)$, and hyperparameters $\boldsymbol \theta_{\text{hyp}} = (\{\boldsymbol v^{(d)}\}_{d=1}^{D}, \{\boldsymbol \lambda_{0}^{(d)}\}_{d=1}^{D})$. Given an initialization $\boldsymbol \theta^{0}$, the EM update at iteration $t+1$ is
\begin{align}
\boldsymbol \theta^{t+1}
&=
\arg\max_{\boldsymbol \theta}
\mathbb{E}_{\boldsymbol \theta_{\text{lat}}, \boldsymbol \theta_{\text{hyp}} \mid \boldsymbol Z \in \mathcal R(\boldsymbol Y), \boldsymbol \theta^t}
\big[\log p\{\boldsymbol \theta, \boldsymbol \theta_{\text{lat}}, \boldsymbol \theta_{\text{hyp}} \mid \boldsymbol Z \in \mathcal R(\boldsymbol Y)\}\big]
\nonumber\\
&=
\arg\max_{\boldsymbol \theta} Q(\boldsymbol \theta \mid \boldsymbol \theta^t).
\label{eq:Qfn_tight_mid2}
\end{align}

The hierarchical structure of the DDE and prior independence across layers yields a layer-wise decomposition of \eqref{eq:Qfn_tight_mid2}
\begin{equation}\label{eq:Qfndecomp}
Q(\boldsymbol \theta \mid \boldsymbol \theta^{t})
=
\sum_{d=1}^{D} Q(\boldsymbol \theta^{(d)} \mid \boldsymbol \theta^{t}).
\end{equation}
This enables layer-wise updates within the optimization routine, reducing the M-step to a collection of independent optimization problems. In particular, for $2 \leq d < D$, each $\boldsymbol B^{(d)}$ is updated via $K^{(d-1)}_{\text{max}}$ independent regression problems, while the first layer is optimized using an expectation–conditional maximization step \citep{meng1993maximum}, alternating updates of $\boldsymbol B^{(1)}$ and $\boldsymbol \gamma$.  At the top layer, $\boldsymbol \theta^{(D)} = \boldsymbol{\pi}$. The effective layer widths $\{K^{(d)*}\}$ are determined as a function of the updated spike indicators $\{c_{k}^{(d)}\}$ (Algorithm \ref{alg:mcem}).

The main computational challenge is evaluating expectations under $p\{\boldsymbol \theta_{\text{lat}}, \boldsymbol \theta_{\text{hyp}} \mid \boldsymbol Z \in \mathcal D(\boldsymbol Y), \boldsymbol \theta^t\}$,
which involves (intractable) integration over the rank-constrained set $\mathcal D(\boldsymbol Y)$ and summation over exponentially many configurations of $\{\boldsymbol A^{(d)}\}$.

We address this using Monte Carlo EM \citep{booth1999maximizing}. At each iteration, we draw $C$ samples $(\boldsymbol \theta_{\text{lat}}^{(c)}, \boldsymbol \theta_{\text{hyp}}^{(c)})$ from the conditional posterior and approximate
$
Q(\boldsymbol \theta \mid \boldsymbol \theta^t)
\approx
\frac{1}{C}
\sum_{c=1}^{C}
\log p\{\boldsymbol \theta, \boldsymbol \theta_{\text{lat}}^{(c)}, \boldsymbol \theta_{\text{hyp}}^{(c)} \mid \boldsymbol Z \in \mathcal R(\boldsymbol Y)\} = \hat{Q}(\boldsymbol \theta \mid \boldsymbol{\theta}^t).
$
Following \cite{lee2026dde}, we set $C=1$ and observe strong empirical performance.

Sampling from 
$p\{\boldsymbol \theta_{\text{lat}}, \boldsymbol \theta_{\text{hyp}} \mid \boldsymbol Z \in \mathcal D(\boldsymbol Y), \boldsymbol \theta^t\}$
is carried out via Gibbs updates for $\boldsymbol Z$, $\{\boldsymbol A^{(d)}\}$, $\{\boldsymbol \lambda_{0}^{(d)}\}$, and $\{\boldsymbol v^{(d)}\}$. The latter two remain conditionally conjugate under the CSP prior and are deferred to the supplement, while we detail the updates for $\boldsymbol Z$ and $\{\boldsymbol A^{(d)}\}$ below.

\paragraph*{Extended Rank Likelihood Re-sampling}
For $\boldsymbol Z \sim p\{\boldsymbol Z \mid \boldsymbol{Z} \in \mathcal{R}(\boldsymbol Y), \boldsymbol \theta^t, \text{--}\}$, the RL implies
\begin{equation}
    p\{\boldsymbol Z_i \mid \boldsymbol Z \in \mathcal{R}(\boldsymbol{Y}), \theta^{(t)},\text{--}\}\sim \prod_{j=1}^{J} N(\beta^{(1)}_{j,0} + \sum_{k=1}^{K^{(1)}_{\text{max}}} \beta_{j,k}^{(1)} \boldsymbol \alpha^{(1)}_{i,k}, \gamma_{j})\mathbbm{1}\{\boldsymbol Z \in \mathcal{R}(\boldsymbol Y)\}\label{eq:RLsamp}.
\end{equation}
That is, each  component of the vector $\boldsymbol Z_i$ is normally distributed subject to the rank consistency between the entire latent data matrix $\boldsymbol Z$ and observed data matrix $\boldsymbol Y$. This implies that each $Z_{ij}$ may be sampled from a truncated normal distribution, with lower and upper bounds determined by the largest $Z_{\ell j}$ such that $Y_{\ell j} < Y_{ij}$ and smallest $Z_{r j}$ such that $Y_{r j} > Y_{ij}$, respectively. Due to conditional independence, the joint distribution of $\boldsymbol Z_i$ is simply the product of similarly formed truncated normals. Thus, sampling from \eqref{eq:RLsamp} is simple, requiring a sequence of univariate truncated normal sampling steps, which are parallelizable across $J$. We summarize this procedure in Algorithm~\ref{alg:rank_aug}.
{
\setlength{\itemsep}{0pt}        
\setlength{\parsep}{0pt}
\setlength{\parskip}{0pt}
\renewcommand{\baselinestretch}{0.9}\normalsize
\begin{algorithm}[h]
\caption{One Gibbs sweep of RL data augmentation under the DDE copula}
\label{alg:rank_aug}
\begin{algorithmic}
\REQUIRE $\boldsymbol Y$, current $\boldsymbol Z\in\mathcal R(\boldsymbol Y)$,
$\boldsymbol A^{(1)},\boldsymbol B^{(1)},\boldsymbol\gamma$

\STATE $\boldsymbol\mu \leftarrow \boldsymbol A^{(1)}\boldsymbol B^{(1)}$

\FOR{$j=1,\dots,J$}
\FOR{$i=1,\dots,N$}

\STATE{Compute truncation bounds} 

\[L_{ij}=\max\{Z_{i'j}:Y_{i'j}<Y_{ij}\},\qquad
U_{ij}=\min\{Z_{i'j}:Y_{i'j}>Y_{ij}\}\]

\STATE{Sample}
\[
Z_{ij} \sim \mathrm{TN}(\mu_{ij},\gamma_j;L_{ij},U_{ij})
\]

where $\mathrm{TN}(\mu, \sigma^{2}, a,b)$ is the normal density with mean $\mu$ and variance $\sigma^{2}$ truncated to the interval (a,b).

\ENDFOR
\ENDFOR

\end{algorithmic}
\end{algorithm}
}
 \paragraph*{Binary Latent Variables}
Sampling from $p[\{\boldsymbol A^{(d)}\}_{d=1}^{D} \mid \boldsymbol \theta^t,\boldsymbol{Z} \in \mathcal{R}(\boldsymbol Y), \text{---}]$ is computationally challenging due to the exponential size of the joint state space. Rather than performing a sequential Gibbs sweep over individual coordinates, we employ an approximation based on a product-form (mean-field) representation of the full conditional. For each observation $i$, we approximate the joint conditional distribution by independent Bernoulli updates across layers and coordinates,
\begin{equation}\label{eq:MFVI}
    p(\boldsymbol A_i\mid \boldsymbol{Z}^t_i, \boldsymbol \theta^t, \text{--}) \approx \prod_{d=1}^{D} \prod_{k=1}^{K^{(d)}} \text{Bernoulli}(\pi_{ik}^{(d)}),
\end{equation}

where $\boldsymbol A_i = (\boldsymbol A_i^{(1)}, \dots, \boldsymbol A_i^{(D)})$. When used within the EM algorithm, we find that sampling from this approximation enables efficient exploration of the posterior relative to a true Gibbs step, resulting in a significantly more accurate estimation of the layer-specific dimensions $\{K^{(d)}\}_{d=1}^{D}$ (Section \ref{sec:sim}).

At iteration $t$, the update proceeds by computing local log-odds for each latent variable while holding all other coordinates fixed at their current values. Specifically, for each $d \in \{1,\dots, D\}$ and $k \in \{1, \dots, K^{(d)}_{\text{max}}\}$ we evaluate
$
\Delta_{ik}^{(d),t} =
\log p(A_{ik}^{(d)} = 1 \mid \boldsymbol A^{-(d), t}_{i,-k}, \boldsymbol Z^t_i, \boldsymbol \theta ^t, \text{--})
-
\log p(A_{ik}^{(d)} = 0 \mid \boldsymbol A^{-(d), t}_{i,-k}, \boldsymbol Z^t_i, \boldsymbol \theta ^t, \text{--}),
$
and set $\pi_{ik}^{(d), t} = \operatorname{logistic}\!\big(\Delta_{ik}^{(d),t}\big)$. Here, $\boldsymbol A^{-(d), t}_{i,-k}$ are the binary latent variables without $A_{ik}^{(d)}$.
The quantities $\Delta_{ik}^{(d),t}$ admit simple closed forms due to the conditional independence structure of the DDE, which may be found in the supplementary materials. All log-odds are computed using the same previous configuration $\boldsymbol A_i^{t-1}$, allowing updates to be performed in parallel across coordinates and layers. Given $\{\pi_{ik}^{(d),t}\}$, we then sample independently: $A_{ik}^{(d),t} \sim \text{Bernoulli}(\pi_{ik}^{(d),t}), \ \text{for all}\ d,k$.







This update can be viewed as an approximate Gibbs step: rather than sequentially sampling from exact full conditionals, we construct a product-form approximation and draw all coordinates simultaneously. The resulting update resembles a mean-field variational step \citep{blei2017variational}, but instead of optimizing the Bernoulli parameters governing each $A_{ik}^{(d)}$, we sample directly from the approximate family. Although this procedure doesn't produce exact samples from the target distribution, it yields substantial computational gains relative to a full Gibbs sampling step and performs well in high-dimensional settings; see Section \ref{sec:sim} for detailed comparisons.
{
\setlength{\itemsep}{0pt}        
\setlength{\parsep}{0pt}
\setlength{\parskip}{0pt}
\renewcommand{\baselinestretch}{0.9}\normalsize
\begin{algorithm}[ht]
\caption{Coordinate Ascent Monte Carlo EM for the DDE copula ($C=1$)}
\label{alg:mcem}
\begin{algorithmic}

\REQUIRE $\boldsymbol Y$, $\mathcal R(\boldsymbol Y)$, initial $\boldsymbol\theta^{0}$

\FOR{$t = 0,1,2,\dots$ until convergence}

\STATE \textbf{E-step for $\boldsymbol{\theta_{\text{lat}}}$ and $\boldsymbol{\theta_{\text{hyp}}}$: Sample from the following full conditionals}:

\STATE \hspace{1em} \textbullet\ $\boldsymbol Z^{t} \sim p\{\boldsymbol Z \mid \boldsymbol Z \in \mathcal R(\boldsymbol Y), \boldsymbol \theta^{t}, \text{--}\}$ \quad (Alg.~\ref{alg:rank_aug})

\STATE \hspace{1em} \textbullet\ $\{\boldsymbol A^{(d),t}\}_{d=1}^{D} \approx p[\{\boldsymbol A^{(d)}\} \mid \boldsymbol Z^{t}, \boldsymbol \theta^{t}, \text{--}]$ \quad (Eq.~\ref{eq:MFVI})

\STATE \hspace{1em} \textbullet\ $\{\boldsymbol \lambda_{0}^{(d),t}\}_{d=1}^{D} \sim p(\cdot \mid \boldsymbol Z^{t}, \boldsymbol \theta^{t}) \ (\text{CSP conjugate updates})$

\STATE \hspace{1em} \textbullet\ $\{\boldsymbol v^{(d),t}\}_{d=1}^{D} \sim p(\cdot \mid \boldsymbol Z^{t}, \boldsymbol \theta^{t}) \ (\text{CSP conjugate updates})$
\vspace{1em}
\STATE \hspace{1em} \textbf{Set}
$\widehat Q(\boldsymbol \theta \mid \boldsymbol \theta^{t})$ 
via \eqref{eq:Qfndecomp}
\vspace{1em}
\STATE \textbf{Conditional Maximization-step}

\STATE \hspace{1em} \textbullet\ $\boldsymbol \pi^{t+1}
= \arg\max_{\boldsymbol \pi} \widehat Q(\boldsymbol \pi,\text{--})$

\STATE \hspace{1em} \textbullet\ For $d = D-1,\dots,2$:
\[
\boldsymbol B^{(d),t+1}
=
\arg\max_{\boldsymbol B^{(d)}} \widehat Q(\boldsymbol B^{(d)}\mid \text{--})
\]

\STATE \hspace{1em} \textbullet\ $\boldsymbol B^{(1),t+1}
=
\arg\max_{\boldsymbol B^{(1)}} \widehat Q(\boldsymbol B^{(1)}, \boldsymbol \gamma^{t}\mid\text{--})$

\STATE \hspace{1em} \textbullet\ $\boldsymbol \gamma^{t+1}
=
\arg\max_{\boldsymbol \gamma} \widehat{Q}(\boldsymbol B^{(1),t+1}, \boldsymbol \gamma\mid \text{--})$ 

\STATE \hspace{1em} \textbullet \ $c_k^{(d),t+1} = \arg\max_{c_k^{(d)}} \widehat Q(c_k^{(d)} \mid \text{--})$, 
$K^{(d)*,t+1}
=
\sum_{k=1}^{K_{\max}^{(d)}} \mathbbm{1}(c_k^{(d),t+1} > k), 
\ d=1,\dots,D, \ k = 1,\dots, K^{(d)}_{\text{max}}$

\ENDFOR

\end{algorithmic}
\end{algorithm}
}

The full coordinate ascent Monte Carlo EM algorithm for MAP estimation of the $D$-layer DDE copula is summarized in Algorithm~\ref{alg:mcem}, with details provided in the supplement. The Monte Carlo E-step also simplifies the maximization step. When $C=1$, the decomposition of $Q(\boldsymbol{\theta} \mid \boldsymbol{\theta}^t)$ in \eqref{eq:Qfndecomp}, together with conditional independence across nodes within each layer, reduces the M-step to a collection of independent, parallelizable weighted-$\ell_1$-regularized logistic or linear regressions. For the $j$th row in layer $d$, the penalty is
$
\sum_{k=1}^{K^{(d)}_{\text{max}}}
\Big[
(1-\mathbbm{1}\{c_{k}^{(d),t+1} \leq k\}) \lambda^{(d),t+1}_{0k}
+
\mathbbm{1}\{c_{k}^{(d),t+1} \leq k\} \lambda^{(d)}_{1}
\Big]
\lvert \beta_{jk} \rvert,
$
arising from the Laplace mixture specification of the CSP prior.  The current estimate of $c_{k}^{(d)}$ determines whether the $k$th coefficient is penalized by the spike rate $\lambda^{(d)}_{1}$ or the slab rate $\lambda^{(d),t+1}_{0k}$.

This thresholding scheme differs from \cite{li2025sparse}, which employs a soft weighting of spike and slab penalties in factor models. This hard-thresholding also induces hierarchical gating: if $c_{k}^{(d),t+1} \leq k$ in layer $d$, we deterministically set the $k$th row of $\boldsymbol B^{(d+1),t+1}$ to zero. Empirically, this improves convergence and estimation accuracy.

Despite the efficiency gains from the approximate Gibbs update \eqref{eq:MFVI}, the EM algorithm may exhibit sensitivity to local modes in the high-dimensional posterior. To mitigate this, we incorporate deterministic annealing \citep{rovckova2014emvs} by introducing a temperature parameter $\tau \in (0,1]$ and targeting a tempered posterior proportional to
$p\{\boldsymbol \theta, \boldsymbol \theta_{\text{lat}}, \boldsymbol \theta_{\text{hyp}} \mid \boldsymbol Z \in \mathcal{R}(\boldsymbol Y)\}^{\tau}$.
This modification induces simple, layer-wise adjustments to both the E-step and M-step; see the supplement for more details.

\subsection{Spectral Initialization of Latent Variables and Parameters}
\label{sec:init}
 Initialization of both $\boldsymbol \theta$ and $\boldsymbol \theta_{\text{hyp}}$ is paramount to accurate estimation of the DDE copula. Although the rank likelihood enforces marginal rank-consistency between $\boldsymbol Y$ and $\boldsymbol Z$, the marginal distributions of $Y_j$ and $Z_j$ may differ substantially. In particular, $Y_j$ may be discrete, skewed, or have zero-inflation, while the corresponding latent variable $Z_j$ is typically a factorial mixture of Gaussians induced by the binary latent layers. Moreover, dependence in $\boldsymbol Z$ arises from a mixture of Gaussian components that is subsequently warped by the marginal transformations $F_{Y_j}$, making direct initialization difficult.

{
\setlength{\itemsep}{0pt}        
\setlength{\parsep}{0pt}
\setlength{\parskip}{0pt}
\renewcommand{\baselinestretch}{0.9}\normalsize
 \begin{figure}[h]
     \centering
\includegraphics[width=0.38\linewidth, keepaspectratio]{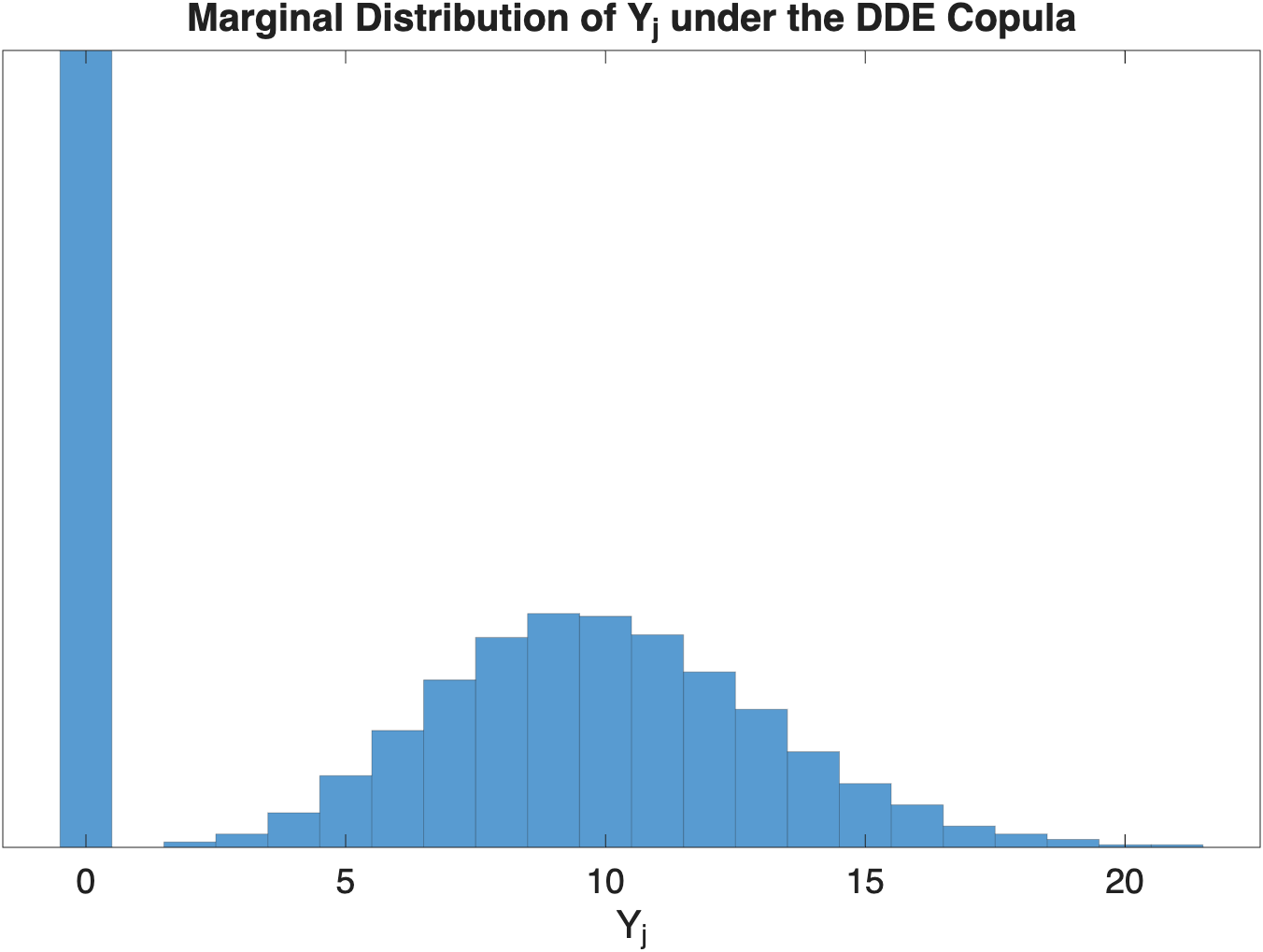}
\includegraphics[width=0.42\linewidth, keepaspectratio]{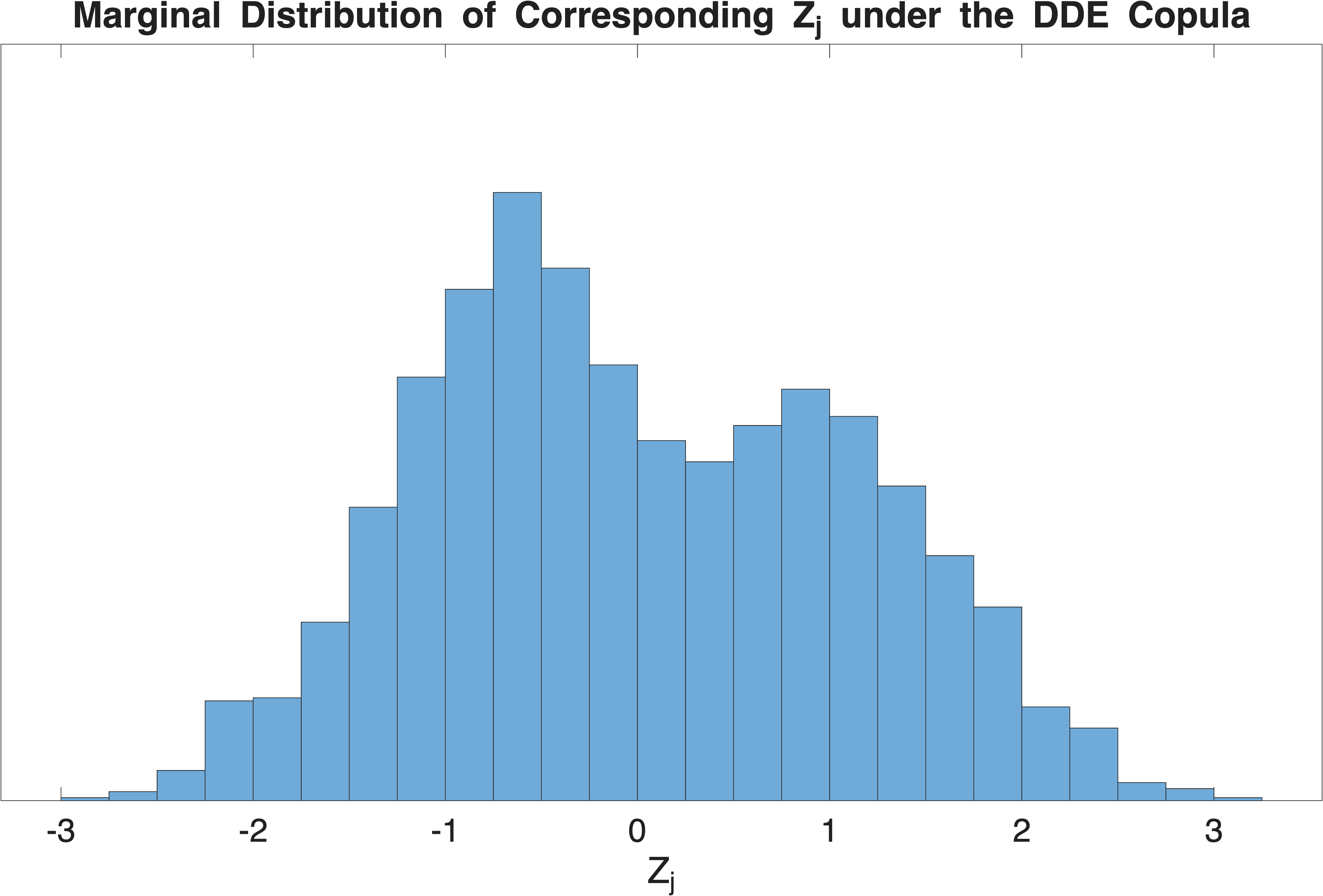}
\includegraphics[width=0.42\linewidth, keepaspectratio]{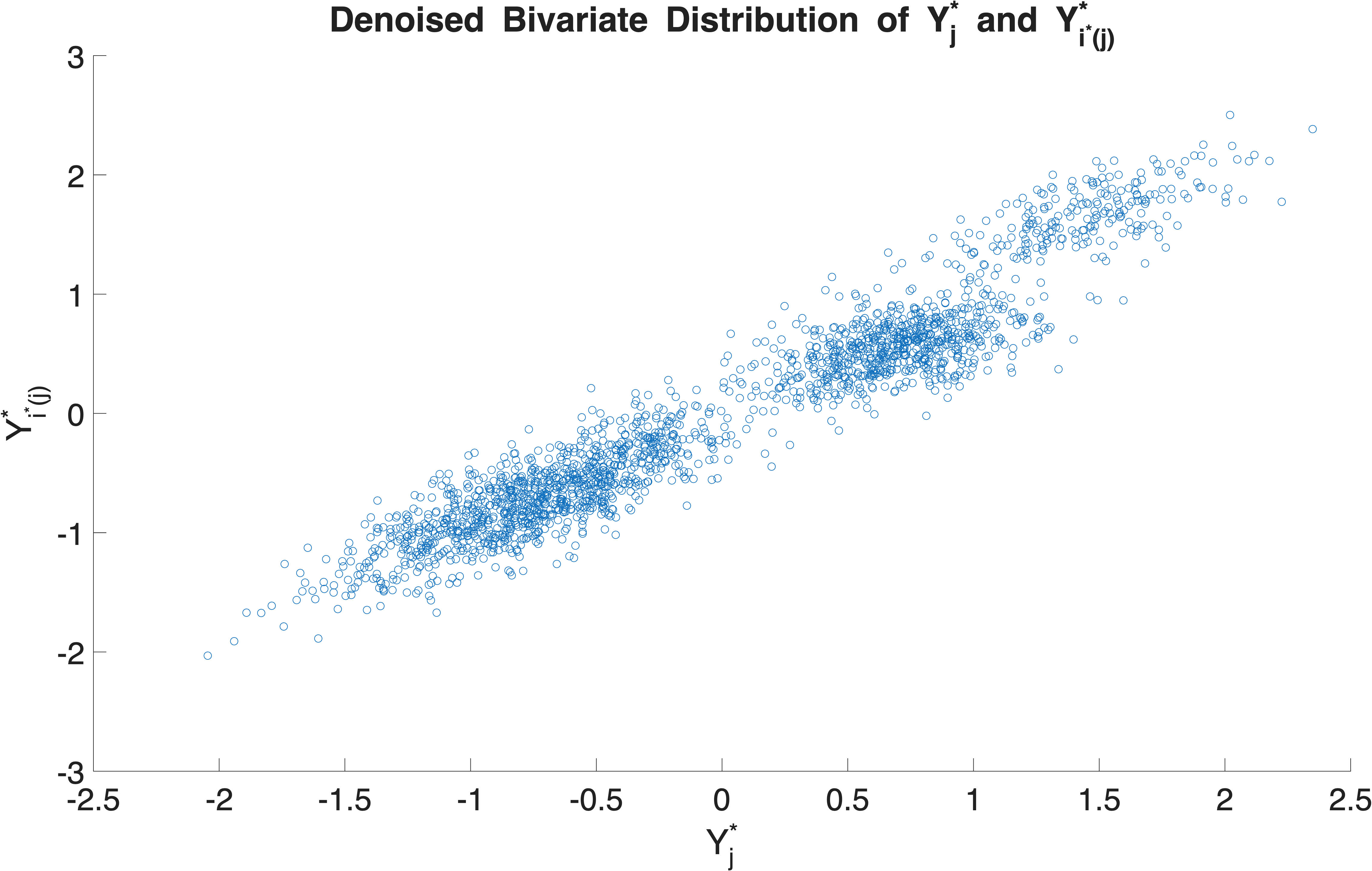}
\includegraphics[width=0.42\linewidth, keepaspectratio]{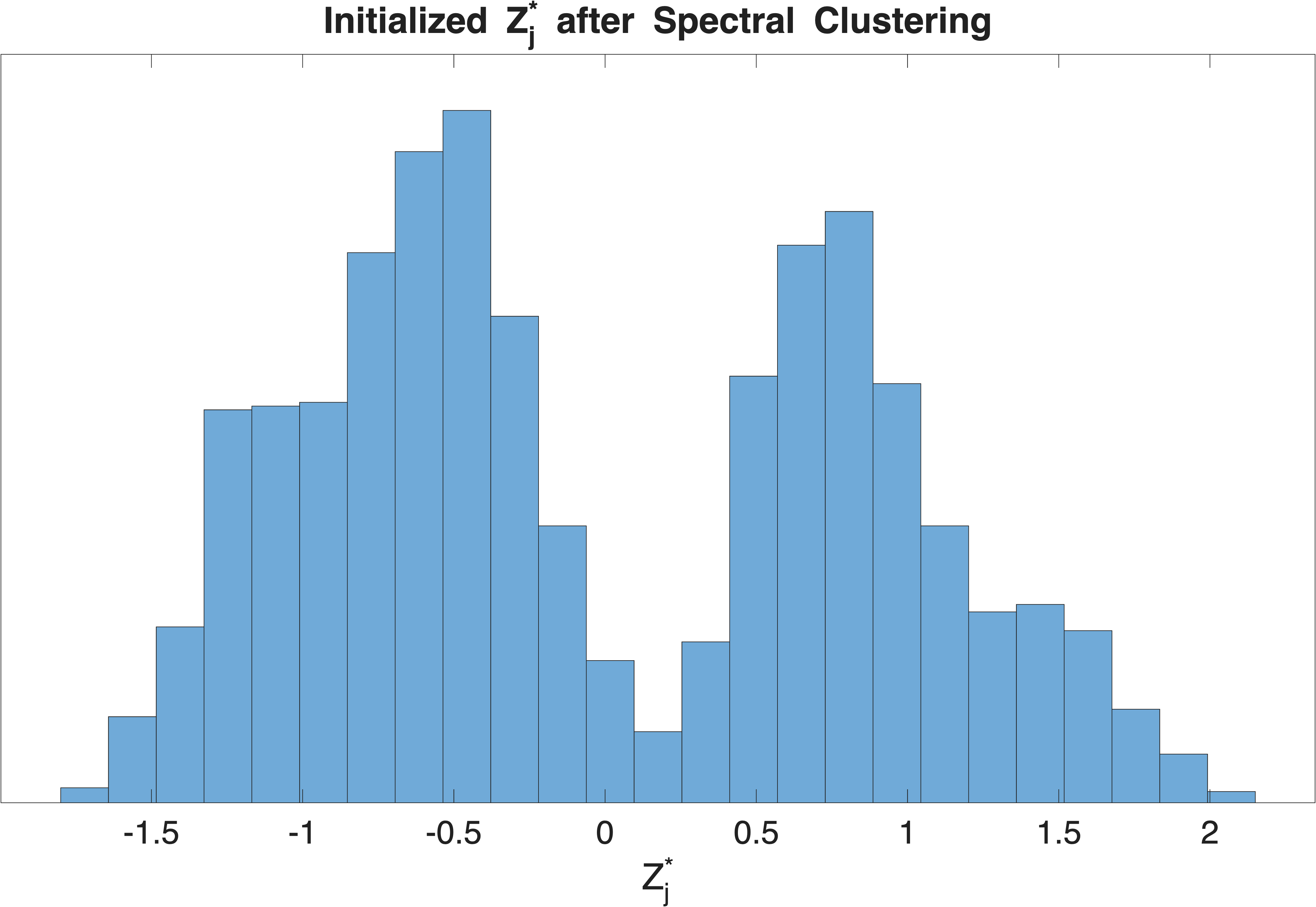}
\caption{\small Top row: Simulated observed $X_j$ (left) and corresponding $Z_j$ (right) under the DDE copula. The marginal distribution of $X_{j}$ is zero-inflated, while $Z_{j}$ is clearly multi-modal. Bottom left: the bivariate distribution of denoised $Y^{*}_{j}$ and $Y^{*}_{i^{*}(j)}$ where $i^{*}(j)$ is the smallest entry of $\boldsymbol{D}$.  Non-overlapping clusters emerge such that when we simulate data from a fitted K-means clustering applied to $(Y_{j}^{*},Y^{*}_{i^{*}(j)})$, the resulting marginal distribution of the simulated data is multi-modal (bottom right).}
     \label{fig:specinit}
 \end{figure}
 }

To address this, we propose a spectral initialization procedure that exploits local dependence structure in $\boldsymbol Y$ to recover latent clustering behavior. The key heuristic is that under the DDE copula,
$Z_j = G_{Z_j}^{-1}\{F_{Y_j}(Y_j)\}$,
so that $Z_j$ is a monotone transformation of $Y_j$. We approximate this transformation by the identity and instead aim to recover latent structure through pairwise relationships in $\boldsymbol Y$.

We begin by computing the singular value decomposition $\boldsymbol Y = \boldsymbol U \boldsymbol \Sigma \boldsymbol V^\top$ and defining a distance matrix $D_{ij} = \|\boldsymbol V_i - \boldsymbol V_j\|_2^2$, where $\boldsymbol V_j$ denotes the $j$th row of $\boldsymbol V$, the right singular vectors of $\boldsymbol Y$. This representation captures dependence between variables through their spectral embeddings. To reduce noise and mitigate issues arising from discrete observations, we construct a low-rank approximation $\boldsymbol Y^* = \boldsymbol U_k \boldsymbol \Sigma_k \boldsymbol V_k^\top$ using the leading $k$ singular values. We select $k$ as the smallest integer satisfying
$
\frac{\sum_{\ell=1}^{k} \Sigma_{\ell\ell}^{2}}{\sum_{\ell=1}^{\min(N,J)} \Sigma_{\ell\ell}^{2}} \geq 0.8,
$
where $\Sigma_{\ell\ell}$ denotes the $\ell$th singular value in the diagonal matrix $\boldsymbol \Sigma$.

For each variable $j$, we identify its nearest neighbor $i^*(j) = \arg\min_{i \neq j} D_{ij}$ and consider the bivariate data $(Y^*_{i^*(j)}, Y^*_j)$. Empirically, these pairs often exhibit clustering structure that reflects shared latent factors; see the bottom-left panel of Figure \ref{fig:specinit}. We apply $k$-means clustering to this bivariate data, selecting the number of clusters via the elbow method, and use the resulting cluster centers and weights to define an isotropic Gaussian mixture model. Drawing $n$ i.i.d.\ samples $(Z^{*}_{i^*(j)},Z^{*}_j)$ from this fitted model yields an approximation to the latent distribution ($Z_{i(j)}, Z_{j})$. Finally, we enforce rank-consistency by matching order statistics, setting $Z_{(r),j} = Z^{*}_{(r),j}$ for $r=1,\dots,n$. This process is outlined in Algorithm 
\ref{alg:spectral_init}.

 {
\setlength{\itemsep}{0pt}        
\setlength{\parsep}{0pt}
\setlength{\parskip}{0pt}
\renewcommand{\baselinestretch}{0.9}\normalsize

\begin{algorithm}[ht]
\caption{Initialization of latent variables rank-consistent $\boldsymbol Z$ using Spectral Clustering}
\label{alg:spectral_init}
\begin{algorithmic}

\REQUIRE Observed data $\boldsymbol Y \in \mathbbm{R}^{N \times J}$, rank $k$

\STATE \textbf{Spectral embedding}

\STATE \hspace{1em} \textbullet\ Compute SVD:
\[
\boldsymbol Y = \boldsymbol U \boldsymbol \Sigma \boldsymbol V^\top
\]

\STATE \hspace{1em} \textbullet\ Form distance matrix:
\[
D_{ij} = \|\boldsymbol V_i - \boldsymbol V_j\|_2^2
\]

\STATE \hspace{1em} \textbullet\ De-noise:
\[
\boldsymbol Y^* = \boldsymbol U_k \boldsymbol \Sigma_k \boldsymbol V_k^\top
\]

\STATE \textbf{Pairwise clustering and latent simulation}

\FOR{$j = 1,\dots,J$}

    \STATE \hspace{1em} \textbullet\ Nearest neighbor:
    \[
    i^*(j) = \arg\min_{i \neq j} D_{ij}
    \]

    \STATE \hspace{1em} \textbullet\ Fit $k$-means to $(Y^*_{i^*(j)}, Y^*_j)$

    \STATE \hspace{1em} \textbullet\ Simulate $(Z^{*}_{i^*(j)},  Z^{*}_j)$ from fitted isotropic Gaussian mixture

    \STATE \hspace{1em} \textbullet\ Rank matching:
    \[
    Z_{(r),j} \leftarrow Z^{*}_{(r),j}, \quad r=1,\dots,N
    \]

\ENDFOR

\RETURN Initialized latent matrix $\boldsymbol Z$

\end{algorithmic}
\end{algorithm}}

This procedure leverages the fact that pairs $(Z_i, Z_j)$ are mixtures of low-dimensional Gaussian components induced by shared latent structure. The spectral embedding identifies strongly dependent variables, while the de-noised representation reveals clustering patterns that are otherwise obscured on the observed scale. Matching order statistics ensures compatibility with the rank likelihood, producing an initialization that respects the marginal ordering constraints. 

Once we have initialized $\boldsymbol Z$, we must then initialize $\boldsymbol{\theta}$ and $\{\boldsymbol{A}^{(d)}\}_{d=1}^{D}$. Because we are effectively estimating a Gaussian DDE on rank-consistent $\boldsymbol Z$,  we  may proceed using the layer-wise Double-SVD initialization strategy in \cite{lee2026dde}. The basic idea is to denoise $\boldsymbol Z$ with a singular value decomposition, which provides initial estimates of $(\hat{\boldsymbol{B}}^{(1)}, \hat{\boldsymbol{A}}^{(1)})$. Then, the algorithm proceeds recursively, treating initialized $\hat{\boldsymbol A}^{(d)}$ as observed data for a denoising process that enables initialization of $(\boldsymbol B^{(d-1)}, \boldsymbol A^{(d-1)})$. Complete details on this process are provided in the Supplementary Material. 

Importantly, the initialization produced by Algorithm~\ref{alg:spectral_init} need not satisfy the normalization $\mathbbm{E}(Z_j)=0$ and $\mathrm{Var}(Z_j)=1$ in Section \ref{sec:DDEcop}. This is inconsequential for estimation, as the DDE copula depends only on the induced rank structure of $\boldsymbol Z$ and is invariant to marginal affine transformations. In practice, we therefore fit the DDE model directly to the unnormalized $\boldsymbol Z$ obtained from the spectral procedure. After fitting, we impose the identification constraint by re-centering and re-scaling each coordinate of $\boldsymbol Z$, with a corresponding adjustment to the first-layer parameters $(\boldsymbol B^{(1)}, \boldsymbol \gamma)$ to preserve the likelihood. This post hoc normalization yields an identified representation without affecting the estimated dependence structure. 

\section{Simulation: Dimension and Parameter Estimation}\label{sec:sim}

We conduct a simulation study to evaluate the performance of the proposed Bayesian extended rank likelihood DDE copula model under layer-wise CSP priors in recovering latent structure and estimating model parameters. The primary goals are to assess: (i) recovery of the latent dimensions $\{K^{(d)}\}_{d=1}^{(D)}$, and (ii) accuracy of the estimated loading matrices $\{\boldsymbol B^{(d)}\}_{d=1}^{D}$.

{
\setlength{\itemsep}{0pt}        
\setlength{\parsep}{0pt}
\setlength{\parskip}{0pt}
\renewcommand{\baselinestretch}{0.9}\normalsize

\begin{figure}
    \centering
    \includegraphics[width=0.45\linewidth]{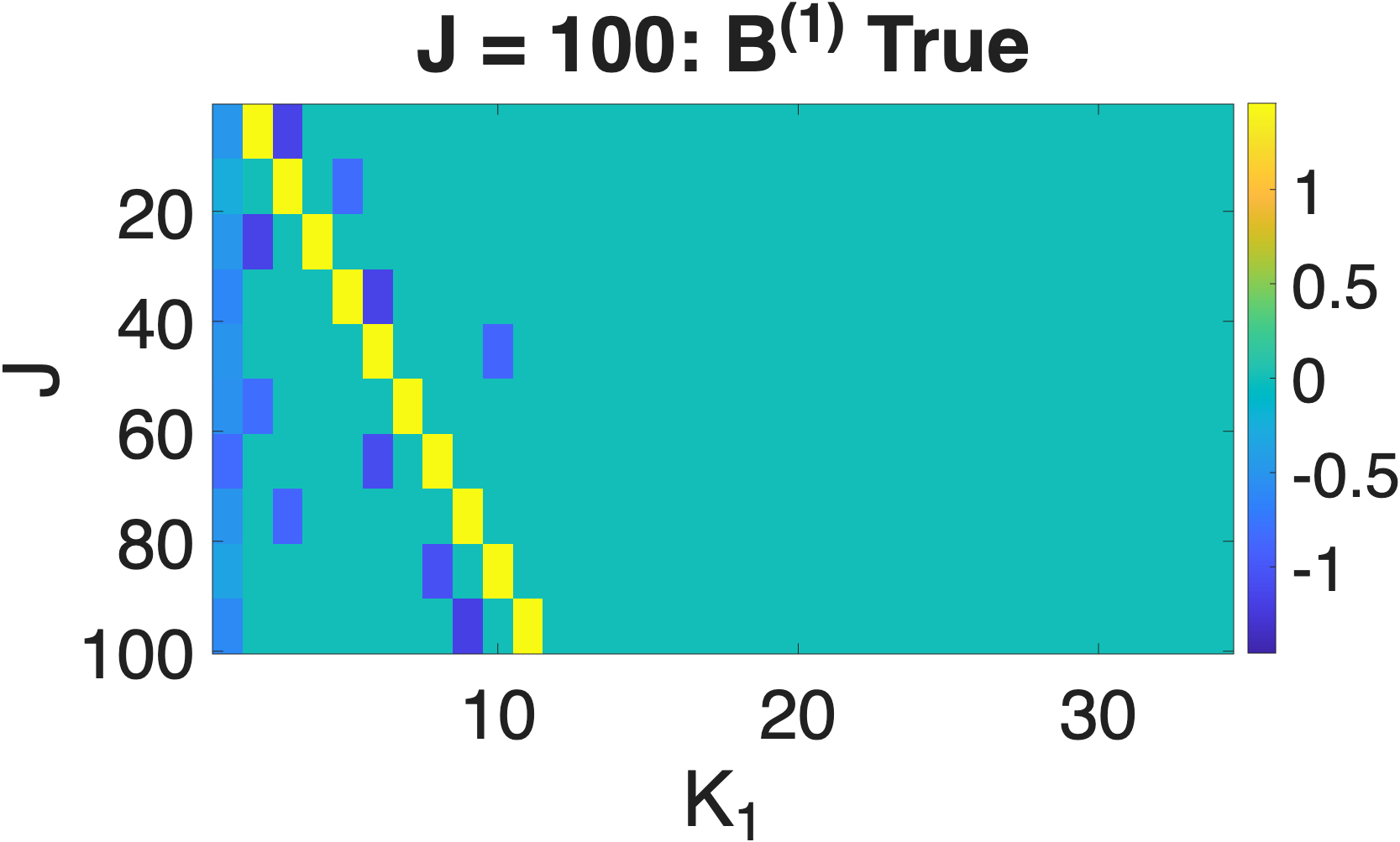}
        \includegraphics[width=0.45\linewidth]{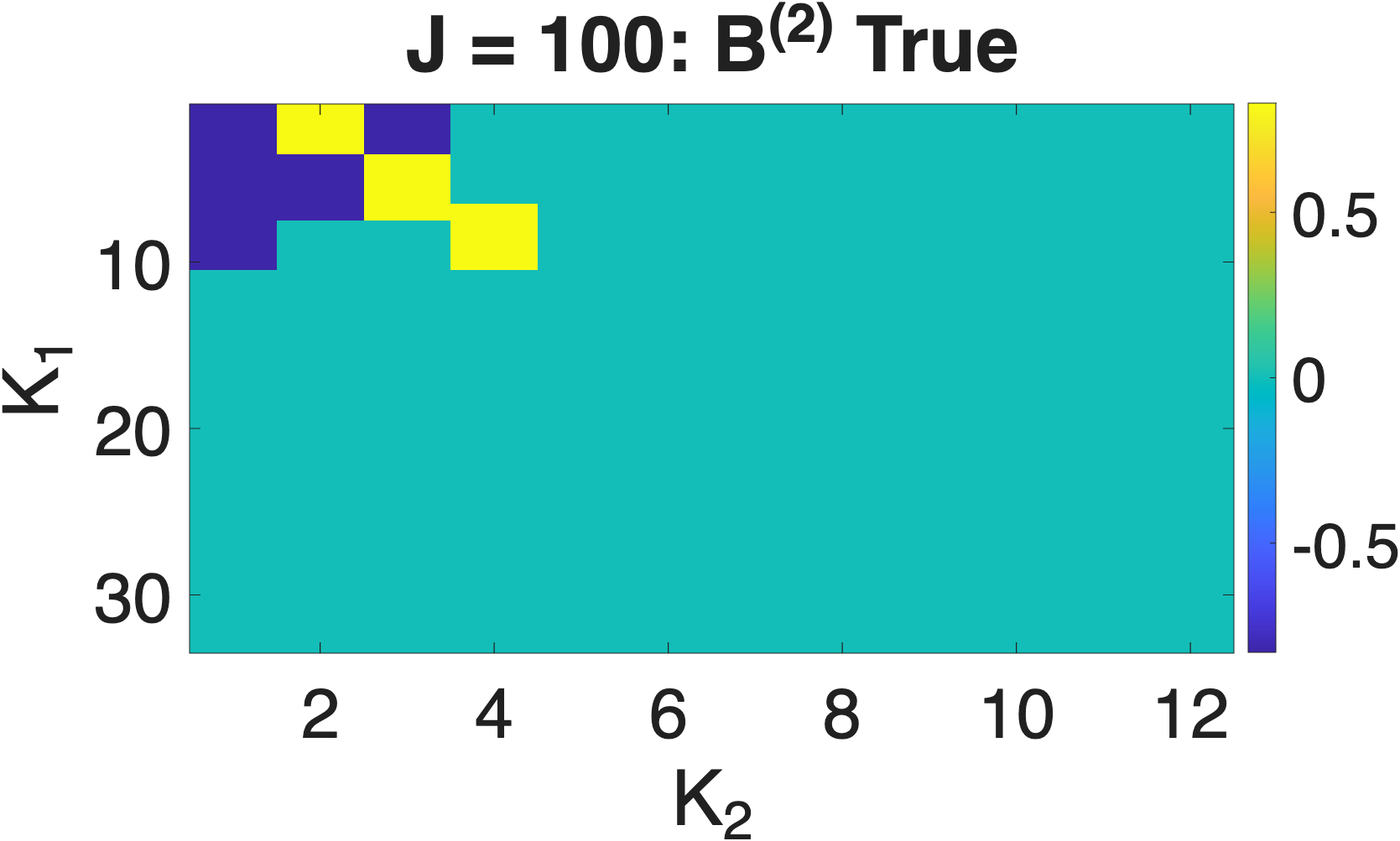}
    \caption{\small Data generating $\{\boldsymbol B^{(d)}\}_{d=1}^{2}$ for $J = 100$}
    \label{fig:DGPsim}
\end{figure}
}
Data are generated from a two-layer DDE copula model with $K^{(1)} = 10$ first-layer latent variables and $K^{(2)} = 3$ second-layer latent variables. The observed dimension $J$ of $\boldsymbol{Y}$ varies across settings, with $J \in \{50, 100, 150\}$. 

{
\setlength{\itemsep}{0pt}        
\setlength{\parsep}{0pt}
\setlength{\parskip}{0pt}
\renewcommand{\baselinestretch}{0.9}\normalsize

\begin{figure}[h]
    \centering
    \includegraphics[width=0.49\linewidth, keepaspectratio]{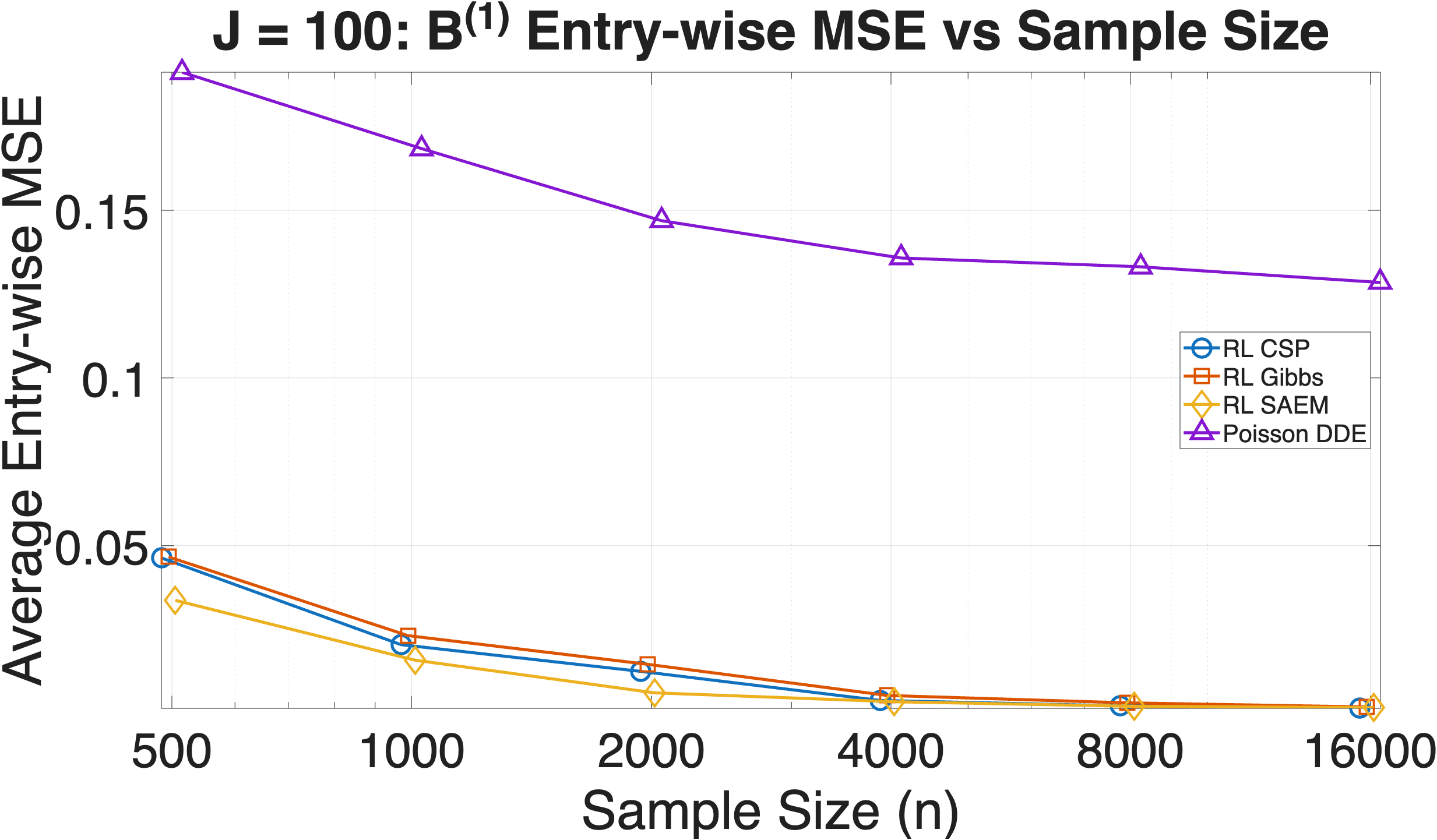}
        \includegraphics[width=0.49\linewidth,keepaspectratio]{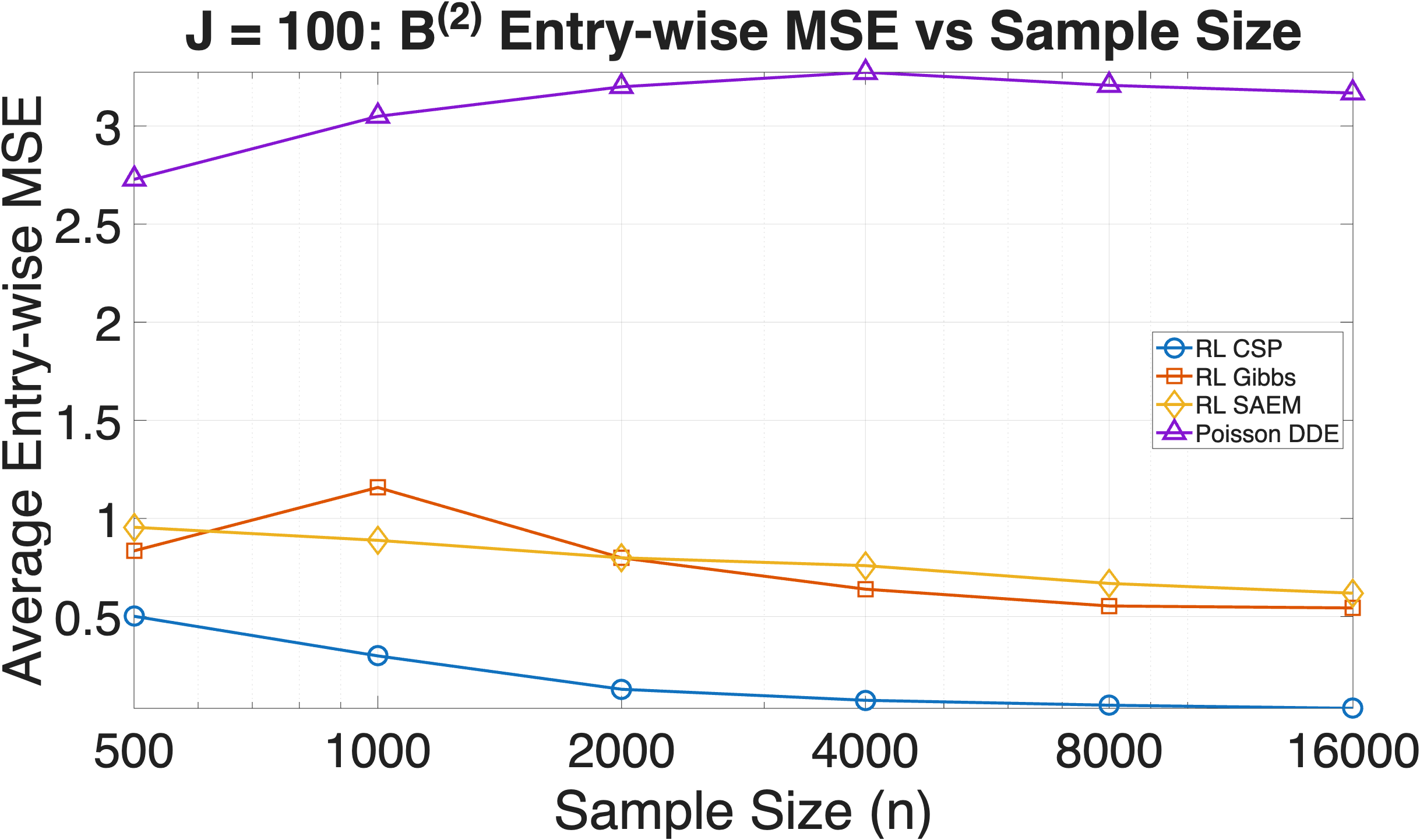}
    \caption{\small $J = 100$: Average entry-wise MSE for estimates of $\boldsymbol{B}^{(1)}$ (top four rows) and $\boldsymbol{B}^{(2)}$ across sample sizes and methods} 
    \label{fig:MSEBsJ100}
\end{figure}
}

The loading matrices are constructed to exhibit structured sparsity, see Figure \ref{fig:DGPsim} for the exact patterns. To mimic a plausible real data setting, groups of variables in $\boldsymbol{Y}$ load strongly (both positively and negatively) on certain shallow-level latent variables, which also share deeper latent features. Crucially, $K^{(d)}<< K_{\text{max}}^{(d)}$ for each $J$, providing a challenging setting for dimension selection and weight matrix estimation.
Given these parameters, data are generated from the DDE copula using the Gaussian latent construction \eqref{condZ}--\eqref{eq::PIT}. To induce heterogeneous marginals, $\{F_{Y_j}\}_{j=1}^{J}$ are assigned cyclically via $\mathrm{mod}(j-1,3)+1$. Let $Q_j = F_{Z_j}(Z_j)$. For type 1 variables, we generate zero-inflated Poisson marginals by setting $X_j=0$ with probability $\pi_0=0.3$, and otherwise applying $X_j = F^{-1}_{\text{Pois}(r_j)}\!\big((Q_j-\pi_0)/(1-\pi_0)\big)$ for $Q_j \ge \pi_0$. For type 2 variables, we use discretized Gamma marginals, $X_j = \mathrm{round}\!\big(F^{-1}_{\Gamma(2,r_j)}(Q_j)\big)$, yielding over-dispersed counts with mean approximately $2r_j$. For type 3 variables, we use standard Poisson marginals, $X_j = F^{-1}_{\text{Pois}(r_j)}(Q_j)$. For each $j$, $r_j$ is drawn uniformly from $\{1,\dots,10\}$. The resulting data exhibit dependence governed by the DDE copula, while retaining heterogeneous marginal scales and shapes.

Across 100 simulated datasets with $n \in \{500,1000,2000,4000,8000,16000\}$, we initialize rank-consistent latent variables $\boldsymbol Z$ via Algorithm~\ref{alg:spectral_init}, and initialize $\boldsymbol \theta$ and $\{\boldsymbol A^{(d)}\}_{d=1}^{2}$ using the Double SVD approach of \cite{lee2026dde} given $\boldsymbol Z$. All weight matrices are initialized at their maximal admissible dimensions under the identifiability conditions of Section~\ref{sec:DDEcop}: $\boldsymbol B^{(1)} \in \mathbb{R}^{J \times (\lfloor J/3 \rfloor + 1)}$ and $\boldsymbol B^{(2)} \in \mathbb{R}^{(\lfloor J/3 \rfloor) \times (\lfloor J/9 \rfloor + 1)}$. As $J$ increases, this induces increasing misspecification in the initial dimensionality.

We evaluate the ability of the Bayesian extended rank likelihood DDE copula with independent layer-wise CSP priors (RL CSP) to recover (i) the true layer dimensions $\{K^{(d)}\}$, (ii) the sparsity structure $\{G^{(d)}\}_{d=1}^{2}$ with $g^{(d)}_{jk} = \mathbb{I}(\beta^{(d)}_{jk} \neq 0)$, and (iii) the weight matrices $\{\boldsymbol B^{(d)}\}_{d=1}^{2}$. Hyperparameter choices are provided in the supplement; in practice, as $J$ increases, we increase $\lambda_{1}^{(d)}$ and decrease the initial temperature to encourage stronger shrinkage in increasingly sparse regimes.

We compare against three alternative approaches. First, RL Gibbs replaces the approximate Gibbs step in Algorithm~\ref{alg:mcem} with exact Gibbs updates for $\boldsymbol A^{(d)}$, holding other steps fixed. Second, RL SAEM replaces the Bayesian CSP prior with a maximum likelihood approach based on a stochastic approximation EM algorithm \citep{lee2026dde}, augmented with rank-likelihood sampling (Algorithm~\ref{alg:rank_aug}) and a truncated lasso penalty \citep{shen2012likelihood} to induce sparsity and estimate latent dimension. Finally, we consider a Poisson DDE that omits the Gaussian copula layer \eqref{condZ}--\eqref{eq::PIT}, assuming conditionally Poisson outcomes in \eqref{outcome} that is compatible with the discrete marginals in $\boldsymbol Y$. All methods are initialized using the same $\boldsymbol Z$ (when applicable) and $\boldsymbol \theta$.

{
\setlength{\itemsep}{0pt}        
\setlength{\parsep}{0pt}
\setlength{\parskip}{0pt}
\renewcommand{\baselinestretch}{0.9}\normalsize

\begin{table}[ht]
\centering
\begin{tabular}{lcccccc}
\toprule
 & \multicolumn{6}{c}{$n$} \\
\cmidrule(lr){2-7}
 & 500 & 1000 & 2000 & 4000 & 8000 & 16000 \\
\midrule

\multicolumn{7}{l}{\textbf{Average recovery of $G_1$}} \\
RL CSP      & 0.977 & 0.994 & 0.997 & 1.000 & 1.000 & 1.000 \\
RL Gibbs    & 0.977 & 0.993 & 0.997 & 1.000 & 1.000 & 1.000 \\
RL SAEM     & 0.977 & 0.989 & 0.995 & 0.998 & 0.999 & 1.000 \\
Poisson DDE & 0.941 & 0.933 & 0.920 & 0.921 & 0.920 & 0.922 \\

\multicolumn{7}{l}{\textbf{Average recovery of $G_2$}} \\
RL CSP      & 0.974 & 0.985 & 0.990 & 1.000 & 1.000 & 1.000 \\
RL Gibbs    & 0.946 & 0.933 & 0.921 & 0.917 & 0.915 & 0.917 \\
RL SAEM     & 0.795 & 0.808 & 0.819 & 0.847 & 0.855 & 0.849 \\
Poisson DDE & 0.691 & 0.683 & 0.650 & 0.655 & 0.663 & 0.662 \\

\multicolumn{7}{l}{\textbf{Average MAP estimate of Layer 1 dimension $K^{(1)}$}} \\
RL CSP      & 8.140 & 9.020 & 9.150 & 9.980 & 10.000 & 10.000 \\
RL Gibbs    & 8.100 & 8.880 & 9.100 & 9.960 & 9.990 & 10.000 \\
RL SAEM     & --    & --    & --    & --    & --    & --    \\
Poisson DDE & --    & --    & --    & --    & --    & --    \\

\multicolumn{7}{l}{\textbf{Average MAP estimate of Layer 2 dimension $K^{(2)}$}} \\
RL CSP      & 3.300 & 2.950 & 2.990 & 3.000 & 3.020 & 3.010 \\
RL Gibbs    & 6.010 & 7.360 & 8.680 & 8.990 & 8.960 & 8.810 \\
RL SAEM     & --    & --    & --    & --    & --    & --    \\
Poisson DDE & --    & --    & --    & --    & --    & --    \\

\bottomrule
\end{tabular}
\caption{\small $J = 100$: Sparsity pattern recovery accuracy and estimated number of active nodes across methods and sample sizes.}
\label{tab:recovery_resultsJ100}
\end{table}
}
We report results for $J=100$ observed variables, and observe qualitatively similar patterns in other settings. Table~\ref{tab:recovery_resultsJ100} summarizes average layer-wise graph recovery and MAP estimates of the latent dimensions across methods. Only RL CSP and RL Gibbs estimate $K^{(d)*}$, as latent dimension is treated as an unknown parameter and updated via Algorithm~\ref{alg:mcem}. RL CSP achieves consistently higher sparsity recovery and near-perfect dimension estimation, whereas competing approaches degrade substantially in deeper layers.

These patterns are reflected in Figures~\ref{fig:MSEBsJ100}-\ref{fig:BsJ100}, which display the average entry-wise MSE between $\boldsymbol B^{(d)}$ and estimates $\hat{\boldsymbol B}^{(d)}$ and the average estimate $\boldsymbol{\hat{B}}^{(2)}$ across sample sizes and methods, respectively. Relative to other rank-likelihood-based estimators, RL CSP yields uniformly lower error, highlighting the combined benefits of the approximate latent-variable sampling step and the CSP shrinkage prior. The Poisson DDE performs poorly across all settings, indicating that parametric DDE formulations—even when aligned with the data type—lack sufficient flexibility to capture complex marginal structure. Additional results in the supplement, including for deeper models ($D > 2)$, show similarly strong performance for RL CSP in both dimension recovery and weight estimation. 

Finally, we note that fitting was fast: for The RL CSP DDE Copula, average run times were 2.83 minutes for $n = 1000$, 4.25 minutes for $n = 4000$, and 26.89 minutes for $n = 16000$ \emph{without} parallel computation (we ran simulation iterations in parallel for each sample size). However, our code implementation is parallelized for the sampling of $\boldsymbol A, \boldsymbol Z$ and optimization of $\boldsymbol B^{(d)}$, which significantly decreases run time in large $n$ and $J$ regimes.


{
\setlength{\itemsep}{0pt}        
\setlength{\parsep}{0pt}
\setlength{\parskip}{0pt}
\renewcommand{\baselinestretch}{0.9}\normalsize

\begin{figure}[h]
    \centering
        \includegraphics[width=.85\linewidth,keepaspectratio]{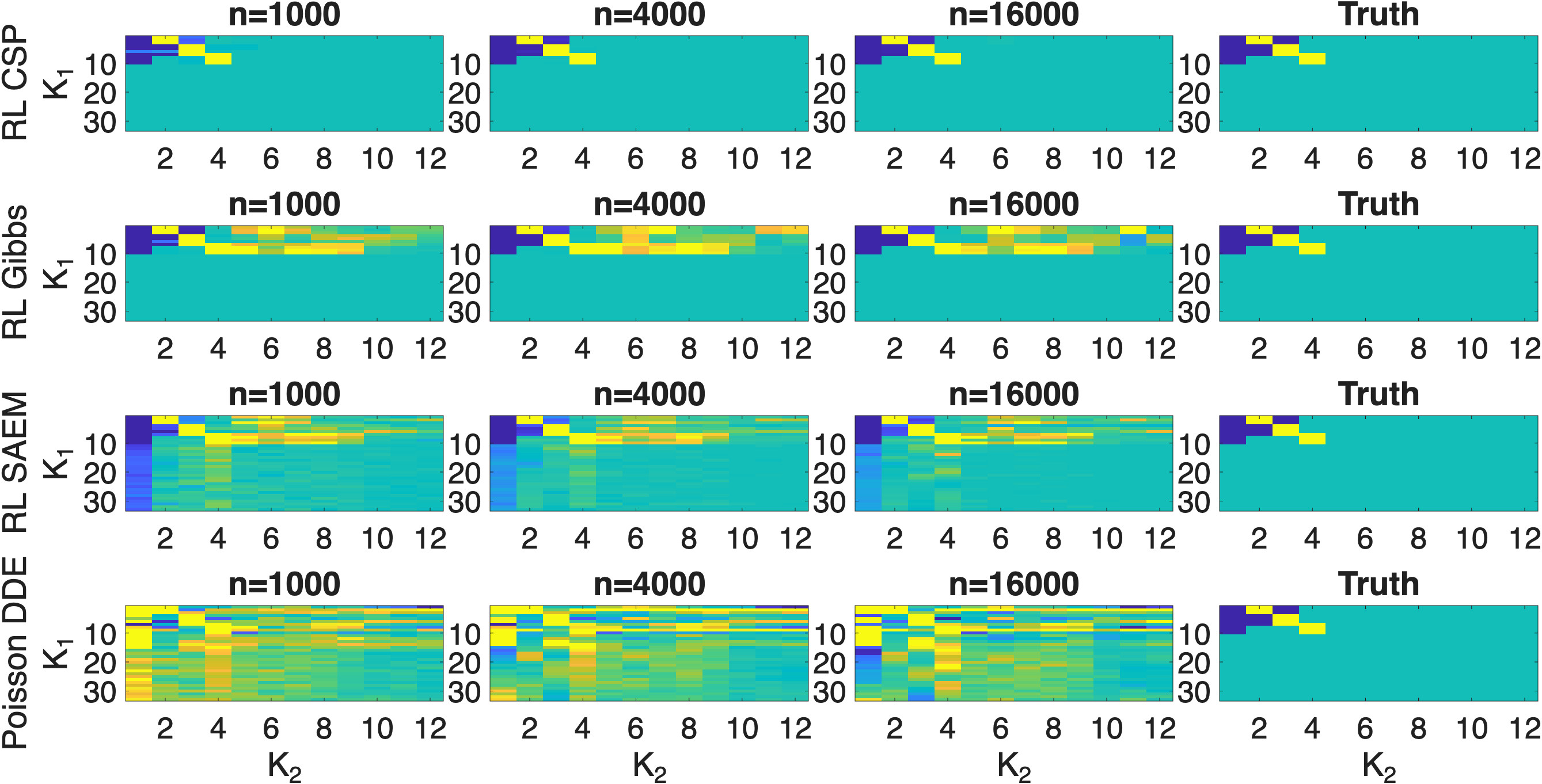}
    \caption{\small$J = 100$: Average point estimates of $\boldsymbol{B}^{(2)}$  across sample sizes and methods compared to the data generating values} 
    \label{fig:BsJ100}
\end{figure}}

\section{Application to Personality and Political Leanings}\label{sec:realdat}

We apply the proposed method to analyze an online survey of 1500 respondents conducted by YouGov linking personality traits, demographic characteristics, and political ideology \citep{montgomery2013computerized}. Respondents were matched to a sampling frame on gender, age, race, and education, which was constructed by stratified sampling from the full 2016 American Community Survey (ACS; \cite{acs2016}). We subsequently refer to these data as the Big5 survey.

Participants were asked to self-rate themselves across 98 items (we exclude two items measuring likelihood  to vote for liberal or conservative political candidates due to our downstream analysis), consistent with the Big Five construct of personality \citep{soto2013five}. Their answers were recorded on an ordinal scale ranging from 1 ("very inaccurate") to 5 ("very accurate"). An example item is displayed in Table~\ref{fig:b5_qs}.

{
\setlength{\itemsep}{0pt}        
\setlength{\parsep}{0pt}
\setlength{\parskip}{0pt}
\renewcommand{\baselinestretch}{0.9}\normalsize

\begin{table}[ht]
\centering

\begin{tabular}{rcl}
\hline
\textbf{Count} & \textbf{Code} & \textbf{Label} \\
\hline
12  & 1 & Very inaccurate \\
67  & 2 & Moderately inaccurate \\
243 & 3 & Neither inaccurate nor accurate \\
694 & 4 & Moderately accurate \\
483 & 5 & Very accurate \\
1   & 8 & Skipped \\
0   & 9 & Not asked \\
\hline
\end{tabular}
\caption{\small Response distribution for Big5 Survey Item: ``Complete tasks successfully.''}\label{fig:b5_qs}
\begin{minipage}{0.72\textwidth}
\footnotesize
\end{minipage}
\end{table}
}
Surveys of personality exhibit complex between- and within-trait dependencies \citep{chen2024idiographic}. In addition to the widely studied Big Five traits (Agreeableness, Openness, Neuroticism, Conscientiousness, and Extraversion), prior work suggests the presence of higher-order latent structure \citep{digman1997higher}. Estimating such structure is challenging, as few methods simultaneously (i) accommodate ordinal discrete data, (ii) capture nonlinear and hierarchical dependence, and (iii) yield interpretable summaries of these relationships.

We fit a two-layer RL CSP DDE copula to the Big5 survey, with $K^{(1)}_{\max} = \lfloor 98/3 \rfloor$ and $K^{(2)}_{\max} = \lfloor K^{(1)}_{\max}/3 \rfloor$. Model and hyperparameter selections were based on predictive evaluation. For each candidate configuration of hyperparameters, we estimated $\hat{\boldsymbol{\theta}}$ and generated $50$ synthetic datasets of size $n = 1500$ via sequential sampling of $\boldsymbol{\tilde{A}}_i^{(2), \ell}$, $\boldsymbol{\tilde{A}}_i^{(1), \ell}$, and $\boldsymbol{\tilde{Z}}_i^{\ell}$ for $\ell = 1,\dots,50$, followed by $\tilde{\boldsymbol{Y}}_i^{\ell} = \hat{F}_{Y_j}\{F_{Z_j}(\tilde{Z}_{ij}^{\ell})\}$, where $\hat{F}_{Y_j}$ denotes the empirical distribution function of $Y_j$.

We then combined the observed and synthetic data and trained a Random Forest classifier \citep{breiman2001random} using five-fold cross-validation. For each hold-out set, we computed the propensity score mean squared error (pMSE), $\text{pMSE}_k^{\ell} = n_{\text{test}}^{-1} \sum_{i=1}^{n_{\text{test}}} (\hat{p}_i - 0.5)^2$ \citep{snoke2018general}, and averaged across folds to obtain $\overline{\text{pMSE}}^{\ell} = 5^{-1} \sum_{k=1}^{5} \text{pMSE}_k^{\ell}$. The pMSE quantifies joint distributional similarity between real and synthetic data; with equal proportions of each, a strong generative model produces synthetic data that is indistinguishable from the real observations and so the classifier assigns probabilities near 0.5. Thus, the DDE copula with the lowest average pMSE best approximates the Big5 joint distribution. Additional univariate and multivariate diagnostics further support the selected model.
 We selected the model minimizing $50^{-1} \sum_{\ell=1}^{50} \overline{\text{pMSE}}^{\ell}$, yielding $K^{(1)*} = 9$ and $K^{(2)*} = 1$.

To aid interpretation, we identified, for each active shallow-layer latent variable $k = 1,\dots,9$, the three items with strongest relative loadings using $\max\{\min_{l \neq k} (\beta_{jk} - \beta_{jl}),\, 0\}$. The resulting items (Table~\ref{tab:keyitems}) are non-overlapping and thematically coherent within each node, suggesting a structured latent representation beyond the classical Big Five. The supplement includes detailed comparisons to an identifiable variational autoencoder (iVAE; \cite{ivae}), constructed to mirror the discovered latent dimension of the DDE copula. The neural network weights for both the encoder and decoder in iVAE are not nearly as interpretable as those discovered using our method.

{
\setlength{\itemsep}{0pt}
\setlength{\parsep}{0pt}
\setlength{\parskip}{0pt}
\renewcommand{\baselinestretch}{0.9}\normalsize
\begin{table}[ht]
\centering
\scriptsize
\setlength{\tabcolsep}{2pt}
\renewcommand{\arraystretch}{1.15}
\resizebox{\textwidth}{!}{%
\begin{tabular}{p{3cm} *{9}{>{\raggedright\arraybackslash}p{1.4cm}}}
\hline
& \multicolumn{9}{c}{\textbf{Binary latent feature}} \\
\cline{2-10}
& $A_1^{(1)}$ & $A_2^{(1)}$ & $A_3^{(1)}$ & $A_4^{(1)}$ & $A_5^{(1)}$ & $A_6^{(1)}$ & $A_7^{(1)}$ & $A_8^{(1)}$ & $A_9^{(1)}$ \\
\hline

\textbf{Key Item 1}
& Keep others at a distance
& Talk to a lot of different people at parties
& Follow through with my plans
& Do not like art
& Am very pleased with myself
& Do not like to draw attention to myself
& Worry about things
& Have a good word for everyone
& Shirk my duties
\\ \hline

\textbf{Key Item 2}
& Am hard to get to know
& Am the life of the party
& Carry out my plans
& Do not enjoy going to art museums
& Feel comfortable with myself
& Keep in the background
& Panic easily
& Enjoy hearing new ideas
& Do just enough work to get by
\\ \hline

\textbf{Key Item 3}
& Avoid contact with others
& Make friends easily
& Do things according to a plan
& Do not like poetry
& Seldom feel blue
& Don't talk a lot
& Get stressed out easily
& Sympathize with other's feelings
& Don't see things through
\\ \hline

\end{tabular}}
    \caption{\small Key items for each binary latent in the shallow layer of the estimated DDE copula.}
    \label{tab:keyitems}
\end{table}
}
At the deeper layer, $A_1^{(2)}$ partitions the shallow-layer factors into two groups: $(A_1^{(1)}, A_4^{(1)}, A_6^{(1)}, A_7^{(1)}, A_9^{(1)})$ load negatively, while $(A_2^{(1)}, A_3^{(1)}, A_5^{(1)}, A_8^{(1)})$ load positively in $\hat{\boldsymbol{B}}^{(2)}$. The associated items suggest a dichotomy between more withdrawn and reactive traits and those reflecting greater social engagement and self-regulation. 

Finally, we examined the relationship between latent representations and political ideology. After convergence, we extracted $\{(\boldsymbol{\hat{A}}_i^{(1)}, \boldsymbol{\hat{A}}_i^{(2)})\}_{i=1}^{1500}$ and focused on 389 individuals identifying as ``Very Liberal'' or ``Very Conservative'' (48.59\% and 51.41\%, respectively).
{
\setlength{\itemsep}{0pt}
\setlength{\parsep}{0pt}
\setlength{\parskip}{0pt}
\renewcommand{\baselinestretch}{0.9}\normalsize
\begin{figure}[h]
    \centering
    \includegraphics[width=.75\linewidth, keepaspectratio]{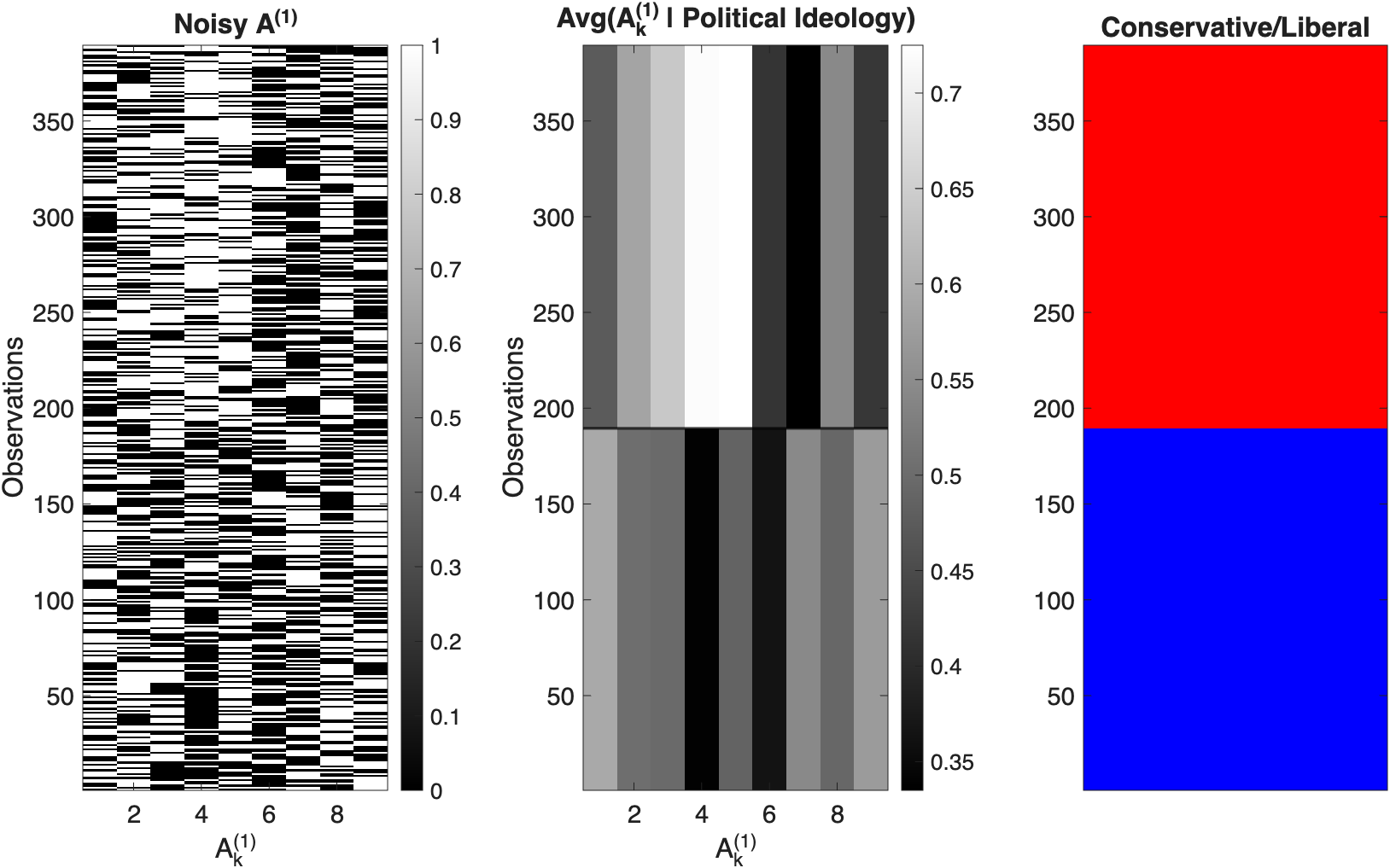}
    \caption{\small Estimated shallow latent representation $\boldsymbol A^{(1)}$ among individuals who self-report as very conservative or very liberal (left) and empirical averages of $A^{(1)}_{k}$ (middle) partitioned by political ideology (right).}
    \label{fig:ideology}
\end{figure}
}

We computed group-specific averages $\bar{A}_{k,\text{ideology}} = |n_{\text{ideology}}|^{-1} \sum_{i \in \text{ideology}} \hat{A}_{ik}^{(1)}, \ \text{ideology} \in \{\text{Very Conservative}, \text{Very Liberal}\}$ and compared these across groups (Figure~\ref{fig:ideology}). Several factors, notably $(A_4^{(1)}, A_5^{(1)}, A_7^{(1)}, A_9^{(1)})$, exhibit systematic differences by ideology. These correspond to variation in traits related to artistic inclination, reactivity, self-confidence, and work ethic. Additionally, 46\% of Very Liberal individuals have $\hat{A}_1^{(2)} = 1$, compared to 56\% among Very Conservative individuals.

\section{Discussion}\label{sec:disc}

We introduce the Bayesian Deep Discrete Encoder Copula, a generative model that accommodates arbitrary and mixed data types. By incorporating layer-wise cumulative shrinkage priors, the framework enables principled estimation of unknown layer widths within the graphical model architecture. We develop scalable initialization and MAP estimation procedures, leveraging a Monte Carlo EM algorithm with variationally motivated approximate sampling steps for latent variables, yielding tractable maximization for the DDE parameters. 

Theoretically, we establish identifiability of the canonical DDE copula from the copula law under transparent layerwise pure-child and separation conditions. The argument combines
the tensor-identifiability machinery for DDEs with a new recovery step for the normalized
Gaussian copula layer. {The rank-likelihood results then clarify the large-sample inferential
target: exact-rank posterior consistency on the quotient space for continuous margins, and
generalized-posterior concentration for tied margins under the supplementary extended-rank
contrast condition.}

In a comprehensive simulation study, the proposed method demonstrates improved recovery of layer-specific weight matrices and latent dimensions relative to existing parametric DDE approaches across a range of challenging settings. Gains are particularly pronounced in deeper layers, underscoring the importance of the copula construction, approximate Gibbs updates, and Bayesian shrinkage. We further apply the model to a moderately sized real-world dataset of personality survey responses measured on discrete ordinal scales. The analysis reveals fine-grained personality structure that departs from classical theory while also recovering coarser higher-order organization. These findings are supported by strong associations between learned binary embeddings and political ideology, with clear contrasts emerging between liberal and conservative respondents.

This work contributes to the growing literature on interpretable unsupervised learning by extending identifiable deep generative models to settings with arbitrary marginal distributions via rank likelihoods. 
This extension is more than a marginal modeling device: it preserves the interpretation of
DDE parameters as features of multivariate dependence after arbitrary monotone marginal
transformations. Consequently, the learned binary embeddings and directed edges can be
viewed as summaries of dependence rather than artifacts of a chosen parametric model for
each observed variable.
The proposed framework also provides scalable estimation procedures compatible with modern Bayesian shrinkage priors.

Several directions for future research remain. To enable uncertainty quantification without full MCMC, one approach is to pre-train the binary latent variables $\boldsymbol{A}$ via a MAP procedure \citep{mauri2025factor}, and then condition on $\boldsymbol{\hat{A}}$ when estimating $\boldsymbol{\theta}$. This suggests the approximation 
$p\{\boldsymbol \theta \mid \boldsymbol Z \in \mathcal{R}(\boldsymbol Y)\}
\approx
p\{\boldsymbol \theta \mid \boldsymbol Z \in \mathcal{R}(\boldsymbol Y), \boldsymbol{\hat{A}}\},$
which may be theoretically justified in high dimensions. More broadly, the flexibility of the DDE copula makes it well suited for extracting interpretable structure from multi-modal data. Finally, our analysis of Big Five survey data suggests that personality may exhibit richer structure than traditionally assumed; future work can investigate whether similar patterns arise across other psychological instruments.

\paragraph*{Data Availability}
The data that support the findings of this study are available from the corresponding author, J.F., upon reasonable request.
\paragraph*{Supplementary Material.}
The Supplementary Material contains proofs of the theoretical results, computational details, and additional numerical results.
\paragraph*{Funding}
Yuqi Gu's research is partially supported by NSF grant DMS-2210796

\clearpage
\setcounter{section}{0}
\setcounter{subsection}{0}
\setcounter{figure}{0}
\setcounter{table}{0}
\setcounter{equation}{0}
\setcounter{assumption}{0}
\setcounter{theorem}{0}
\setcounter{lemma}{0}
\renewcommand{\thetable}{S.\arabic{table}}
\renewcommand{\thefigure}{S.\arabic{figure}}
\renewcommand{\theequation}{S.\arabic{equation}}
\renewcommand{\theassumption}{S.\arabic{assumption}}
\renewcommand{\thetheorem}{S.\arabic{theorem}}
\renewcommand{\thelemma}{S.\arabic{lemma}}
\counterwithin{figure}{section}
\counterwithin{equation}{section}

\begingroup
\spacingset{1}
\begin{center}
{\Large\bfseries Supplementary Material for\par}
\vspace{0.8em}
{\Large\bfseries ``Identifiable Bayesian Deep Generative Copulas with Unknown Layer Widths for Data with Arbitrary Marginal Distributions''\par}
\vspace{1.5em}
\ifblinded
\else
{\large Joseph Feldman and Yuqi Gu\par}
\fi
\end{center}
\endgroup

\vspace{1.5em}
\normalsize
\doublespacing

\appendix
\noindent In Section~\ref{sec:proof}, we present the proofs of the theoretical results.
In Section~\ref{sec:alg}, we include detailed descriptions of our proposed coordinate ascent Monte Carlo EM algorithm, as well as details about the SAEM RL algorithm, hyperparameter selections, and methods for addressing latent variable permutations in our simulations.  Section \ref{sec:addsim} contains additional simulation results, completing the details excluded from the Simulation studies in the main paper, and extending the DDE copula to deeper architectures. In Section \ref{sec:iVAE}, we include results and comparisons from fitting identifiable variational autoencoders to the Big5 survey data.

\section{Proofs of Theoretical Results}\label{sec:proof}
\subsection{Proof of Theorem 1.}
The proof proceeds in four steps. First, on the copula scale, the distribution of
\(\boldsymbol U=F_{\boldsymbol Z}(\boldsymbol Z)\) is a one-layer saturated latent class model with latent variable \(\boldsymbol A^{(1)}\).
Second, a tensor-decomposition argument identifies the first-layer latent mixing proportions
\(\boldsymbol \eta^{(1)}\) and the conditional copula-scale distribution functions
\[
    H_{j\boldsymbol \alpha}(u)=P(U_j\le u\mid \boldsymbol A^{(1)}=\boldsymbol \alpha)
\]
up to the unavoidable relabeling of the binary latent coordinates. Third, the special Gaussian
structure of the first layer identifies the normalized Gaussian parameters
\((\boldsymbol B^{(1)}, \boldsymbol\gamma)\) from the functions \(H_{j\boldsymbol \alpha}\). Fourth, once the distribution of
\(\boldsymbol A^{(1)}\) is identified, the same one-layer argument is applied recursively to identify
\((\boldsymbol B^{(2)},\boldsymbol G^{(2)}),\ldots,(\boldsymbol B^{(D)},\boldsymbol G^{(D)})\) and the top-layer Bernoulli probabilities \(\pi_1,\ldots,\pi_{K^{(D)}}\).

\begin{lemma}[Copula-scale latent class representation]
\label{lem:copula-latent-class}
Let \(U_j=F_{Z_j}(Z_j)\). Then, conditional on \(\boldsymbol A^{(1)}=\boldsymbol \alpha\), the coordinates
\(U_1,\ldots,U_J\) are mutually independent, and
\begin{equation}
    C_{\boldsymbol{\Theta}}(u_1,\ldots,u_J)
    =
    \sum_{\boldsymbol \alpha\in\{0,1\}^{K^{(1)}}}
    \eta^{(1)}_{\boldsymbol \alpha}
    \prod_{j=1}^J H_{j\boldsymbol \alpha}(u_j),
    \label{eq:copula-latent-class}
\end{equation}
where
\begin{equation}
    H_{j\boldsymbol \alpha}(u)
    =
    \Phi\left\{
        \frac{Q_j(u)-\mu_{j\boldsymbol \alpha}}{\sqrt{\gamma_j}}
    \right\},
    \qquad
    Q_j(u)=F_{Z_j}^{-1}(u),
    \qquad
    0<u<1 .
    \label{eq:Hja-definition}
\end{equation}
\end{lemma}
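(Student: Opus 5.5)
The argument is a direct computation from the generative structure, using three facts: $\boldsymbol Z$ depends on the deeper layers only through $\boldsymbol A^{(1)}$, the coordinates of $\boldsymbol Z$ are conditionally independent given $\boldsymbol A^{(1)}$, and each $F_{Z_j}$ is continuous and strictly increasing.

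\textbf{Step 1: properties of $F_{Z_j}$.} Marginalizing over $\boldsymbol A^{(1)}$ gives
\[
F_{Z_j}(z)=\sum_{\boldsymbol\alpha}\eta^{(1)}_{\boldsymbol\alpha}\,\Phi\{(z-\mu_{j\boldsymbol\alpha})/\sqrt{\gamma_j}\}.
\]
By Assumption~\ref{ass:known-widths-positivity}, $\gamma_j>0$ and every $\eta^{(1)}_{\boldsymbol\alpha}>0$. So $F_{Z_j}$ is a finite Gaussian mixture with strictly positive density on $\mathbb R$. Hence it is continuous, strictly increasing, and a bijection $\mathbb R\to(0,1)$. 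Its inverse $Q_j=F_{Z_j}^{-1}$ is therefore well defined and strictly increasing on $(0,1)$. It follows that $U_j=F_{Z_j}(Z_j)$ is uniform, and $\{U_j\le u\}=\{Z_j\le Q_j(u)\}$ for $0<u<1$.

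\textbf{Step 2: conditional independence.} By \eqref{condZ} with the Gaussian choice, $Z_1,\ldots,Z_J$ are conditionally independent given $\boldsymbol A^{(1)}=\boldsymbol\alpha$, with $Z_j\mid\boldsymbol\alpha\sim N(\mu_{j\boldsymbol\alpha},\gamma_j)$. The maps $F_{Z_j}$ are deterministic, since they depend only on $\boldsymbol\Theta$ and not on the latent draw. Measurable coordinatewise functions of conditionally independent variables remain conditionally independent, so the $U_j$ are conditionally independent given $\boldsymbol A^{(1)}$. Using Step 1,
\[
H_{j\boldsymbol\alpha}(u)=P\{Z_j\le Q_j(u)\mid\boldsymbol\alpha\}=\Phi[\{Q_j(u)-\mu_{j\boldsymbol\alpha}\}/\sqrt{\gamma_j}],
\]
which is \eqref{eq:Hja-definition}.

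\textbf{Step 3: the mixture formula.} Apply the law of total probability over the finitely many values of $\boldsymbol A^{(1)}$:
\[
C_{\boldsymbol\Theta}(\boldsymbol u)=P(U_1\le u_1,\ldots,U_J\le u_J)=\sum_{\boldsymbol\alpha}\eta^{(1)}_{\boldsymbol\alpha}\prod_j H_{j\boldsymbol\alpha}(u_j).
\]
This is \eqref{eq:copula-latent-class}. The deeper layers $\boldsymbol A^{(2)},\ldots,\boldsymbol A^{(D)}$ enter only through $\eta^{(1)}$, because $\boldsymbol Z$ is conditionally independent of them given $\boldsymbol A^{(1)}$ by the factorization \eqref{eq::joint}.

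\textbf{Main obstacle.} No step is hard. The only point needing care is Step 1: the strict monotonicity and continuity of $F_{Z_j}$ are what make the event identity $\{U_j\le u\}=\{Z_j\le Q_j(u)\}$ exact and the inverse $Q_j$ well defined. Positivity of $\eta^{(1)}$ and of $\gamma_j$ is what guarantees this.
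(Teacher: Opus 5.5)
Your proof is correct and follows the paper's argument essentially step for step. You use continuity and strict monotonicity of the Gaussian-mixture CDF $F_{Z_j}$ to make $\{U_j\le u\}=\{Z_j\le Q_j(u)\}$ exact, transfer conditional independence of the $Z_j$ given $\boldsymbol A^{(1)}$ to the $U_j$, and mix over $\boldsymbol\alpha$. (A minor point: Step 1 does not need positivity of every $\eta^{(1)}_{\boldsymbol\alpha}$, since any finite normal mixture with $\gamma_j>0$ already has a strictly positive density.)
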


\begin{proof}
Since \(\gamma_j>0\) and \(Z_j\) is a finite mixture of normal distributions with positive
component variances, \(F_{Z_j}\) is continuous and strictly increasing. Hence
\(Q_j=F_{Z_j}^{-1}\) is well-defined on \((0,1)\). Conditional on \(\boldsymbol A^{(1)}=\boldsymbol \alpha\),
the variables \(Z_1,\ldots,Z_J\) are independent normal variables. Therefore,
\[
    P(U_j\le u\mid \boldsymbol A^{(1)}=\boldsymbol \alpha)
    =
    P(Z_j\le Q_j(u)\mid \boldsymbol  A^{(1)}=\boldsymbol \alpha)
    =
    \Phi\left\{
        \frac{Q_j(u)-\mu_{j\alpha}}{\sqrt{\gamma_j}}
    \right\}.
\]
Taking the product over \(j\) conditional on \(\boldsymbol A^{(1)}=\boldsymbol \alpha\), and then mixing over
\(\boldsymbol \alpha\), gives \eqref{eq:copula-latent-class}.
\end{proof}

\begin{lemma}[Tensor identification of the shallowest latent layer]
\label{lem:tensor-identification-first-layer}
Under Assumptions~\ref{ass:known-widths-positivity}--\ref{ass:pure-separation}, the
copula law \(C_{\boldsymbol{\Theta}}\) identifies
\[
    \boldsymbol \eta^{(1)}
    =
    \{\eta^{(1)}_{\boldsymbol \alpha}: \boldsymbol \alpha\in\{0,1\}^{K^{(1)}}\}
\]
and the conditional distribution functions
\[
    \{H_{j\boldsymbol \alpha}:j=1,\ldots,J,\ \boldsymbol \alpha\in\{0,1\}^{K^{(1)}}\},
\]
up to a permutation and possible \(0/1\) relabeling of the coordinates of \(\boldsymbol A^{(1)}\).
The orientation condition \eqref{eq:orientation-condition}, together with
Lemma~\ref{lem:gaussian-contrast-recovery}, removes the \(0/1\) relabeling.
\end{lemma}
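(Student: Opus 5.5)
Lemma~\ref{lem:copula-latent-class} writes \(C_{\boldsymbol\Theta}\) as a finite mixture of product measures on \((0,1)^J\), with \(2^{K^{(1)}}\) latent classes. I would reduce it to a three-way tensor and apply Kruskal's theorem, following the DDE/Bayesian Pyramid arguments.

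\textbf{Step 1: three groups of coordinates.} Split \(\{1,\ldots,J\}\) into
\[
\mathcal J_1=\{r^{(1)}_{k,1}\}_k,\qquad \mathcal J_2=\{r^{(1)}_{k,2}\}_k,\qquad \mathcal J_3=\{1,\ldots,J\}\setminus\mathcal P_1 .
\]
Conditional independence gives three independent blocks given \(\boldsymbol A^{(1)}\).

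\textbf{Step 2: discretize the continuous laws.} Pick thresholds \(0<t_1<\cdots<t_m<1\) and replace each \(U_j\) by the indicators \(1\{U_j\le t_s\}\). The law of the discretized vector is a functional of \(C_{\boldsymbol\Theta}\), and it is again a latent class model. Its block-\(b\) conditional probability matrix \(M_b\) has size \(\prod_{j\in\mathcal J_b}(\text{cells})\times 2^{K^{(1)}}\), with entries built from the \(H_{j\boldsymbol\alpha}\). The observable three-way tensor is \(\sum_{\boldsymbol\alpha}\eta^{(1)}_{\boldsymbol\alpha}\, M_{1,\boldsymbol\alpha}\otimes M_{2,\boldsymbol\alpha}\otimes M_{3,\boldsymbol\alpha}\).

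\textbf{Step 3: verify Kruskal ranks.} For \(b=1,2\), \(M_b\) is a Kronecker product over \(k\) of \(2\times2\) blocks \(\bigl(H_{r,0}(t),H_{r,1}(t)\bigr)\) for the pure child \(r\) of \(A^{(1)}_k\). Here \(H_{r,1}\neq H_{r,0}\), since \(\beta^{(1)}_{rk}\neq0\) shifts the Gaussian mean. Hence each factor has full rank 2 for a suitable threshold, and \(M_1,M_2\) have full column rank \(2^{K^{(1)}}\).

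For \(M_3\) I need Kruskal rank at least 2, i.e.\ no two columns proportional. Since columns are probability vectors, this means no two equal columns. By the separation condition \eqref{eq:separation-condition}, for \(\boldsymbol\alpha\ne\boldsymbol\alpha'\) some \(r\notin\mathcal P_1\) has \(\mu_{r\boldsymbol\alpha}\neq\mu_{r\boldsymbol\alpha'}\). Then \(H_{r\boldsymbol\alpha}\neq H_{r\boldsymbol\alpha'}\) as functions, since they are normal CDFs composed with the strictly increasing \(Q_r\), sharing \(\gamma_r\) but with different means. Taking the union over all pairs of finitely many distinguishing thresholds gives distinct columns.

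Kruskal's inequality \(2^{K^{(1)}}+2^{K^{(1)}}+2\ge 2\cdot 2^{K^{(1)}}+2\) then identifies \(\boldsymbol\eta^{(1)}\) and the columns of each \(M_b\) up to a common permutation of the \(2^{K^{(1)}}\) classes. Refining the thresholds (any finite set, so by monotone class all of \(H_{j\boldsymbol\alpha}(u)\)) recovers each \(H_{j\boldsymbol\alpha}\) up to the same class permutation. Positivity of \(\eta\) (Assumption~\ref{ass:known-widths-positivity}) is used here.

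\textbf{Step 4 (main obstacle): from a class permutation to coordinate relabeling.} An arbitrary permutation of \(\{0,1\}^{K^{(1)}}\) must be shown to come from permuting and flipping binary coordinates.

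Through the pure children, each class \(\boldsymbol\alpha\) is tagged by the vector \(\bigl(H_{r^{(1)}_{k,1}\boldsymbol\alpha}\bigr)_k\). Each entry takes exactly two distinct values, depending only on \(\alpha_k\). So the classes form a product grid \(\prod_k\{\text{two values}\}\), whose coordinates are indexed by the pure-child variables; these are observed coordinates, so they are fixed.

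The remaining freedom is (i) which latent label is attached to each pure-child pair, giving the permutation \(\sigma^{(1)}\), and (ii) which of the two values codes 1, giving the 0/1 flip.

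The flip is resolved later: Lemma~\ref{lem:gaussian-contrast-recovery} turns \(H\) into means \(\mu_{j\boldsymbol\alpha}\), and the orientation sign \(\sum_\ell\beta^{(1)}_{\ell k}>0\) then selects the coding. I expect making this Kronecker/grid-structure bookkeeping rigorous to be the delicate part.
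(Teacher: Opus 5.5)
Your proposal is correct and follows the paper's proof essentially step for step. Both arguments use the same ingredients in the same order: the two pure-child groups plus the remaining coordinates, a finite discretization, full column rank for the first two factor matrices via Kronecker products of rank-two blocks, the separation condition giving Kruskal rank at least two for the third, Kruskal's theorem with normalization removing the scaling, and finally the pure-child bipartitions. The paper settles your Step~4 with a one-line assertion. To make that step airtight, argue through the competing parameter rather than by saying the observed coordinates are fixed. Take a pure child $r$ of $A^{(1)}_k$ under $\boldsymbol\Theta$. The matched competitor mean $\widetilde\mu_{r\cdot}$ is additive in $\widetilde{\boldsymbol\alpha}$ and takes exactly two values on $\{0,1\}^{K^{(1)}}$, and an additive function on the cube with exactly two values depends on a single coordinate. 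So the class bijection carries each bipartition $\{\alpha_k=0\},\{\alpha_k=1\}$ onto a coordinate bipartition of $\widetilde{\boldsymbol\alpha}$, and is therefore a coordinate permutation composed with flips.
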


\begin{proof}
Let \(K=K^{(1)}\). For each \(j\), choose a finite collection of measurable subsets of
\((0,1)\),
\[
    \mathcal S_j=\{S_{j1},\ldots,S_{j\kappa_j}\},
    \qquad S_{j\kappa_j}=(0,1),
\]
and define
\[
    M_j(s,\alpha)
    =
    P(U_j\in S_{js}\mid A^{(1)}=\alpha).
\]
For a pure child \(j=r^{(1)}_{k,a}\) of \(A_k^{(1)}\), the conditional distribution of \(U_j\)
depends only on \(\alpha_k\). Since \(\beta^{(1)}_{jk}\ne0\), the two conditional distributions
corresponding to \(\alpha_k=0\) and \(\alpha_k=1\) are distinct. Hence there exists a measurable
set \(S_j\subset(0,1)\) such that
\[
    P(U_j\in S_j\mid \alpha_k=0)
    \ne
    P(U_j\in S_j\mid \alpha_k=1).
\]
Using the two rows \(S_j\) and \((0,1)\), the associated \(2\times2\) conditional-probability
matrix has rank two.

Let
\[
    I_1=\{r^{(1)}_{k,1}:k=1,\ldots,K\},
    \qquad
    I_2=\{r^{(1)}_{k,2}:k=1,\ldots,K\},
    \qquad
    I_3=\{1,\ldots,J\}\setminus(I_1\cup I_2).
\]
Construct conditional-probability matrices \(N_1,N_2,N_3\) whose columns are indexed by
\(\alpha\in\{0,1\}^K\) and whose rows are indexed by all Cartesian products of the finite
sets \(\mathcal S_j\) over \(j\in I_a\), \(a=1,2,3\). For example,
\[
    N_1(\xi,\boldsymbol\alpha)
    =
    P(U_{I_1}\in S_\xi\mid \boldsymbol A^{(1)}=\boldsymbol \alpha)
    =
    \prod_{j\in I_1} P(U_j\in S_{\xi_j}\mid \boldsymbol A^{(1)}=\boldsymbol \alpha),
\]
where the product follows from conditional independence.

Because each \(I_a\), \(a=1,2\), contains one pure child for each latent coordinate, \(N_1\)
and \(N_2\) are Kronecker products, up to column permutation, of \(K\) rank-two matrices.
Therefore
\[
    \operatorname{rank}(N_1)=\operatorname{rank}(N_2)=2^K.
\]
Since each \(N_a\) is a square \(2^K\times 2^K\) matrix of full column rank, every subset of
at most \(2^K\) columns is linearly independent, so the Kruskal rank equals the matrix rank:
\(k_{N_1}=k_{N_2}=2^K\).

Next, for \(d=1\) we have \(K^{(0)}=J\) and \(\mathcal P_1=I_1\cup I_2\), so
\(\{1,\ldots,K^{(0)}\}\setminus\mathcal P_1 = I_3\); the separation condition in
Assumption~\ref{ass:pure-separation} therefore applies to variables in \(I_3\).
For every \(\boldsymbol \alpha\ne\boldsymbol \alpha'\), some \(r\in I_3\) satisfies
\(\mu_{r\boldsymbol\alpha}\ne\mu_{r\boldsymbol\alpha'}\), i.e., the conditional means of \(Z_r\)
differ. Since \(\mathcal{N}(\mu_{r\boldsymbol\alpha},\gamma_r)\ne\mathcal{N}(\mu_{r\boldsymbol\alpha'},\gamma_r)\)
whenever \(\mu_{r\boldsymbol\alpha}\ne\mu_{r\boldsymbol\alpha'}\), this gives distinct
conditional distributions for that coordinate. By adding finitely many separating measurable
sets to the collections \(\mathcal S_j\), we can ensure that the corresponding columns of
\(N_3\) are distinct for every pair \(\boldsymbol \alpha\ne \boldsymbol \alpha'\). Because the last row of \(N_3\) is the
all-one row, any two distinct columns of \(N_3\) are linearly independent. Hence the Kruskal
rank of \(N_3\) is at least two.

The joint probabilities of the discretized copula-scale variables form a three-way tensor
\[
    \mathcal T
    =
    [\boldsymbol N_1\operatorname{diag}(\boldsymbol \eta^{(1)}),\,\boldsymbol N_2,\,\boldsymbol N_3]
    =
    \sum_{\boldsymbol \alpha\in\{0,1\}^K}
    \eta^{(1)}_{\boldsymbol \alpha}
    \boldsymbol N_{1,\boldsymbol \alpha}\circ \boldsymbol N_{2,\boldsymbol \alpha}\circ \boldsymbol N_{3,\boldsymbol \alpha},
\]
where \(\boldsymbol N_{a,\alpha}\) denotes the \(\boldsymbol \alpha\)-th column of \(\boldsymbol N_a\), and \(\circ\) denotes the
outer product. The Kruskal ranks satisfy
\[
    k_{N_1}+k_{N_2}+k_{N_3}
    \ge
    2^K+2^K+2
    =
    2(2^K)+2 .
\]
By Kruskal's uniqueness theorem for three-way tensor decompositions \citep{kruskal1977three},
this decomposition is unique up to a simultaneous permutation and scaling of the \(2^K\) components. The
scaling indeterminacy is removed by the fact that the conditional-probability matrices have
a row equal to one; therefore \(\eta^{(1)}\) and all finite-dimensional conditional probabilities
are identified up to a common permutation of the latent configurations.

Varying the finite collections \(\mathcal S_j\) over a countable generating class of Borel
sets in \((0,1)\) identifies the full conditional distribution functions \(H_{j\boldsymbol \alpha}\). Finally,
the two pure-child groups identify the \(K\) binary coordinate bipartitions of the \(2^K\)
latent configurations. Thus the remaining ambiguity is only a permutation of latent coordinates
and an independent \(0/1\) relabeling of each coordinate.
\end{proof}

\begin{lemma}[Recovery of the normalized Gaussian first layer]
\label{lem:gaussian-contrast-recovery}
Suppose \(\eta^{(1)}\) and \(\{H_{j\alpha}\}\) are known with the latent configurations indexed
up to coordinate permutation. Under the canonical normalization
\eqref{eq:canonical-normalization}, the first-layer parameters $\boldsymbol B^{(1)}$, $\boldsymbol  G^{(1)}$, $\boldsymbol \gamma$
are identified up to the same coordinate permutation.
\end{lemma}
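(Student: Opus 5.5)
The plan is to eliminate the unknown quantile function \(Q_j=F_{Z_j}^{-1}\) by working on the probit scale. By Lemma~\ref{lem:copula-latent-class}, for every \(j\), every \(\boldsymbol\alpha\), and every \(u\in(0,1)\),
\[
    \Phi^{-1}\{H_{j\boldsymbol\alpha}(u)\}
    =
    \frac{Q_j(u)-\mu_{j\boldsymbol\alpha}}{\sqrt{\gamma_j}} .
\]
The left-hand side is known, because \(H_{j\boldsymbol\alpha}\) is identified by Lemma~\ref{lem:tensor-identification-first-layer} and takes values in \((0,1)\). Subtracting the identity at a fixed reference configuration \(\boldsymbol\alpha_0\) (say \(\boldsymbol\alpha_0=\boldsymbol 0\)) cancels \(Q_j(u)\):
\[
    \Phi^{-1}\{H_{j\boldsymbol\alpha_0}(u)\}-\Phi^{-1}\{H_{j\boldsymbol\alpha}(u)\}
    =
    \frac{\mu_{j\boldsymbol\alpha}-\mu_{j\boldsymbol\alpha_0}}{\sqrt{\gamma_j}} .
\]
The left-hand side is therefore constant in \(u\). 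Hence the standardized contrasts \(\delta_{j\boldsymbol\alpha}=(\mu_{j\boldsymbol\alpha}-\mu_{j\boldsymbol\alpha_0})/\sqrt{\gamma_j}\) are identified for all \(j\) and \(\boldsymbol\alpha\). This step is where the copula problem becomes a Gaussian location problem. I expect it to be the main conceptual obstacle, since a priori both \(Q_j\) and \((\mu,\gamma)\) are unknown and the link between them is nonlinear.

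Next I use the canonical normalization \eqref{eq:canonical-moment-equations}, with \(\boldsymbol\eta^{(1)}\) known from Lemma~\ref{lem:tensor-identification-first-layer}, to pin down the scale and the shift. Write \(\mu_{j\boldsymbol\alpha}=\mu_{j\boldsymbol\alpha_0}+\sqrt{\gamma_j}\,\delta_{j\boldsymbol\alpha}\).
\begin{itemize}
\item The mean-zero equation gives \(\mu_{j\boldsymbol\alpha_0}=-\sqrt{\gamma_j}\,\bar\delta_j\), where \(\bar\delta_j=\sum_{\boldsymbol\alpha}\eta^{(1)}_{\boldsymbol\alpha}\delta_{j\boldsymbol\alpha}\). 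Consequently \(\mu_{j\boldsymbol\alpha}=\sqrt{\gamma_j}(\delta_{j\boldsymbol\alpha}-\bar\delta_j)\).
\item Substituting into the unit-variance equation gives \(\gamma_j\{1+v_j\}=1\), where \(v_j=\sum_{\boldsymbol\alpha}\eta^{(1)}_{\boldsymbol\alpha}(\delta_{j\boldsymbol\alpha}-\bar\delta_j)^2\) is a known quantity. Hence \(\gamma_j=1/(1+v_j)\) is identified.
\item It follows that every \(\mu_{j\boldsymbol\alpha}\) is identified.
\end{itemize}
Since \(\eta^{(1)}_{\boldsymbol\alpha}>0\) for all \(\boldsymbol\alpha\) by Assumption~\ref{ass:known-widths-positivity}, all these quantities are well defined.

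Then I recover \(\boldsymbol B^{(1)}\) from the additive structure \(\mu_{j\boldsymbol\alpha}=\beta^{(1)}_{j0}+\sum_k\beta^{(1)}_{jk}\alpha_k\). Evaluating at \(\boldsymbol 0\) and at the unit vectors \(\boldsymbol e_k\) gives
\[
    \beta^{(1)}_{j0}=\mu_{j\boldsymbol 0},
    \qquad
    \beta^{(1)}_{jk}=\mu_{j\boldsymbol e_k}-\mu_{j\boldsymbol 0}.
\]
By faithfulness (Assumption~\ref{ass:faithfulness-orientation}), \(\boldsymbol G^{(1)}\) is the support pattern of the identified \(\boldsymbol B^{(1)}\).

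Finally, I deal with the residual \(0/1\) relabeling left by Lemma~\ref{lem:tensor-identification-first-layer}. Replacing \(\alpha_k\) by \(1-\alpha_k\) changes column \(k\) of \(\boldsymbol B^{(1)}\) from \(\beta^{(1)}_{\cdot k}\) to \(-\beta^{(1)}_{\cdot k}\) and shifts the intercepts. It therefore flips the sign of \(\sum_j\beta^{(1)}_{jk}\), and exactly one of the two labelings satisfies the orientation condition \eqref{eq:orientation-condition}. This leaves only the coordinate permutation inherited from \(\boldsymbol\eta^{(1)}\) and \(\{H_{j\boldsymbol\alpha}\}\), and that permutation acts simultaneously on the columns of \(\boldsymbol B^{(1)}\) and \(\boldsymbol G^{(1)}\), as claimed.
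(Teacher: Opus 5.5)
Your proposal is correct and follows the paper's proof essentially step for step. You take the probit transform $\Phi^{-1}\{H_{j\boldsymbol\alpha}(u)\}$, difference against $\boldsymbol\alpha=\boldsymbol 0$ to cancel $Q_j(u)$, use the mean-zero and unit-variance constraints to obtain $\gamma_j=1/(1+v_j)$ and the intercept, invoke faithfulness for $\boldsymbol G^{(1)}$, and use the orientation condition to remove the $0/1$ relabeling; the only cosmetic difference is that you recover all cell means $\mu_{j\boldsymbol\alpha}$ before using additivity, whereas the paper first reads off $\beta^{(1)}_{jk}/\sigma_j$ at $\boldsymbol\alpha=\boldsymbol e_k$, and the resulting formulas coincide.
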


\begin{proof}
Fix \(j\). Let \(\sigma_j=\sqrt{\gamma_j}\). From \eqref{eq:Hja-definition},
\[
    \Phi^{-1}\{H_{j\alpha}(u)\}
    =
    \frac{Q_j(u)-\mu_{j\alpha}}{\sigma_j}.
\]
Taking the difference between \(\boldsymbol \alpha=\boldsymbol 0_K\) and a general \(\boldsymbol \alpha\) gives
\begin{equation}
    \Phi^{-1}\{H_{j\boldsymbol 0}(u)\}
    -
    \Phi^{-1}\{H_{j\boldsymbol \alpha}(u)\}
    =
    \frac{Q_j(u)-\mu_{j\boldsymbol 0}}{\sigma_j}
    -
    \frac{Q_j(u)-\mu_{j\boldsymbol \alpha}}{\sigma_j}
    =
    \frac{\mu_{j\boldsymbol \alpha}-\mu_{j\boldsymbol 0}}{\sigma_j}
    =
    \sum_{k=1}^{K^{(1)}} \frac{\beta^{(1)}_{jk}}{\sigma_j}\alpha_k .
    \label{eq:standardized-contrast}
\end{equation}
The \(Q_j(u)\) terms cancel, so the left-hand side is constant in \(u\) and known
(since \(\eta^{(1)}\) and \(\{H_{j\boldsymbol\alpha}\}\) are identified by Lemma~\ref{lem:tensor-identification-first-layer}).
In particular, taking
\(\boldsymbol \alpha=\boldsymbol e_k\), the \(k\)-th canonical basis vector, identifies
\[
    \widetilde\beta^{(1)}_{jk}
    :=
    \frac{\beta^{(1)}_{jk}}{\sigma_j},
    \qquad k=1,\ldots,K^{(1)} .
\]
Let
\[
    \widetilde m_j(\boldsymbol A^{(1)})
    =
    \sum_{k=1}^{K^{(1)}}\widetilde\beta^{(1)}_{jk} A_k^{(1)},
    \qquad
    \bar m_j
    =
    E_{\eta^{(1)}}\{\widetilde m_j(\boldsymbol A^{(1)})\},
    \qquad
    v_j
    =
    \operatorname{Var}_{\eta^{(1)}}\{\widetilde m_j(\boldsymbol A^{(1)})\}.
\]
Because \(\mu_{j\alpha}=\beta^{(1)}_{j0}+\sigma_j\widetilde m_j(\alpha)\), the canonical mean
constraint \(E(Z_j)=0\) gives
\[
    \beta^{(1)}_{j0}
    =
    -\sigma_j \bar m_j .
\]
The canonical variance constraint gives, by the law of total variance,
\[
    1
    =
    \operatorname{Var}(Z_j)
    =
    \underbrace{\gamma_j}_{=\,E[\operatorname{Var}(Z_j\mid A^{(1)})]}
    +
    \underbrace{\operatorname{Var}\{\mu_{j,A^{(1)}}\}}_{=\,\operatorname{Var}(E[Z_j\mid A^{(1)}])}
    =
    \sigma_j^2
    +
    \sigma_j^2 v_j
    =
    \sigma_j^2(1+v_j).
\]
Thus
\[
    \sigma_j^2=\gamma_j=\frac{1}{1+v_j},
    \qquad
    \beta^{(1)}_{jk}
    =
    \sigma_j\widetilde\beta^{(1)}_{jk},
    \qquad
    \beta^{(1)}_{j0}
    =
    -\sigma_j\bar m_j .
\]
Therefore \(\boldsymbol B^{(1)}\) and \(\gamma\) are identified. Faithfulness then identifies
\[
    g^{(1)}_{jk}=1\{\beta^{(1)}_{jk}\ne0\}.
\]
If a coordinate has been relabeled by \(A_k^{(1)}\mapsto 1-A_k^{(1)}\), the recovered
coefficients in the corresponding column change sign, with an accompanying intercept
adjustment. The orientation condition \eqref{eq:orientation-condition} selects the unique
orientation for which the column sum is positive. This removes the \(0/1\) relabeling
ambiguity.
\end{proof}

\begin{lemma}[Layerwise recovery of the deeper DDE]
\label{lem:layerwise-dde-recovery}
Suppose that the marginal distribution
\[
    \eta^{(d-1)}_{\boldsymbol \alpha}
    =
    P(\boldsymbol A^{(d-1)}=\boldsymbol \alpha),
    \qquad
    \boldsymbol \alpha\in\{0,1\}^{K^{(d-1)}},
\]
has been identified, with the coordinates of \(\boldsymbol A^{(d-1)}\) labeled up to permutation. If
Assumptions~\ref{ass:known-widths-positivity}--\ref{ass:pure-separation} hold for layer
\(d\), then \(\boldsymbol B^{(d)}\), \(\boldsymbol G^{(d)}\), and the marginal distribution \(\boldsymbol \eta^{(d)}\) of \(\boldsymbol A^{(d)}\)
are identified up to permutation of the coordinates of \(\boldsymbol A^{(d)}\). For \(d=D\), this also
identifies the top-layer Bernoulli probabilities \(p\).
\end{lemma}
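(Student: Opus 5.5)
We treat layer $d$ as a one-layer latent class model in which the "observed" vector is the binary vector $\boldsymbol A^{(d-1)}\in\{0,1\}^{K^{(d-1)}}$. Its law $\boldsymbol\eta^{(d-1)}$ is known up to a coordinate permutation. Given $\boldsymbol A^{(d)}=\boldsymbol\alpha$, the coordinates $A^{(d-1)}_\ell$ are independent Bernoulli variables with success probability $g_{\text{logistic}}(\beta^{(d)}_{\ell0}+\sum_k\beta^{(d)}_{\ell k}\alpha_k)$. Hence
\[
\eta^{(d-1)}_{\boldsymbol a}=\sum_{\boldsymbol\alpha}\eta^{(d)}_{\boldsymbol\alpha}\prod_{\ell}\theta_{\ell\boldsymbol\alpha}^{a_\ell}(1-\theta_{\ell\boldsymbol\alpha})^{1-a_\ell},
\]
which is the binary analogue of Lemma~\ref{lem:copula-latent-class}. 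The permutation ambiguity in the labels of $\boldsymbol A^{(d-1)}$ does no harm. The pure-child sets, the separation condition and the final parameter values are all defined relative to whatever labeling was fixed at layer $d-1$, so every conclusion is simply expressed in that labeling.

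The main step repeats the argument of Lemma~\ref{lem:tensor-identification-first-layer} with $\mathcal S_\ell=\{\{1\},\{0,1\}\}$. For $\ell$ ranging over the first and second pure-child sets $I_1=\{r^{(d)}_{k,1}\}$ and $I_2=\{r^{(d)}_{k,2}\}$, build the matrices $N_1$ and $N_2$. Each is, up to column permutation, a Kronecker product of $2\times2$ matrices
\[
\begin{pmatrix}\theta_{\ell,0}&\theta_{\ell,1}\\1&1\end{pmatrix}.
\]
These factors are invertible because $\beta^{(d)}_{\ell k}\neq0$ and the logistic function is strictly monotone. So $N_1$ and $N_2$ have full rank $2^{K^{(d)}}$. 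For the remaining indices $I_3=\{1,\dots,K^{(d-1)}\}\setminus\mathcal P_d$, the separation condition \eqref{eq:separation-condition} and the same strict monotonicity show that for each pair $\boldsymbol\alpha\neq\boldsymbol\alpha'$ some $r\in I_3$ has $\theta_{r\boldsymbol\alpha}\neq\theta_{r\boldsymbol\alpha'}$. Including the all-ones row therefore gives Kruskal rank $k_{N_3}\ge2$. Kruskal's theorem then identifies $\boldsymbol\eta^{(d)}$ and all $\theta_{\ell\boldsymbol\alpha}$ up to a permutation of the $2^{K^{(d)}}$ configurations.

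The pure-child structure converts this to a coordinate permutation with a $0/1$ flip per coordinate. The configurations split into two halves according to whether $\theta_{r_{k,1},\boldsymbol\alpha}$ takes its larger or smaller value; this recovers the $K^{(d)}$ binary coordinates. This step needs the most care, because we must check that the recovered partitions really are those of product coordinates.

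Next, apply the logit to the $\theta_{\ell\boldsymbol\alpha}$. Since $\operatorname{logit}\theta_{\ell\boldsymbol\alpha}=\beta^{(d)}_{\ell0}+\sum_k\beta^{(d)}_{\ell k}\alpha_k$ exactly, no scale normalization is needed here, unlike in Lemma~\ref{lem:gaussian-contrast-recovery}. Contrasts against $\boldsymbol\alpha=\boldsymbol 0$ and $\boldsymbol e_k$ give $\beta^{(d)}_{\ell k}$, and $\boldsymbol\alpha=\boldsymbol0$ gives the intercept. Faithfulness then yields $\boldsymbol G^{(d)}$. Flipping $A^{(d)}_k\mapsto1-A^{(d)}_k$ negates column $k$ of $\boldsymbol B^{(d)}$, and the orientation condition \eqref{eq:orientation-condition} fixes the sign because the column sum is nonzero. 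This removes the flip ambiguity.

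Finally, for $d=D$ the top-layer coordinates are independent, so $\pi_k=P(A^{(D)}_k=1)$ is read off as a marginal of the identified $\boldsymbol\eta^{(D)}$. As noted above, the main obstacle is the coordinate-recovery step, together with making sure that the Kruskal rank bound for $N_3$ holds using only the separation condition.
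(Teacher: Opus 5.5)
Your proposal is correct and follows the paper's proof essentially step for step: the latent class reformulation with $\boldsymbol A^{(d-1)}$ treated as observed, Kruskal's theorem with two full-rank pure-child matrices and a separation matrix of Kruskal rank at least two, logit contrasts at $\boldsymbol 0$ and $\boldsymbol e_k$, faithfulness, orientation, and $\pi_k=\sum_{\boldsymbol\alpha:\alpha_k=1}\eta^{(D)}_{\boldsymbol\alpha}$. The coordinate-recovery step you flag is only asserted in the paper too, and it can be closed as follows: let $\sigma$ be the configuration bijection from Kruskal's theorem and take $r=r^{(d)}_{k,1}$; then the alternative predictor $\tilde\beta_{r0}+\sum_m\tilde\beta_{rm}\sigma(\boldsymbol\alpha)_m$ equals $\beta_{r0}+\beta_{rk}\alpha_k$, so it takes exactly two values over the cube, which forces exactly one nonzero entry $\tilde\beta_{r m_k}$ (two nonzero entries $c_1,c_2$ would give three distinct values among $0,c_1,c_2,c_1+c_2$); hence $\sigma(\boldsymbol\alpha)_{m_k}$ is identically $\alpha_k$ or identically $1-\alpha_k$, and $k\mapsto m_k$ is injective because a single coordinate cannot depend only on $\alpha_k$ and also only on $\alpha_{k'}$ for $k\neq k'$.
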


\begin{proof}
Treat \(\boldsymbol A^{(d-1)}\) as the observed binary vector and \(\boldsymbol A^{(d)}\) as the latent vector in a
one-layer saturated latent class model:
\[
    P(\boldsymbol A^{(d-1)}=\boldsymbol x)
    =
    \sum_{\boldsymbol \alpha\in\{0,1\}^{K^{(d)}}}
    \eta^{(d)}_{\boldsymbol \alpha}
    \prod_{r=1}^{K^{(d-1)}}
    P(A^{(d-1)}_r=x_r\mid \boldsymbol A^{(d)}=\boldsymbol \alpha).
\]
For each child \(r\),
\[
    P(A^{(d-1)}_r=1\mid \boldsymbol A^{(d)}=\boldsymbol \alpha)
    =
    \operatorname{logit}^{-1}
    \left(
        \beta^{(d)}_{r0}
        +
        \sum_{k=1}^{K^{(d)}}\beta^{(d)}_{rk}\alpha_k
    \right).
\]
The same tensor argument as in Lemma~\ref{lem:tensor-identification-first-layer} applies.
The two pure-child groups give two full-column-rank matrices, and the separation condition
gives a third matrix with Kruskal rank at least two. Hence the mixing proportions
\(\eta^{(d)}\) and all conditional Bernoulli probabilities are identified up to latent-coordinate
permutation and \(0/1\) relabeling.

Once the conditional probabilities are identified, the coefficients are recovered by logits:
\[
    \operatorname{logit}
    P(A^{(d-1)}_r=1\mid \boldsymbol A^{(d)}=\boldsymbol \alpha)
    =
    \beta^{(d)}_{r0}
    +
    \sum_{k=1}^{K^{(d)}}\beta^{(d)}_{rk}\alpha_k.
\]
Taking \(\alpha=0\) identifies \(\beta^{(d)}_{r0}\), and taking \(\alpha=e_k\) identifies
\[
    \beta^{(d)}_{rk}
    =
    \operatorname{logit}
    P(A^{(d-1)}_r=1\mid \boldsymbol A^{(d)}=\boldsymbol e_k)
    -
    \operatorname{logit}
    P(A^{(d-1)}_r=1\mid \boldsymbol A^{(d)}=\boldsymbol 0).
\]
Faithfulness identifies \(\boldsymbol G^{(d)}\), and the orientation condition removes the \(0/1\) relabeling.
When \(d=D\), the top layer has independent Bernoulli coordinates, so after identifying
\(\boldsymbol \eta^{(D)}\),
\[
    p_k
    =
    P(A^{(D)}_k=1)
    =
    \sum_{\boldsymbol \alpha:\alpha_k=1}\eta^{(D)}_{\boldsymbol \alpha},
    \qquad k=1,\ldots,K^{(D)} .
\]
\end{proof}

\begin{proof}[Proof of Theorem~\ref{thm:strict-identifiability-copuladde}]
By Lemma~\ref{lem:copula-latent-class}, the copula law \(C_{\boldsymbol{\Theta}}\) is a one-layer saturated
latent class model on the copula scale, with latent variable \(\boldsymbol A^{(1)}\). By
Lemma~\ref{lem:tensor-identification-first-layer}, \(C_{\boldsymbol{\Theta}}\) identifies
\(\boldsymbol \eta^{(1)}\) and \(\{H_{j\boldsymbol \alpha}\}\), up to latent-coordinate relabeling. By
Lemma~\ref{lem:gaussian-contrast-recovery}, the canonical Gaussian first-layer parameters
\(\boldsymbol B^{(1)}\), \(\boldsymbol G^{(1)}\), and \(\boldsymbol \gamma\) are identified up to permutation of the coordinates of
\(\boldsymbol A^{(1)}\).

Now \(\boldsymbol \eta^{(1)}\), the full marginal distribution of \(\boldsymbol A^{(1)}\), is known. Applying
Lemma~\ref{lem:layerwise-dde-recovery} with \(d=2\) identifies \(\boldsymbol B^{(2)}\), \(\boldsymbol G^{(2)}\), and
\(\boldsymbol \eta^{(2)}\), up to permutation of the coordinates of \(\boldsymbol A^{(2)}\). Repeating this argument
for \(d=3,\ldots,D\) identifies every deeper coefficient matrix and graphical matrix. At the
top layer, Lemma~\ref{lem:layerwise-dde-recovery} identifies the independent Bernoulli
probabilities \(\boldsymbol p\). Therefore, any two canonical parameters inducing the same copula law
must differ only by latent-variable permutations within layers; that is,
\(\boldsymbol{\Theta}\sim_K\widetilde{\boldsymbol{\Theta}}\).
\end{proof}

\subsection{Proof of Theorem 2}
The proof is organized through four lemmas.

\begin{lemma}[Nuisance-free exact rank likelihood under continuous margins]
\label{lem:exact-rank-likelihood-continuous}
If each \(F_{Y_j}\) is continuous, then \(L_n^R(\boldsymbol{\Theta};\boldsymbol Y^{(n)})\)
is the exact likelihood of the observed coordinatewise ranks \(R_n\). Moreover, it depends
on \(\boldsymbol{\Theta}\) only through the copula \(C_{\boldsymbol{\Theta}}\).
\end{lemma}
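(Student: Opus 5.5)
We reduce the statement to two elementary facts: the rank data are a deterministic function of the copula-scale sample, and, without ties, the rank-consistent set is exactly the preimage of the observed rank configuration.

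\emph{Step 1: ranks as functions of copula-scale data.} Write \(Y_{ij}=F_{Y_j}^{-1}(U_{ij})\) with \(\boldsymbol U_i\) i.i.d.\ from \(C_{\boldsymbol{\Theta}}\). Since \(F_{Y_j}^{-1}\) is nondecreasing, \(U_{ij}<U_{\ell j}\) implies \(Y_{ij}\le Y_{\ell j}\). When \(F_{Y_j}\) is continuous, the law of \(Y_{\cdot j}\) has no atoms, so \(P(Y_{ij}=Y_{\ell j})=0\) for \(i\ne\ell\). Hence, almost surely, \(Y_{ij}<Y_{\ell j}\) if and only if \(U_{ij}<U_{\ell j}\).

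Likewise, \(U_j\) has continuous (uniform) margin under \(C_{\boldsymbol{\Theta}}\), which has a positive density on \((0,1)^J\) by Assumption~\ref{ass:known-widths-positivity} and Lemma~\ref{lem:copula-latent-class}. So ties among the \(U_{ij}\) also occur with probability zero. Consequently the coordinatewise rank vector \(R_n\) of \(\boldsymbol Y^{(n)}\) equals, almost surely, the coordinatewise rank vector of \(\boldsymbol U^{(n)}\), and \(\mathscr R_n\) coincides with \(\sigma(R_n)\) up to null sets.

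\emph{Step 2: the rank-consistent set is the observed rank cell.} Fix a realized rank configuration \(r\), given by \(J\) permutations of \(\{1,\ldots,n\}\). With no ties, the defining implications of \(\mathcal D_n(\boldsymbol Y^{(n)})\) specify a strict total order on \(\{u_{1j},\ldots,u_{nj}\}\) for each \(j\). Therefore
\[
\mathcal D_n(\boldsymbol Y^{(n)})=\{\boldsymbol u:\operatorname{rank}(\boldsymbol u)=r\}
\]
up to the null set where some \(u_{ij}=u_{\ell j}\). It follows that
\[
L_n^R(\boldsymbol{\Theta};\boldsymbol Y^{(n)})=P_{\boldsymbol{\Theta}}\{\operatorname{rank}(\boldsymbol U^{(n)})=r\}=P_{\boldsymbol{\Theta},F}(R_n=r).
\]
The second equality uses Step 1 and holds for every continuous \(F\). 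This is exactly the probability mass function of the discrete statistic \(R_n\), i.e.\ its likelihood with respect to counting measure on the finite set of rank configurations. These masses are strictly positive because of the positive density of \(C_{\boldsymbol{\Theta}}\).

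\emph{Step 3: dependence only on the copula.} The event \(\{\operatorname{rank}(\boldsymbol U^{(n)})=r\}\) is determined by \(\boldsymbol U^{(n)}\), whose law is \(C_{\boldsymbol{\Theta}}^{\otimes n}\). Hence \(L_n^R\) is a functional of \(C_{\boldsymbol{\Theta}}\) alone, and is free of \(F\).

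Equivalently, one can pass to \(\boldsymbol Z\) using the strictly increasing map \(F_{Z_j}\). This gives the latent-Gaussian form \(P_{\boldsymbol{\Theta}}\{\boldsymbol Z^{(n)}\in\mathcal R(\boldsymbol Y^{(n)})\}\), which is invariant under any coordinatewise strictly increasing reparametrization.

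The only delicate point is the almost-sure tie-free claim. It requires continuity of \(F_{Y_j}\), not just of its inverse, together with absolute continuity of \(C_{\boldsymbol{\Theta}}\); once both are noted, the rest is bookkeeping.
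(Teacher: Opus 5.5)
Your proof is correct and follows essentially the same route as the paper. Almost surely there are no ties, so $\mathcal D_n(\boldsymbol Y^{(n)})$ is exactly the observed rank cell, giving $L_n^R=P_{\boldsymbol{\Theta}}\{R_n(\boldsymbol U^{(n)})=R_n(\boldsymbol Y^{(n)})\}$, which depends only on $C_{\boldsymbol{\Theta}}$ because the $\boldsymbol U_i$ are i.i.d.\ from it. One small remark: absolute continuity of $C_{\boldsymbol{\Theta}}$ is not needed for the tie-free claim on the copula scale, since for $i\neq\ell$ the variables $U_{ij}$ and $U_{\ell j}$ are independent uniforms; it matters only for your extra positivity claim.
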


\begin{proof}
Since each \(F_{Y_j}\) is continuous, the variables \(Y_{1j},\ldots,Y_{nj}\) have no ties
almost surely. Therefore, for each coordinate \(j\), the observed data determine a unique
permutation \(\rho_j\) such that
\[
    y_{\rho_j(1),j}<y_{\rho_j(2),j}<\cdots<y_{\rho_j(n),j}.
\]
Because \(Y_{ij}=F_{Y_j}^{-1}(U_{ij})\) and continuous margins have no atoms, the same
ordering is equivalent almost surely to
\[
    u_{\rho_j(1),j}<u_{\rho_j(2),j}<\cdots<u_{\rho_j(n),j}.
\]
Thus \(\mathcal D_n(\boldsymbol Y^{(n)})\) is precisely the rank cell associated with the
observed rank array \(R_n\). The collection of such rank cells forms a finite measurable
partition of \((0,1)^{n\times J}\). Hence
\[
    P_{\boldsymbol{\Theta}}\{\boldsymbol U^{(n)}\in\mathcal D_n(\boldsymbol Y^{(n)})\}
    =
    P_{\boldsymbol{\Theta}}\{R_n(\boldsymbol U^{(n)})=R_n(\boldsymbol Y^{(n)})\},
\]
which is the exact likelihood of \(R_n\). Since
\(\boldsymbol U_1,\ldots,\boldsymbol U_n\) are i.i.d. from \(C_{\boldsymbol{\Theta}}\), this
probability depends on \(\boldsymbol{\Theta}\) only through \(C_{\boldsymbol{\Theta}}\).
\end{proof}

\begin{lemma}[Empirical ranks recover the copula]
\label{lem:empirical-ranks-recover-copula}
Assume the margins are continuous. Define
\[
    \widehat U_{nij}
    =
    \frac{1}{n}\sum_{\ell=1}^n 1(Y_{\ell j}\le Y_{ij}),
    \qquad
    \widehat{\boldsymbol U}_{ni}
    =
    (\widehat U_{ni1},\ldots,\widehat U_{niJ}),
\]
and
\[
    \widehat C_n
    =
    \frac{1}{n}\sum_{i=1}^n\delta_{\widehat{\boldsymbol U}_{ni}}.
\]
Then \(\widehat C_n\) is \(\mathscr R_n\)-measurable, and
\[
    d_{\mathrm{BL}}(\widehat C_n,C_{\psi_0})
    \longrightarrow 0
    \qquad
    P_{\psi_0}^{\infty}\text{-almost surely}.
\]
\end{lemma}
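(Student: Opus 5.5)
Two facts need proof: measurability and convergence.

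\emph{Measurability.} Each $\widehat U_{nij}$ is a sum of indicators $1(Y_{\ell j}\le Y_{ij}) = 1\{Y_{\ell j}<Y_{ij}\}+1\{Y_{\ell j}=Y_{ij}\}$, which are generators of $\mathscr R_n$. Hence $\widehat{\boldsymbol U}_{ni}$ is $\mathscr R_n$-measurable, and so is the finite empirical measure $\widehat C_n$ (as a random element of the space of probability measures on $[0,1]^J$ with the weak topology).

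\emph{Convergence: pseudo-observations versus oracle observations.} Since $F_{Y_j}^{-1}$ is nondecreasing and margins are continuous, almost surely $1(Y_{\ell j}\le Y_{ij})=1(U_{\ell j}\le U_{ij})$. Thus $\widehat U_{nij}=G_{nj}(U_{ij})$, where $G_{nj}$ is the empirical distribution function of $U_{1j},\ldots,U_{nj}$; this is where continuity of the margins is used. Now compare $\widehat C_n$ with the oracle empirical measure $C_n^\ast=n^{-1}\sum_i\delta_{\boldsymbol U_i}$. For $f$ bounded Lipschitz with $\|f\|_{\mathrm{BL}}\le1$,
\[
\Bigl|\int f\,d\widehat C_n-\int f\,dC_n^\ast\Bigr|\le \frac1n\sum_{i=1}^n\|\widehat{\boldsymbol U}_{ni}-\boldsymbol U_i\|\le \sqrt J\max_{j}\sup_{u}|G_{nj}(u)-u|.
\]
The last step uses that $U_{ij}$ is exactly uniform under $C_{\psi_0}$. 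By the Glivenko--Cantelli theorem applied to each of the finitely many coordinates, the right side tends to $0$ almost surely.

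\emph{Convergence: oracle empirical measure.} By Varadarajan's theorem (the a.s.\ weak convergence of empirical measures of i.i.d.\ draws on a separable metric space), $C_n^\ast\Rightarrow C_{\psi_0}$ almost surely. Since $d_{\mathrm{BL}}$ metrizes weak convergence, $d_{\mathrm{BL}}(C_n^\ast,C_{\psi_0})\to0$ almost surely. The triangle inequality then gives the claim.

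\emph{Main obstacle.} The only delicate point is justifying $1(Y_{\ell j}\le Y_{ij})=1(U_{\ell j}\le U_{ij})$ almost surely. $F_{Y_j}^{-1}$ may be non-strictly increasing where $F_{Y_j}$ has flat pieces, so one must argue this case separately. Because $F_{Y_j}$ is continuous, $F_{Y_j}^{-1}$ is strictly increasing on $(0,1)$: if $F_{Y_j}^{-1}(u)=F_{Y_j}^{-1}(v)=y$ with $u<v$, then $F_{Y_j}(y)\ge v$ while $F_{Y_j}(y-)\le u$, contradicting continuity. So the only exceptional event is $U_{\ell j}=U_{ij}$ for some $\ell\ne i$, which has probability zero since the copula margins are uniform. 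I would state this in one line. Everything else is a routine triangle-inequality argument.
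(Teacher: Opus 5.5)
Your proof is correct and follows the paper's argument essentially step for step: $\mathscr R_n$-measurability from the rank indicators, a Glivenko--Cantelli bound comparing $\widehat C_n$ with the oracle empirical measure $C_n^\ast=n^{-1}\sum_i\delta_{\boldsymbol U_i}$, almost-sure weak convergence of $C_n^\ast$ to $C_{\psi_0}$, and the triangle inequality. The only difference is cosmetic: the paper applies Glivenko--Cantelli on the $Y$ scale via $F_{Y_j}(Y_{ij})=U_{ij}$, whereas you pass to the uniform scale via $1(Y_{\ell j}\le Y_{ij})=1(U_{\ell j}\le U_{ij})$. That identity in fact holds surely on the event that all $U_{ij}$ lie in $(0,1)$, because $F_{Y_j}^{-1}$ is strictly increasing there, so ties in $U$ need not be treated as an exceptional case.
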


\begin{proof}
Let
\[
    \widehat F_{nj}(y)=\frac{1}{n}\sum_{\ell=1}^n1(Y_{\ell j}\le y)
\]
be the empirical CDF of the \(j\)th margin. Then
\(\widehat U_{nij}=\widehat F_{nj}(Y_{ij})\). This quantity is determined by the ordering of
\(Y_{1j},\ldots,Y_{nj}\), and is therefore \(\mathscr R_n\)-measurable.

Since \(F_{Y_j}\) is continuous and \(Y_{ij}=F_{Y_j}^{-1}(U_{ij})\),
\[
    F_{Y_j}(Y_{ij})=U_{ij}
    \qquad\text{almost surely}.
\]
By the Glivenko--Cantelli theorem,
\[
    \sup_y|\widehat F_{nj}(y)-F_{Y_j}(y)|\longrightarrow0
    \qquad\text{almost surely}.
\]
Therefore
\[
    \max_{1\le i\le n}|\widehat U_{nij}-U_{ij}|
    \le
    \sup_y|\widehat F_{nj}(y)-F_{Y_j}(y)|
    \longrightarrow0
    \qquad\text{almost surely}.
\]
Let
\[
    C_n^*=\frac{1}{n}\sum_{i=1}^n\delta_{\boldsymbol U_i}.
\]
For every bounded Lipschitz function \(f:[0,1]^J\to\mathbb R\) with Lipschitz constant at
most one,
\[
\begin{aligned}
    \left|
        \int f\,d\widehat C_n-
        \int f\,dC_n^*
    \right|
    &\le
    \frac{1}{n}\sum_{i=1}^n
    \|\widehat{\boldsymbol U}_{ni}-\boldsymbol U_i\|_1 \\
    &\le
    \sum_{j=1}^J\sup_y|\widehat F_{nj}(y)-F_{Y_j}(y)|
    \longrightarrow0
\end{aligned}
\]
almost surely. Hence \(d_{\mathrm{BL}}(\widehat C_n,C_n^*)\to0\) almost surely. Since
\(\boldsymbol U_1,\boldsymbol U_2,\ldots\) are i.i.d. from \(C_{\psi_0}\), the strong law for
empirical measures under the bounded-Lipschitz metric gives
\[
    d_{\mathrm{BL}}(C_n^*,C_{\psi_0})\to0
    \qquad\text{almost surely}.
\]
The triangle inequality gives $d_{\mathrm{BL}}(\hat C_n,C_{\psi_0})\to0$ and completes the proof.
\end{proof}

\begin{lemma}[A measurable recovery map from infinite ranks]
\label{lem:continuous-rank-recovery-map}
Under the assumptions of Theorem~\ref{thm:continuous-rank-posterior-consistency}, there
exists an \(\mathscr R_\infty\)-measurable map \(h:\mathscr R_\infty\to\Psi_K\) such that,
for every \(\psi_0\in\Psi_K\),
\[
    h(\mathscr R_\infty)=\psi_0
    \qquad
    P_{\psi_0}^{\infty}\text{-almost surely}.
\]
\end{lemma}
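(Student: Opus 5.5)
The plan is to build $h$ as a composition of three maps: the empirical copula map of Lemma~\ref{lem:empirical-ranks-recover-copula}, a limiting operation, and the inverse of the injective map $\psi\mapsto C_\psi$ supplied by Theorem~\ref{thm:strict-identifiability-copuladde}.

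\emph{Step 1 (the parameter-to-copula map).} Let $\mathcal P([0,1]^J)$ be the space of probability measures on $[0,1]^J$ with the bounded-Lipschitz metric $d_{\mathrm{BL}}$. This space is Polish, and $d_{\mathrm{BL}}$ metrizes weak convergence. Define $\Gamma:\Psi_K\to\mathcal P([0,1]^J)$ by $\Gamma(\psi)=C_\psi$; it is well defined on equivalence classes because $C_{\boldsymbol\Theta}$ is invariant under $\sim_K$.

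\emph{Step 2 (continuity, injectivity, and inversion).} I would first show that $\boldsymbol\Theta\mapsto C_{\boldsymbol\Theta}$ is weakly continuous on $\mathcal T_K$.
\begin{itemize}
\item By Lemma~\ref{lem:copula-latent-class}, $C_{\boldsymbol\Theta}$ is the law of $(F_{Z_1}(Z_1),\ldots,F_{Z_J}(Z_J))$.
\item Here $\boldsymbol Z$ is a finite Gaussian mixture whose weights $\eta^{(1)}$ are polynomials in logistic functions of $\boldsymbol B^{(d)}$ and $\boldsymbol\pi$, and whose means and variances are continuous in $\boldsymbol\Theta$ with $\gamma_j$ bounded away from $0$ by compactness.
\item Hence the joint density of $\boldsymbol Z$ and each $F_{Z_j}$ depend continuously on $\boldsymbol\Theta$, locally uniformly.
\item By Scheffé's lemma and the uniform convergence of the continuous transforms $F_{Z_j}$, the law of $\boldsymbol U$ is weakly continuous in $\boldsymbol\Theta$.
\end{itemize}
Continuity passes to the quotient $\Psi_K$. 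Theorem~\ref{thm:strict-identifiability-copuladde} makes $\Gamma$ injective on $\Psi_K$. Since $\Psi_K$ is compact (a continuous image of compact $\mathcal T_K$) and the target is Hausdorff, $\Gamma$ is a homeomorphism onto the compact image $\mathcal C=\Gamma(\Psi_K)$. In particular, $\Gamma^{-1}:\mathcal C\to\Psi_K$ is continuous, hence Borel.

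\emph{Step 3 (the measurable map).} On the rank sequence, define $\widehat C_n$ as in Lemma~\ref{lem:empirical-ranks-recover-copula}; each $\widehat C_n$ is $\mathscr R_n$-measurable, hence $\mathscr R_\infty$-measurable. To avoid needing limits to exist, I would use a projection rather than a limit. Let
\[
\widehat\psi_n\in\arg\min_{\psi\in\Psi_K} d_{\mathrm{BL}}(\widehat C_n,\Gamma(\psi)).
\]
The minimum is attained by compactness and continuity, and a measurable selection exists by the Kuratowski--Ryll-Nardzewski theorem, or more simply by minimizing over a countable dense subset up to tolerance $1/n$. Then set $h=\lim_n\widehat\psi_n$ on the event where this limit exists, and $h=\psi^\ast$ (a fixed point) otherwise. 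The event $\{\lim_n \widehat\psi_n \text{ exists}\}$ is $\mathscr R_\infty$-measurable because $\Psi_K$ is a compact metric space. Thus $h$ is $\mathscr R_\infty$-measurable and does not depend on $\psi_0$.

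\emph{Step 4 (consistency).} Under $P^\infty_{\psi_0}$, Lemma~\ref{lem:empirical-ranks-recover-copula} gives $d_{\mathrm{BL}}(\widehat C_n,C_{\psi_0})\to0$ almost surely. Therefore
\[
d_{\mathrm{BL}}(\Gamma(\widehat\psi_n),C_{\psi_0})\le 2\,d_{\mathrm{BL}}(\widehat C_n,C_{\psi_0})+1/n\to0,
\]
where the factor $2$ comes from the minimizing property of $\widehat\psi_n$. Continuity of $\Gamma^{-1}$ on $\mathcal C$ then gives $\widehat\psi_n\to\psi_0$ in $d_K$. So the limit exists and equals $\psi_0$ almost surely.

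\emph{Main obstacle.} The step I expect to be the real work is the continuity of $\boldsymbol\Theta\mapsto C_{\boldsymbol\Theta}$ in Step 2, since it is what allows compactness plus injectivity to give a continuous inverse. The difficulty is that the quantile transforms $Q_j=F_{Z_j}^{-1}$ vary with $\boldsymbol\Theta$. I would handle this through the representation $C_{\boldsymbol\Theta}(\boldsymbol u)=P(Z_j\le Q_j(u_j)\ \forall j)$. Using continuity of the mixture CDFs in $(\boldsymbol\Theta,z)$ and strict monotonicity, $Q_j(u)$ is jointly continuous in $(\boldsymbol\Theta,u)$. Then pointwise convergence of $C_{\boldsymbol\Theta}$ everywhere on $(0,1)^J$ gives weak convergence.
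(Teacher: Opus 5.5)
Your proposal is correct and matches the paper's proof step for step. Both establish continuity of $\psi\mapsto C_\psi$ through convergence of the Gaussian-mixture CDFs and their quantile functions, and both take injectivity from Theorem~\ref{thm:strict-identifiability-copuladde}. Both build an $\mathscr R_n$-measurable near-minimizer over a countable dense subset with the bound $2\,d_{\mathrm{BL}}(\widehat C_n,C_{\psi_0})+1/n$, and define $h$ as its limit. The differences are cosmetic. You use continuity of $\Gamma^{-1}$ on the compact image, where the paper uses the equivalent separation constant $\Delta_\epsilon(\psi_0)>0$. You also handle the event on which the limit exists slightly more carefully, by noting it is $\mathscr R_\infty$-measurable and assigning a fixed point off it.
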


\begin{proof}
We first verify continuity of \(\psi\mapsto C_\psi\). Let \(\psi_m\to\psi\) in \(\Psi_K\),
and choose representatives \(\boldsymbol{\Theta}_m\to\boldsymbol{\Theta}\) in the canonical
parameter space after applying within-layer permutations. Since \(\mathcal T_K\) is compact
inside the nondegenerate region, the Gaussian variances are bounded away from zero and
all coefficients are bounded. The joint density of \(\boldsymbol Z\) under
\(\boldsymbol{\Theta}_m\) is a finite mixture of product Gaussian densities whose mixing
probabilities and component parameters vary continuously in \(\boldsymbol{\Theta}_m\).
Hence the joint distribution functions of \(\boldsymbol Z\) converge pointwise at every
continuity point, and the marginal distribution functions converge uniformly on
\(\mathbb R\) to their limits. Since each limiting marginal distribution is continuous and
strictly increasing, the corresponding marginal quantile functions converge uniformly on
compact subsets of \((0,1)\). Therefore, for every
\(\boldsymbol u\in(0,1)^J\),
\[
    C_{\psi_m}(\boldsymbol u)
    =
    F_{\boldsymbol{\Theta}_m}\!\left(
        F_{\boldsymbol{\Theta}_m,1}^{-1}(u_1),\ldots,
        F_{\boldsymbol{\Theta}_m,J}^{-1}(u_J)
    \right)
    \longrightarrow
    F_{\boldsymbol{\Theta}}\!\left(
        F_{\boldsymbol{\Theta},1}^{-1}(u_1),\ldots,
        F_{\boldsymbol{\Theta},J}^{-1}(u_J)
    \right)
    =
    C_{\psi}(\boldsymbol u).
\]
Because all \(C_\psi\) have continuous margins, pointwise convergence on \((0,1)^J\)
implies weak convergence, equivalently convergence under \(d_{\mathrm{BL}}\). Thus
\(\psi\mapsto C_\psi\) is continuous.

By Theorem~\ref{thm:strict-identifiability-copuladde}, the map
\(\psi\mapsto C_\psi\) is injective. Since \(\Psi_K\) is compact and
\(d_{\mathrm{BL}}\) is a metric, injectivity and continuity imply separation: for every
\(\epsilon>0\),
\[
    \Delta_\epsilon(\psi_0)
    :=
    \inf_{\psi:\,d_K(\psi,\psi_0)\ge\epsilon}
    d_{\mathrm{BL}}(C_\psi,C_{\psi_0})
    >0.
    \label{eq:continuous-separation}
\]

Let \(\{\psi_m:m\ge1\}\) be a countable dense subset of \(\Psi_K\). For each \(n\), define
\(\widehat\psi_n\) to be the first element \(\psi_m\) satisfying
\[
    d_{\mathrm{BL}}(\widehat C_n,C_{\psi_m})
    \le
    \inf_{r\ge1}d_{\mathrm{BL}}(\widehat C_n,C_{\psi_r})+\frac1n.
\]
This construction makes \(\widehat\psi_n\) measurable with respect to \(\mathscr R_n\). By
density of \(\{\psi_m\}\) and continuity of \(\psi\mapsto C_\psi\),
\[
    \inf_{r\ge1}d_{\mathrm{BL}}(\widehat C_n,C_{\psi_r})
    =
    \inf_{\psi\in\Psi_K}d_{\mathrm{BL}}(\widehat C_n,C_\psi)
    \le
    d_{\mathrm{BL}}(\widehat C_n,C_{\psi_0}).
\]
Therefore
\[
\begin{aligned}
    d_{\mathrm{BL}}(C_{\widehat\psi_n},C_{\psi_0})
    &\le
    d_{\mathrm{BL}}(C_{\widehat\psi_n},\widehat C_n)
    +
    d_{\mathrm{BL}}(\widehat C_n,C_{\psi_0}) \\
    &\le
    2d_{\mathrm{BL}}(\widehat C_n,C_{\psi_0})+\frac1n
    \longrightarrow0
\end{aligned}
\]
almost surely by Lemma~\ref{lem:empirical-ranks-recover-copula}. The separation
\eqref{eq:continuous-separation} implies
\[
    d_K(\widehat\psi_n,\psi_0)\to0
    \qquad\text{almost surely}.
\]
Define \(h(\mathscr R_\infty)=\lim_{n\to\infty}\widehat\psi_n\) on the almost-sure event
where the limit exists, and define it arbitrarily elsewhere. Since \(\widehat\psi_n\) is
\(\mathscr R_n\)-measurable and \(\mathscr R_n\subseteq\mathscr R_\infty\), the limit is
\(\mathscr R_\infty\)-measurable. This proves the claim.
\end{proof}

\begin{lemma}[Doob consistency for the exact rank posterior]
\label{lem:doob-exact-rank}
Let \(\psi=[\boldsymbol{\Theta}]\in\Psi_K\). Suppose there exists an
\(\mathscr R_\infty\)-measurable map \(h:\mathscr R_\infty\to\Psi_K\) such that, under the
prior predictive law
\[
    \int P_\psi^\infty(\cdot)\,\bar\Pi(d\psi),
\]
we have \(h(\mathscr R_\infty)=\psi\) almost surely. Then there exists a set
\(\Psi^\star\subseteq\Psi_K\) with \(\bar\Pi(\Psi^\star)=1\) such that, for every
\(\psi_0\in\Psi^\star\) and every open neighborhood \(O\) of \(\psi_0\),
\[
    \bar\Pi(\psi\in O\mid\mathscr R_n)\longrightarrow1
\]
\(P_{\psi_0}^{\infty}\)-almost surely.
\end{lemma}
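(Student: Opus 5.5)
This is Doob's theorem on the quotient space. We work on the product space $\Omega=\Psi_K\times\mathcal Y^\infty$, equipped with the joint law $\mathbb Q(d\psi,d\omega)=\bar\Pi(d\psi)\,P_\psi^\infty(d\omega)$. Two ingredients are needed before invoking martingale convergence.

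\textbf{Step 1: measurability and the posterior as a conditional law.} We first check that $\psi\mapsto P_\psi^\infty(E)$ is measurable for $E\in\mathscr R_\infty$. It suffices to do this for $E\in\mathscr R_n$ and then apply a monotone class argument. For $E\in\mathscr R_n$, the probability is a finite sum of rank-cell probabilities, each continuous in $\psi$ by the continuity of $\psi\mapsto C_\psi$ established in Lemma~\ref{lem:continuous-rank-recovery-map}. Next, by Lemma~\ref{lem:exact-rank-likelihood-continuous}, $L_n^R(\psi;\cdot)$ is the exact density of the rank array $R_n$ with respect to counting measure on the finite set of rank configurations. Bayes' formula on this finite $\sigma$-field then gives, $\mathbb Q$-a.s.,
\[
\bar\Pi_n^R(A\mid\mathscr R_n)=\mathbb Q(\psi\in A\mid\mathscr R_n),\qquad A\in\mathcal B(\Psi_K).
\]
The denominator is positive, as noted after \eqref{eq:quotient-rank-posterior}.

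\textbf{Step 2: martingale convergence and identification.} For a fixed Borel set $A$, the sequence $\mathbb Q(\psi\in A\mid\mathscr R_n)$ is a bounded martingale in the filtration $\mathscr R_n\uparrow\mathscr R_\infty$. By L\'evy's upward theorem it converges $\mathbb Q$-a.s.\ to $\mathbb Q(\psi\in A\mid\mathscr R_\infty)$. By hypothesis $\psi=h(\mathscr R_\infty)$ $\mathbb Q$-a.s., so $\psi$ is $\mathscr R_\infty$-measurable modulo null sets, and the limit equals $1\{\psi\in A\}$ a.s.

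\textbf{Step 3: uniformity over neighborhoods, and disintegration.} Step~2 gives convergence only for one set at a time, so we fix a countable base. Since $\Psi_K$ is a compact metric space, it is separable; take $\{O_m\}$ to be the balls with rational radii centered at points of a countable dense set. Applying Step~2 to each $O_m$ and intersecting countably many full-measure events, we obtain a $\mathbb Q$-null set $N$ off which
\[
\bar\Pi(O_m\mid\mathscr R_n)\to 1\{\psi\in O_m\}\quad\text{for all } m.
\]
Fubini then yields a set $\Psi^\star$ with $\bar\Pi(\Psi^\star)=1$ such that for each $\psi_0\in\Psi^\star$ the $\omega$-section of $N$ is $P_{\psi_0}^\infty$-null. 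Finally, any open neighborhood $O$ of $\psi_0$ contains some $O_m\ni\psi_0$, and monotonicity gives $\bar\Pi(O\mid\mathscr R_n)\ge\bar\Pi(O_m\mid\mathscr R_n)\to1$.

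The main obstacle is Step~1: rigorously identifying the quotient rank posterior with a version of the regular conditional distribution under $\mathbb Q$, including the joint measurability of $(\psi,\omega)\mapsto L_n^R$. Compactness and continuity of $\psi\mapsto C_\psi$ are what make this work. The countable-base and Fubini bookkeeping in Step~3 is standard.
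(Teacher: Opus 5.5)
Your proposal is correct and follows essentially the same route as the paper's proof: you work under the joint prior-predictive law, use bounded martingales over a countable base that converge to the indicator via $h$, and finish with Fubini and monotonicity. Your Step~1 (kernel measurability and identifying $\bar\Pi_n^R$ with the conditional law) is taken for granted inside the paper's lemma and is instead invoked at the start of the proof of Theorem~\ref{thm:continuous-rank-posterior-consistency}; your folding of the null set $\{\psi\neq h\}$ into the Fubini step is a slightly cleaner version of the paper's ``on the event $h(\mathscr R_\infty)=\psi_0$'' step.
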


\begin{proof}
Because \(\Psi_K\) is compact metric, it has a countable base \(\mathcal B\). Work under
the joint prior predictive probability measure for \((\psi,\mathscr R_\infty)\), and let
\(E\) denote expectation under this joint law. For every Borel set \(B\subseteq\Psi_K\),
\[
    \bar\Pi(\psi\in B\mid\mathscr R_n)
    =
    E\{1(\psi\in B)\mid\mathscr R_n\}.
\]
For \(B\in\mathcal B\), the sequence
\[
    E\{1(\psi\in B)\mid\mathscr R_n\},
    \qquad n=1,2,\ldots,
\]
is a bounded martingale with respect to the increasing sigma-fields \(\mathscr R_n\). By the
martingale convergence theorem,
\[
    E\{1(\psi\in B)\mid\mathscr R_n\}
    \longrightarrow
    E\{1(\psi\in B)\mid\mathscr R_\infty\}
\]
almost surely under the prior predictive law. Since \(h(\mathscr R_\infty)=\psi\) almost
surely,
\[
    E\{1(\psi\in B)\mid\mathscr R_\infty\}
    =
    1\{h(\mathscr R_\infty)\in B\}.
\]
Intersecting over the countable base \(\mathcal B\) gives a prior-predictive probability-one
event on which this convergence holds for every \(B\in\mathcal B\).

By Fubini's theorem, there exists a set \(\Psi^\star\subseteq\Psi_K\) with
\(\bar\Pi(\Psi^\star)=1\) such that, for every \(\psi_0\in\Psi^\star\), the preceding
convergence holds \(P_{\psi_0}^{\infty}\)-almost surely. Fix such a \(\psi_0\), and let
\(O\) be an open neighborhood of \(\psi_0\). Choose \(B\in\mathcal B\) such that
\(\psi_0\in B\subseteq O\). On the event \(h(\mathscr R_\infty)=\psi_0\),
\[
    \bar\Pi(\psi\in B\mid\mathscr R_n)\to1.
\]
Since \(B\subseteq O\),
\[
    \bar\Pi(\psi\in O\mid\mathscr R_n)
    \ge
    \bar\Pi(\psi\in B\mid\mathscr R_n)
    \to1.
\]
This completes the proof.
\end{proof}

\begin{proof}[Proof of Theorem~\ref{thm:continuous-rank-posterior-consistency}]
Lemma~\ref{lem:exact-rank-likelihood-continuous} shows that, under continuous margins,
the rank likelihood is the exact likelihood of the rank data \(\mathscr R_n\). Hence the
quotient-space rank-likelihood posterior is the regular conditional posterior
\[
    \bar\Pi_n^R(\cdot\mid\mathscr R_n)
    =
    \bar\Pi(\cdot\mid\mathscr R_n).
\]
Lemma~\ref{lem:continuous-rank-recovery-map} constructs an
\(\mathscr R_\infty\)-measurable map \(h\) satisfying
\(h(\mathscr R_\infty)=\psi_0\) almost surely under every fixed
\(P_{\psi_0}^{\infty}\). In particular, \(h(\mathscr R_\infty)=\psi\) almost surely under the
prior predictive law. Applying Lemma~\ref{lem:doob-exact-rank} gives posterior consistency
for \(\bar\Pi\)-almost every \(\psi_0\).
\end{proof}

\subsection{Proofs and Additional Details for Section~\ref{subsubsec:rl-mixed-margins}}
\label{sec:supp-mixed-rank-details}

This section gives the details omitted from Section~\ref{subsubsec:rl-mixed-margins}: first,
why the extended rank likelihood is not an exact likelihood when ties occur; second, what
infinite weak-rank/tie information reveals; and third, the proof of
Theorem~\ref{thm:mixed-rank-generalized}.

\subsubsection{The extended rank likelihood is not an exact likelihood with ties}
\label{sec:supp-erl-not-exact}

Let
\[
    \boldsymbol U_i=(U_{i1},\ldots,U_{iJ}),
    \qquad i=1,\ldots,n,
\]
be i.i.d. from the copula \(C_\psi\). For a realized data array \(\boldsymbol Y^{(n)}\), the
extended rank set is
\[
    \mathcal D_n(\boldsymbol Y^{(n)})
    =
    \Bigl\{
        \boldsymbol u\in(0,1)^{n\times J}:
        y_{ij}<y_{\ell j}\Rightarrow u_{ij}<u_{\ell j},
        \quad i,\ell=1,\ldots,n,
        \ j=1,\ldots,J
    \Bigr\}.
\]
The marginal extended rank likelihood is
\[
    L_n^R(\psi;\boldsymbol Y^{(n)})
    =
    P_\psi\{\boldsymbol U^{(n)}\in\mathcal D_n(\boldsymbol Y^{(n)})\}.
\]
When all margins are continuous, ties occur with probability zero and the sets
\(\mathcal D_n(\boldsymbol Y^{(n)})\) are exactly the rank cells corresponding to the observed
coordinatewise ranks. These rank cells form a measurable partition of \((0,1)^{nJ}\), so
\(L_n^R\) is the exact likelihood of the observed ranks.

When ties can occur, tied observations impose no strict ordering constraints. Consequently,
\(\mathcal D_n(\boldsymbol Y^{(n)})\) is a rank-compatible superset rather than the event
corresponding exactly to the observed weak-rank/tie pattern. The sets associated with
different weak-rank/tie patterns are no longer disjoint and therefore cannot define a
probability mass function for those patterns.

\paragraph{Example S.1: one margin and two observations.}
Take \(n=2\) and \(J=1\). There are three weak-rank patterns:
\[
    Y_1<Y_2,
    \qquad
    Y_2<Y_1,
    \qquad
    Y_1=Y_2.
\]
The corresponding extended rank sets are
\[
    \mathcal D_< = \{(u_1,u_2):u_1<u_2\},
    \qquad
    \mathcal D_> = \{(u_1,u_2):u_2<u_1\},
\]
and
\[
    \mathcal D_= = (0,1)^2,
\]
because a tie imposes no strict inequality constraint. Since \(U_1\) and \(U_2\) are i.i.d.
uniform random variables,
\[
    P(\mathcal D_<)=\frac12,
    \qquad
    P(\mathcal D_>)=\frac12,
    \qquad
    P(\mathcal D_=)=1.
\]
These three numbers cannot be probabilities of the three mutually exclusive weak-rank
patterns, since they sum to \(2\), not \(1\).

The difference from the exact likelihood is already visible for a binary margin. Suppose
\[
    Y_i=1\{U_i>\tau\},
    \qquad 0<\tau<1.
\]
Then the exact probabilities of the three weak patterns are
\[
    P(Y_1<Y_2)=\tau(1-\tau),
    \qquad
    P(Y_2<Y_1)=\tau(1-\tau),
\]
and
\[
    P(Y_1=Y_2)=\tau^2+(1-\tau)^2.
\]
These probabilities depend on the marginal threshold \(\tau\), whereas the corresponding
extended-rank values are \(1/2\), \(1/2\), and \(1\). The extended rank likelihood is therefore
margin-free, but it is not an exact likelihood for the weak-rank/tie statistic.

\paragraph{Example S.2: two binary margins and two observations.}
Take \(n=2\) and \(J=2\), with binary margins
\[
    Y_{ij}=1\{U_{ij}>\tau_j\},
    \qquad j=1,2.
\]
Suppose the observed data are
\[
    \boldsymbol Y_1=(0,0),
    \qquad
    \boldsymbol Y_2=(1,0).
\]
The first margin implies \(Y_{11}<Y_{21}\), while the second margin is tied,
\(Y_{12}=Y_{22}\). Hence
\[
    \mathcal D_n(\boldsymbol Y^{(n)})
    =
    \{\boldsymbol u:u_{11}<u_{21}\},
\]
with no constraint on \(u_{12}\) or \(u_{22}\). Therefore
\[
    L_n^R(\psi;\boldsymbol Y^{(n)})
    =
    P_\psi(U_{11}<U_{21})
    =
    \frac12,
\]
because \(U_{11}\) and \(U_{21}\) are independent uniforms from two independent subjects.
By contrast, the exact probability of the observed binary data under fixed thresholds is
\[
\begin{aligned}
    &P_{\psi,F}\{\boldsymbol Y_1=(0,0),\boldsymbol Y_2=(1,0)\}  \\
    &\qquad =
    P_\psi(U_{11}\le\tau_1,U_{12}\le\tau_2)
    P_\psi(U_{21}>\tau_1,U_{22}\le\tau_2) \\
    &\qquad =
    C_\psi(\tau_1,\tau_2)
    \{\tau_2-C_\psi(\tau_1,\tau_2)\}.
\end{aligned}
\]
This exact probability depends on both the marginal thresholds and the copula. The extended
rank likelihood deliberately avoids the unknown thresholds, but it no longer coincides with
the exact likelihood of the observed data or of the observed weak-rank/tie pattern.

\subsubsection{Coarsened-rank identifiability}
\label{sec:supp-coarsened-identifiability}

For a fixed true margin vector \(F=(F_{Y_1},\ldots,F_{Y_J})\), define
\[
    T_{F_j}(u)=F_{Y_j}\{F_{Y_j}^{-1}(u)\},
    \qquad
    T_F(u_1,\ldots,u_J)=\{T_{F_1}(u_1),\ldots,T_{F_J}(u_J)\},
\]
and
\[
    Q_{\psi,F}=\mathcal L_\psi\{T_F(\boldsymbol U)\},
    \qquad
    \boldsymbol U\sim C_\psi.
\]
For a measurable map \(T\) and probability measure \(\mu\), let \(T\#\mu\) denote the
pushforward measure. Thus \(Q_{\psi,F}=T_F\# C_\psi\).

\begin{lemma}[Continuity of the coarsened copula law]
\label{lem:supp-coarsened-continuity}
For any fixed margin vector \(F\), the map
\[
    \psi\mapsto Q_{\psi,F}
\]
is continuous from \(\Psi_K\) to the space of probability measures on \([0,1]^J\) equipped
with the bounded-Lipschitz metric.
\end{lemma}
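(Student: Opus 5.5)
The plan is to combine the continuity of \(\psi\mapsto C_\psi\), already proved in Lemma~\ref{lem:continuous-rank-recovery-map}, with a continuous-mapping argument for the fixed coarsening map \(T_F\). The difficulty is that \(T_F\) is not continuous: each \(T_{F_j}\) is a nondecreasing step-like map on \((0,1)\). It jumps at the points \(u\) corresponding to the upper ends of flat stretches of \(F_{Y_j}^{-1}\), namely the right endpoints of the atom intervals \([F_{Y_j}(y-),F_{Y_j}(y)]\) of the margin. Because \(T_{F_j}\) is monotone and bounded, its discontinuity set \(E_j\subset(0,1)\) is at most countable. Hence the discontinuity set of \(T_F\) is contained in \(\bigcup_j\{\boldsymbol u: u_j\in E_j\}\), together with the boundary of the unit cube.

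The key steps are as follows. First, take \(\psi_m\to\psi\) in \(\Psi_K\). By Lemma~\ref{lem:continuous-rank-recovery-map}, \(C_{\psi_m}\Rightarrow C_\psi\) weakly on \([0,1]^J\). Second, observe that every copula has uniform margins, so
\[
C_\psi\{\boldsymbol u:u_j\in E_j\}=\lambda(E_j)=0
\]
because \(E_j\) is countable. Likewise, the boundary \(\{u_j\in\{0,1\}\}\) is \(C_\psi\)-null. We extend \(T_F\) to \([0,1]^J\) arbitrarily but measurably at the boundary. Third, with the discontinuity set of \(T_F\) now \(C_\psi\)-null, the continuous mapping theorem for weak convergence gives
\[
T_F\#C_{\psi_m}\Rightarrow T_F\#C_\psi,\qquad\text{that is,}\qquad Q_{\psi_m,F}\Rightarrow Q_{\psi,F}.
\]
Fourth, on the compact metric space \([0,1]^J\), weak convergence is equivalent to convergence in \(d_{\mathrm{BL}}\). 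This yields sequential continuity, and hence continuity, since \(\Psi_K\) is a metric space.

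The main obstacle is verifying the discontinuity structure of \(T_{F_j}\), i.e.\ showing that \(T_{F_j}\) is monotone and therefore has only countably many jumps. To do this, I would note that \(F_{Y_j}^{-1}\) is nondecreasing and \(F_{Y_j}\) is nondecreasing, so their composition \(T_{F_j}\) is nondecreasing. A nondecreasing real function has at most countably many discontinuities, and this suffices regardless of whether the margin is continuous, discrete, or mixed.

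If one prefers to avoid the continuous mapping theorem, a direct alternative is available. For a bounded Lipschitz \(f\), the function \(f\circ T_F\) is bounded and continuous \(C_\psi\)-a.e., so \(\int f\circ T_F\,dC_{\psi_m}\to\int f\circ T_F\,dC_\psi\) by the portmanteau theorem. Pointwise convergence for each bounded Lipschitz \(f\) implies weak convergence, which again is equivalent to \(d_{\mathrm{BL}}\) convergence on the compact cube.
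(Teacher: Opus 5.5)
Your proposal is correct and follows essentially the same route as the paper: continuity of $\psi\mapsto C_\psi$ from the earlier recovery-map lemma, a $C_\psi$-null discontinuity set for $T_F$ (countably many bad values per coordinate, killed by the uniform copula margins), the continuous mapping theorem under an almost-sure continuity condition, and equivalence of weak and $d_{\mathrm{BL}}$ convergence on $[0,1]^J$. Your monotonicity argument for countability is slightly cleaner than the paper's explicit atom-boundary sets, and it makes harmless your imprecise description of where the jumps sit (for mixed margins they occur at the left endpoints $F_{Y_j}(c^-)$, e.g.\ when an atom follows a continuous stretch); you also handle the cube boundary, which the paper leaves implicit.
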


\begin{proof}
Let \(\psi_m\to\psi\) in \(\Psi_K\). By the continuity of the CopulaDDE map established in
Section~\ref{subsec:rl-identifiability},
\[
    C_{\psi_m}\Rightarrow C_\psi.
\]
The ordinary continuous mapping theorem cannot be applied directly because \(T_F\) may be
discontinuous when some margins have atoms. We verify the condition for the extended
continuous mapping theorem.

For margin \(j\), let
\[
    \mathcal B_j
    =
    \{F_{Y_j}(c^-),F_{Y_j}(c): F_{Y_j}(c)-F_{Y_j}(c^-)>0\}
\]
be the set of left and right probability boundaries associated with atoms of \(F_{Y_j}\). This
set is countable. The discontinuity set of \(T_F\) is contained in
\[
    \operatorname{Disc}(T_F)
    \subseteq
    \bigcup_{j=1}^J
    \{\boldsymbol u\in[0,1]^J:u_j\in\mathcal B_j\}.
\]
Since \(C_\psi\) is a copula, each coordinate \(U_j\) is uniform on \([0,1]\). Therefore, for
any \(b\in[0,1]\),
\[
    C_\psi\{\boldsymbol u:u_j=b\}=P_\psi(U_j=b)=0.
\]
By countability of \(\mathcal B_j\) and finiteness of \(J\),
$C_\psi\{\operatorname{Disc}(T_F)\}=0.$
The extended continuous mapping theorem then gives
$T_F\# C_{\psi_m}\Rightarrow T_F\# C_\psi,$
which is exactly
$Q_{\psi_m,F}\Rightarrow Q_{\psi,F}.$
On the compact space \([0,1]^J\), weak convergence is equivalent to convergence in the
bounded-Lipschitz metric. Hence \(d_{\mathrm{BL}}(Q_{\psi_m,F},Q_{\psi,F})\to0\).
\end{proof}

\begin{assumption}[Coarsened-rank separation]
\label{ass:supp-coarsened-rank-separation}
For the true margin vector \(F\), the map
$\psi\mapsto Q_{\psi,F}$
is injective on \(\Psi_K\).
\end{assumption}

\begin{theorem}[Infinite-rank identifiability under coarsening]
\label{thm:supp-coarsened-rank-identifiability}
Suppose Assumption~\ref{ass:rank-compact-prior} and
Assumption~\ref{ass:supp-coarsened-rank-separation} hold. Let \((\psi_0,F)\) be the true
quotient parameter and margin pair. Then the infinite weak-rank/tie information identifies
\(\psi_0\) conditional on the true marginal coarsening \(F\): there exists an
\(\mathscr R_\infty\)-measurable map
$h_F:\mathscr R_\infty\to\Psi_K$
such that
$h_F(\mathscr R_\infty)=\psi_0$, $P_{\psi_0,F}^{\infty}$-almost surely.
\end{theorem}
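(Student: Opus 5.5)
The plan is to mirror the proof of Lemma~\ref{lem:continuous-rank-recovery-map}, with the copula $C_\psi$ replaced by the coarsened law $Q_{\psi,F}=T_F\# C_\psi$. The argument has three parts: build an $\mathscr R_n$-measurable empirical object converging to $Q_{\psi_0,F}$; get uniform separation from compactness and Assumption~\ref{ass:supp-coarsened-rank-separation}; and define $h_F$ as the limit of a minimum-distance estimator chosen over a countable dense subset of $\Psi_K$.

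\textbf{Empirical object.} Define the weak-rank pseudo-observations
\[
\widehat U_{nij}=\frac1n\sum_{\ell=1}^n 1(Y_{\ell j}\le Y_{ij}),
\]
and $\widehat Q_n=n^{-1}\sum_i\delta_{\widehat{\boldsymbol U}_{ni}}$. Each $\widehat U_{nij}$ is determined by the indicators $1\{Y_{\ell j}<Y_{ij}\}$ and $1\{Y_{\ell j}=Y_{ij}\}$, so $\widehat Q_n$ is $\mathscr R_n$-measurable even with ties. Since $Y_{ij}=F_{Y_j}^{-1}(U_{ij})$, we have $F_{Y_j}(Y_{ij})=T_{F_j}(U_{ij})$ exactly. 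Hence
\[
|\widehat U_{nij}-T_{F_j}(U_{ij})|\le\sup_y|\widehat F_{nj}(y)-F_{Y_j}(y)|\to0
\]
almost surely by Glivenko--Cantelli, which holds for arbitrary margins. The bounded-Lipschitz comparison used in Lemma~\ref{lem:empirical-ranks-recover-copula} then gives $d_{\mathrm{BL}}(\widehat Q_n,Q^*_n)\to0$, where $Q^*_n=n^{-1}\sum_i\delta_{T_F(\boldsymbol U_i)}$. The vectors $T_F(\boldsymbol U_i)$ are i.i.d.\ from $Q_{\psi_0,F}$, so Varadarajan's strong law gives $d_{\mathrm{BL}}(Q^*_n,Q_{\psi_0,F})\to0$ almost surely. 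Combining the two by the triangle inequality yields $d_{\mathrm{BL}}(\widehat Q_n,Q_{\psi_0,F})\to0$ almost surely under $P^\infty_{\psi_0,F}$.

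\textbf{Separation.} By Lemma~\ref{lem:supp-coarsened-continuity}, $\psi\mapsto Q_{\psi,F}$ is continuous, and by Assumption~\ref{ass:supp-coarsened-rank-separation} it is injective. Fix $\epsilon>0$. The set $\{\psi:d_K(\psi,\psi_0)\ge\epsilon\}$ is compact, and $\psi\mapsto d_{\mathrm{BL}}(Q_{\psi,F},Q_{\psi_0,F})$ is continuous and strictly positive on it. So its infimum is attained and is positive:
\[
\Delta_\epsilon:=\inf_{d_K(\psi,\psi_0)\ge\epsilon}d_{\mathrm{BL}}(Q_{\psi,F},Q_{\psi_0,F})>0.
\]
Equivalently, $d_{\mathrm{BL}}(Q_{\psi_m,F},Q_{\psi_0,F})\to0$ forces $d_K(\psi_m,\psi_0)\to0$.

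\textbf{Estimator and recovery map.} Let $\{\psi_m\}$ be countable and dense in $\Psi_K$; this exists because $\Psi_K$ is a compact metric space. Let $\widehat\psi_n$ be the first $\psi_m$ satisfying
\[
d_{\mathrm{BL}}(\widehat Q_n,Q_{\psi_m,F})\le\inf_r d_{\mathrm{BL}}(\widehat Q_n,Q_{\psi_r,F})+1/n.
\]
Each distance is a measurable function of $\widehat Q_n$, so this first-index selection is $\mathscr R_n$-measurable. Note that $F$ is fixed and known in this construction, which is why the map is $h_F$ and carries the "conditional on $F$" qualifier. Continuity and density give
\[
\inf_r d_{\mathrm{BL}}(\widehat Q_n,Q_{\psi_r,F})=\inf_{\psi\in\Psi_K} d_{\mathrm{BL}}(\widehat Q_n,Q_{\psi,F})\le d_{\mathrm{BL}}(\widehat Q_n,Q_{\psi_0,F}).
\]
The triangle inequality then yields
\[
d_{\mathrm{BL}}(Q_{\widehat\psi_n,F},Q_{\psi_0,F})\le2\,d_{\mathrm{BL}}(\widehat Q_n,Q_{\psi_0,F})+1/n\to0,
\]
and separation gives $d_K(\widehat\psi_n,\psi_0)\to0$ almost surely. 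Define $h_F=\lim_n\widehat\psi_n$ on the event where the limit exists, and a fixed point elsewhere. This event is $\mathscr R_\infty$-measurable, being defined through Cauchy conditions on the $\mathscr R_\infty$-measurable sequence $\widehat\psi_n$. Hence $h_F$ is $\mathscr R_\infty$-measurable and equals $\psi_0$ almost surely.

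The main obstacle is the first step, because $T_F$ is discontinuous when margins have atoms. The key observation I would verify carefully is that $F_{Y_j}(F_{Y_j}^{-1}(U_{ij}))$ equals $F_{Y_j}(Y_{ij})$ pointwise, not merely in distribution, so Glivenko--Cantelli applies directly and no continuity of $T_F$ is needed. Continuity is needed only in the $\psi$ direction, and Lemma~\ref{lem:supp-coarsened-continuity} supplies it.
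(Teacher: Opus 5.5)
Your proposal is correct and follows the paper's proof essentially step for step. It uses the same $\mathscr R_n$-measurable pseudo-observations $\widehat F_{nj}(Y_{ij})$, compares them pointwise to $T_{F_j}(U_{ij})=F_{Y_j}(Y_{ij})$ via Glivenko--Cantelli and a bounded-Lipschitz bound, applies the strong law for $Q_n^*$, and builds the same first-index near-minimum-distance estimator over a countable dense subset of $\Psi_K$, with separation coming from Lemma~\ref{lem:supp-coarsened-continuity}, injectivity, and compactness. Your extra remarks, that no continuity of $T_F$ is needed and that the convergence event is $\mathscr R_\infty$-measurable, are accurate and only make explicit what the paper leaves implicit.
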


\begin{proof}
Define the empirical marginal probability labels
\[
    \widehat W_{nij}
    =
    \frac{1}{n}\sum_{\ell=1}^n1(Y_{\ell j}\le Y_{ij}),
    \qquad
    W_{ij}=F_{Y_j}(Y_{ij})=T_{F_j}(U_{ij}),
\]
and
\[
    \widehat{\boldsymbol W}_{ni}
    =
    (\widehat W_{ni1},\ldots,\widehat W_{niJ}),
    \qquad
    \boldsymbol W_i=(W_{i1},\ldots,W_{iJ}).
\]
The quantities \(\widehat W_{nij}\) are measurable with respect to the weak-rank/tie
sigma-field \(\mathscr R_n\), because they are empirical CDF values at the observed data
points.

Let
\[
    \widehat Q_n
    =
    \frac{1}{n}\sum_{i=1}^n\delta_{\widehat{\boldsymbol W}_{ni}},
    \qquad
    Q_n^*
    =
    \frac{1}{n}\sum_{i=1}^n\delta_{\boldsymbol W_i}.
\]
For each margin \(j\), the Glivenko--Cantelli theorem gives
\[
    \sup_y |\widehat F_{nj}(y)-F_{Y_j}(y)|\to0
    \qquad\text{almost surely},
\]
where \(\widehat F_{nj}(y)=n^{-1}\sum_{\ell=1}^n1(Y_{\ell j}\le y)\). Hence
\[
    \max_{1\le i\le n}|\widehat W_{nij}-W_{ij}|
    \le
    \sup_y |\widehat F_{nj}(y)-F_{Y_j}(y)|
    \to0
    \qquad\text{almost surely}.
\]
For any bounded Lipschitz function \(f:[0,1]^J\to\mathbb R\) with Lipschitz constant at most
one,
\[
\begin{aligned}
    \left|\int f\,d\widehat Q_n-\int f\,dQ_n^*\right|
    &\le
    \frac{1}{n}\sum_{i=1}^n
    \|\widehat{\boldsymbol W}_{ni}-\boldsymbol W_i\|_1 \\
    &\le
    \sum_{j=1}^J
    \sup_y |\widehat F_{nj}(y)-F_{Y_j}(y)|
    \to0
\end{aligned}
\]
almost surely. Therefore
\[
    d_{\mathrm{BL}}(\widehat Q_n,Q_n^*)\to0
    \qquad\text{almost surely}.
\]
Since \(\boldsymbol W_i=T_F(\boldsymbol U_i)\) are i.i.d. from \(Q_{\psi_0,F}\), the strong
law for empirical measures gives
\[
    d_{\mathrm{BL}}(Q_n^*,Q_{\psi_0,F})\to0
    \qquad\text{almost surely}.
\]
Thus
\[
    d_{\mathrm{BL}}(\widehat Q_n,Q_{\psi_0,F})\to0
    \qquad\text{almost surely}.
\]

We now construct a measurable minimum-distance recovery map. Since \(\Psi_K\) is compact
metric, let \(\{\psi_m:m\ge1\}\) be a countable dense subset. For each \(n\), define
\(\widehat\psi_n\) to be the first \(\psi_m\) satisfying
\[
    d_{\mathrm{BL}}(\widehat Q_n,Q_{\psi_m,F})
    \le
    \inf_{r\ge1}d_{\mathrm{BL}}(\widehat Q_n,Q_{\psi_r,F})+\frac{1}{n}.
\]
Then \(\widehat\psi_n\) is \(\mathscr R_n\)-measurable. By density of
\(\{\psi_m\}\) and Lemma~\ref{lem:supp-coarsened-continuity},
\[
    \inf_{r\ge1}d_{\mathrm{BL}}(\widehat Q_n,Q_{\psi_r,F})
    =
    \inf_{\psi\in\Psi_K}d_{\mathrm{BL}}(\widehat Q_n,Q_{\psi,F})
    \le
    d_{\mathrm{BL}}(\widehat Q_n,Q_{\psi_0,F}).
\]
Therefore
\[
\begin{aligned}
    d_{\mathrm{BL}}(Q_{\widehat\psi_n,F},Q_{\psi_0,F})
    &\le
    d_{\mathrm{BL}}(Q_{\widehat\psi_n,F},\widehat Q_n)
    +d_{\mathrm{BL}}(\widehat Q_n,Q_{\psi_0,F}) \\
    &\le
    2d_{\mathrm{BL}}(\widehat Q_n,Q_{\psi_0,F})+\frac{1}{n}
    \to0
\end{aligned}
\]
almost surely.

By Lemma~\ref{lem:supp-coarsened-continuity} and
Assumption~\ref{ass:supp-coarsened-rank-separation}, the continuous map
\(\psi\mapsto Q_{\psi,F}\) is injective on compact \(\Psi_K\). Hence, for every
\(\epsilon>0\),
\[
    \Delta_\epsilon(\psi_0)
    :=
    \inf_{\psi:\,d_K(\psi,\psi_0)\ge\epsilon}
    d_{\mathrm{BL}}(Q_{\psi,F},Q_{\psi_0,F})
    >0.
\]
The convergence
\(d_{\mathrm{BL}}(Q_{\widehat\psi_n,F},Q_{\psi_0,F})\to0\) therefore implies
\[
    d_K(\widehat\psi_n,\psi_0)\to0
    \qquad\text{almost surely}.
\]
Define
\[
    h_F(\mathscr R_\infty)=\lim_{n\to\infty}\widehat\psi_n
\]
on the almost-sure event where the limit exists, and define it arbitrarily elsewhere. Since
each \(\widehat\psi_n\) is \(\mathscr R_n\)-measurable and
\(\mathscr R_n\subseteq\mathscr R_\infty\), the limit is
\(\mathscr R_\infty\)-measurable. This proves the theorem.
\end{proof}

\begin{assumption}[Extended-rank contrast consistency]
\label{ass:supp-erl-contrast}
Fix the true pair \((\psi_0,F)\), where \(\psi_0=[\boldsymbol{\Theta}_0]\in\Psi_K\) is
the true canonical DDE copula equivalence class and \(F\) is the true margin vector. Let
\[
    \ell_n^R(\psi)=\log L_n^R(\psi;\boldsymbol Y^{(n)}),
    \qquad \psi\in\Psi_K .
\]
There exists a finite continuous function
$M_{\psi_0,F}:\Psi_K\to\mathbb R$
such that:
\begin{enumerate}
\item[(a)] Uniform contrast convergence holds:
\[
    \sup_{\psi\in\Psi_K}
    \left|
        \frac1n\{\ell_n^R(\psi)-\ell_n^R(\psi_0)\}
        -
        \{M_{\psi_0,F}(\psi)-M_{\psi_0,F}(\psi_0)\}
    \right|
    \longrightarrow 0
\]
almost surely under \(P^\infty_{\psi_0,F}\).

\item[(b)] The limiting contrast has a unique maximizer:
\[
    M_{\psi_0,F}(\psi)<M_{\psi_0,F}(\psi_0)
    \qquad
    \text{for every } \psi\ne\psi_0 .
\]
\end{enumerate}
\end{assumption}

\begin{lemma}
\label{lem:supp-erl-contrast-implies-separation}
Suppose Assumption~\ref{ass:rank-compact-prior} and
Assumption~\ref{ass:supp-erl-contrast} hold. Then, for every \(\epsilon>0\),
\[
    \sup_{\psi:\ d_K(\psi,\psi_0)\ge\epsilon}
    \{M_{\psi_0,F}(\psi)-M_{\psi_0,F}(\psi_0)\}<0 .
\]
Moreover, for every \(\eta>0\),
$\bar\Pi\!\left(
        \psi:
        M_{\psi_0,F}(\psi)-M_{\psi_0,F}(\psi_0)>-\eta
    \right)>0$.
\end{lemma}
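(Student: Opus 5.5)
The plan is to treat the two claims separately. Both follow from compactness of \(\Psi_K\) (Assumption~\ref{ass:rank-compact-prior}), continuity of \(M_{\psi_0,F}\), and the unique-maximizer condition (b) of Assumption~\ref{ass:supp-erl-contrast}.

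For the first claim, fix \(\epsilon>0\) and set \(S_\epsilon=\{\psi\in\Psi_K: d_K(\psi,\psi_0)\ge\epsilon\}\). This set is closed in the compact metric space \(\Psi_K\), so it is compact. If it is empty, the supremum is \(-\infty\) and there is nothing to prove. Otherwise the continuous function \(\psi\mapsto M_{\psi_0,F}(\psi)-M_{\psi_0,F}(\psi_0)\) attains its supremum on \(S_\epsilon\) at some \(\psi^\ast\). Since \(\psi^\ast\ne\psi_0\), condition (b) gives \(M_{\psi_0,F}(\psi^\ast)<M_{\psi_0,F}(\psi_0)\). Hence the supremum is strictly negative.

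For the second claim, fix \(\eta>0\) and consider
\[
O_\eta=\{\psi\in\Psi_K: M_{\psi_0,F}(\psi)-M_{\psi_0,F}(\psi_0)>-\eta\}.
\]
This is the preimage of the open interval \((-\eta,\infty)\) under a continuous map, so it is open in \(\Psi_K\). It is nonempty because it contains \(\psi_0\). Assumption~\ref{ass:rank-compact-prior} states that \(\bar\Pi\) gives positive mass to every nonempty open subset of \(\Psi_K\), so \(\bar\Pi(O_\eta)>0\).

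There is no real obstacle; the argument only needs some bookkeeping. Continuity of \(M_{\psi_0,F}\) must be taken with respect to the quotient topology generated by \(d_K\), which is exactly how Assumption~\ref{ass:supp-erl-contrast} is stated. Measurability of \(O_\eta\) then follows because it is open, hence Borel.
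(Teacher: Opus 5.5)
Your proposal is correct and follows essentially the same route as the paper. For the first claim, you use compactness of the closed set \(\{\psi: d_K(\psi,\psi_0)\ge\epsilon\}\), continuity of \(M_{\psi_0,F}\), and the unique-maximizer condition to get strict separation. For the second claim, you apply the prior-support condition to a nonempty open set containing \(\psi_0\). The only cosmetic difference is that you take the open preimage \(O_\eta\) directly, whereas the paper uses continuity at \(\psi_0\) to place a small \(d_K\)-ball inside the near-maximizer set; both are equally valid.
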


\begin{proof}
For fixed \(\epsilon>0\), let
\[
    N_\epsilon=\{\psi\in\Psi_K:d_K(\psi,\psi_0)\ge\epsilon\}.
\]
If \(N_\epsilon\) is empty, the first assertion in the lemma is immediate. Otherwise, \(N_\epsilon\) is
compact because \(\Psi_K\) is compact. Since \(M_{\psi_0,F}\) is continuous, it attains its
maximum on \(N_\epsilon\). By the unique maximizer condition in
Assumption~\ref{ass:supp-erl-contrast}, no point in \(N_\epsilon\) can attain the value
\(M_{\psi_0,F}(\psi_0)\). Hence the strict separation inequality holds.

For the prior-mass statement, continuity of \(M_{\psi_0,F}\) at \(\psi_0\) implies that for
every \(\eta>0\) there exists \(\delta>0\) such that
\[
    d_K(\psi,\psi_0)<\delta
    \quad\Longrightarrow\quad
    M_{\psi_0,F}(\psi)-M_{\psi_0,F}(\psi_0)>-\eta .
\]
By the prior-support condition in Assumption~\ref{ass:rank-compact-prior}, every nonempty
open ball in \(\Psi_K\) has positive \(\bar\Pi\)-probability. Therefore the near-maximizer
set has positive prior mass.
\end{proof}

\begin{remark}[Interpretation of the contrast condition]
Assumption~\ref{ass:supp-erl-contrast} requires that, asymptotically, the extended rank
likelihood assigns a strictly better normalized log score to the true DDE copula equivalence
class than to any separated alternative. This is stronger than identifying \(\psi_0\) from
the coarsened law \(Q_{\psi,F}\). The latter says that infinite weak ranks contain enough
information to distinguish the parameter; the former says that the particular generalized
likelihood used for inference exploits this information in a uniformly separating way.
\end{remark}

\begin{remark}[Relation to probit latent-factor identifiability]
Identifiability results for probit latent-factor models provide useful context for
mixed-margin DDE copulas. For dichotomous probit bifactor models, existing work shows that
thresholds and tetrachoric correlations can carry information for identifying latent loading
structures under suitable item-replication and separation conditions
\citep{fang2021identifiability}. These results do not directly verify
Assumption~\ref{ass:supp-erl-contrast}, because DDE copula uses a multilayer discrete latent
hierarchy and the extended rank likelihood is a generalized likelihood. They nevertheless
support the broader principle that coarse ordinal or binary observations can retain enough
information to identify latent dependence parameters when the measurement graph contains
sufficient replication and separation.
\end{remark}


\subsubsection{Proof of Theorem~\ref{thm:mixed-rank-generalized}}
\label{sec:supp-proof-mixed-rank-generalized}

\begin{proof}[Proof of Theorem~\ref{thm:mixed-rank-generalized}]
Let
\[
    N_\epsilon=\{\psi\in\Psi_K:d_K(\psi,\psi_0)\ge\epsilon\}.
\]
If \(N_\epsilon\) is empty, the result is immediate. Otherwise, Lemma~\ref{lem:supp-erl-contrast-implies-separation}
gives \(c_\epsilon>0\) such that
\[
    \sup_{\psi\in N_\epsilon}
    \{M_{\psi_0,F}(\psi)-M_{\psi_0,F}(\psi_0)\}
    \le -c_\epsilon.
\]
Choose \(\eta<c_\epsilon/4\), and define
\[
    V_\eta
    =
    \{\psi\in\Psi_K:
      M_{\psi_0,F}(\psi)-M_{\psi_0,F}(\psi_0)>-\eta\}.
\]
By Lemma~\ref{lem:supp-erl-contrast-implies-separation}, \(\bar\Pi(V_\eta)>0\).

By Assumption~\ref{ass:supp-erl-contrast}(a), almost surely, for all sufficiently large
\(n\),
\[
    \sup_{\psi\in N_\epsilon}
    \frac{1}{n}\{\ell_n^R(\psi)-\ell_n^R(\psi_0)\}
    \le
    -\frac{3c_\epsilon}{4},
\]
and
\[
    \inf_{\psi\in V_\eta}
    \frac{1}{n}\{\ell_n^R(\psi)-\ell_n^R(\psi_0)\}
    \ge
    -2\eta.
\]
Therefore the posterior numerator outside the \(\epsilon\)-ball satisfies
\[
    \int_{N_\epsilon}\exp\{\ell_n^R(\psi)\}\,\bar\Pi(d\psi)
    \le
    \exp\{\ell_n^R(\psi_0)-3nc_\epsilon/4\},
\]
where we used \(\bar\Pi(N_\epsilon)\le1\). The denominator satisfies
\[
\begin{aligned}
    \int_{\Psi_K}\exp\{\ell_n^R(\psi)\}\,\bar\Pi(d\psi)
    &\ge
    \int_{V_\eta}\exp\{\ell_n^R(\psi)\}\,\bar\Pi(d\psi) \\
    &\ge
    \bar\Pi(V_\eta)
    \exp\{\ell_n^R(\psi_0)-2n\eta\}.
\end{aligned}
\]
Consequently,
\[
    \bar\Pi_n^R(N_\epsilon\mid\boldsymbol Y^{(n)})
    \le
    \bar\Pi(V_\eta)^{-1}
    \exp\{-n(3c_\epsilon/4-2\eta)\}.
\]
Since \(\eta<c_\epsilon/4\), the exponent is strictly negative, and the right-hand side
converges to zero. Hence
\[
    \bar\Pi_n^R
    \{\psi:d_K(\psi,\psi_0)<\epsilon\mid\boldsymbol Y^{(n)}\}
    \to1
\]
almost surely under \(P_{\psi_0,F}^{\infty}\).
\end{proof}

\section{Detailed Algorithms}\label{sec:alg}
\subsection{Tempering to Reduce Multimodality}

We first review how to apply tempering, which is mentioned in Section \ref{sec:EM}. For the latent binary updates, all log-odds $\Delta_{ik}^{(d),t}$ are rescaled as $\tilde{\Delta}_{ik}^{(d),t} \;\leftarrow\; \tau \, \Delta_{ik}^{(d),t}$, which shrinks probabilities toward $1/2$ for $\tau < 1$ and promotes exploration across configurations.  For the Gaussian latent variables under the rank likelihood, tempering corresponds to inflating the conditional variances,
$Z_{ij} \mid (\text{--}) \sim \mathrm{TN}(\mu_{ij}, \gamma_j / \tau; L_{ij}, U_{ij}),$
which further smooths the latent landscape while preserving rank constraints. 

In the maximization step, tempering rescales the contribution of the likelihood in $\widehat Q(\boldsymbol\theta \mid \boldsymbol\theta^t)$ by $\tau$, which is equivalent to applying the same scaling within the regression-based updates for $\boldsymbol B^{(d)}$ and $\boldsymbol \gamma$.

We employ a deterministic schedule $\{\tau_t\}_{t \geq 0}$ that gradually increases from a moderate value (e.g., $\tau_0 \approx 0.7$) to $\tau=1$, at which point the algorithm targets the approximation to the RL posterior provided in Algorithm \ref{alg:mcem}. Early iterations thus emphasize exploration of the posterior landscape, while later iterations recover the untempered objective for accurate estimation. Empirically, this strategy substantially improves convergence and reduces sensitivity to initialization in deeper DDE models.
\ref{sec:sim}. 
\subsection{Coordinate Ascent Monte Carlo EM for the DDE Copula}
We provide detailed updates and sampling steps for Algorithm \ref{alg:mcem} in the main paper, as well as the rank likelihood SAEM algorithm described in the simulations of Section 
Here, we describe the algorithm for a $D = 2$ layer DDE copula. The steps may be generalized for $D > 2$. Let
\begin{equation}
\boldsymbol{\theta}
=
\Bigl(
\boldsymbol{\eta,\,
B^{(1)},\,
B^{(2)},\,
\gamma}, \{c_{k}^{(1)}\}_{k=1}^{K_{\text{max}}^{(1)}},\,
\{c_{k}^{(2)}\}_{k=1}^{K_{\text{max}}^{(2)}}\Bigr)
\end{equation}
denote the collection of model parameters. At iteration $t$, given the current state
$\boldsymbol{\theta}^{(t)}$, the coordinate-ascent Monte Carlo EM algorithm alternates between a Monte Carlo E-step and a conditional maximization step. For each subject $i=1,\dots,N$, let
\[
\boldsymbol{A}_i^{(2)} \in \{0,1\}^{K_{\text{max}}^{(1)}},
\qquad
\boldsymbol{A}_i^{(1)} \in \{0,1\}^{K_{\text{max}}^{(2)}},
\]
denote the two layers of binary latent variables. We augment each layer with an intercept entry equal to one:
\[
\widetilde{\boldsymbol{A}}_i^{(1)} = (1, \boldsymbol{A}_i^{(1)}),
\qquad
\widetilde{\boldsymbol{A}}_i^{(2)} = (1, \boldsymbol{A}_i^{(2)}).
\]
The latent Gaussian copula variable for observed feature $j$ is denoted $Z_{ij}$, and the residual variance for feature $j$ is $\gamma_j$.

\bigskip

\noindent
\textbf{Monte Carlo E-step.}
Given $\boldsymbol \theta^{(t)}$, perform the following updates. For all of our implementations, we draw $C = 1$ samples in each step.

\begin{itemize}
    \item \textbf{Sample the binary latent variables} $\{\boldsymbol {A}_{i}^{(1)},\boldsymbol{A}_{i}^{(2)}\}_{i=1}^{N}$ 
        \begin{itemize}
    \item \textbf{Top layer ($d = D$):}
    \begin{align}
    \Delta_{ik}^{(D),t} &=
    \log \frac{\pi_k^t}{1-\pi_k^t}
    +
    \log p\!\left(\boldsymbol A_i^{(D-1),t-1} \mid A_{ik}^{(D)} = 1, \boldsymbol A_{i,-k}^{(D),t-1}, \boldsymbol \theta^t\right)\\
    &-
    \log p\!\left(\boldsymbol A_i^{(D-1),t} \mid A_{ik}^{(D)} = 0, \boldsymbol A_{i,-k}^{(D),t-1}, \boldsymbol \theta^t\right).
    \end{align}

    \item \textbf{Intermediate layers ($2 \le d \le D-1$):}
    \[
    \Delta_{ik}^{(d),t} =
    \log p\!\left(A_{ik}^{(d)} = 1 \mid \boldsymbol A_i^{(d+1),t-1}, \boldsymbol \theta^t\right)
    -
    \log p\!\left(A_{ik}^{(d)} = 0 \mid \boldsymbol A_i^{(d+1),t-1}, \boldsymbol \theta^t\right)
    \]
    \[
    \quad +
    \log p\!\left(\boldsymbol A_i^{(d-1),t-1} \mid A_{ik}^{(d)} = 1, \boldsymbol A_{i,-k}^{(d),t-1}, \boldsymbol \theta^t\right)
    -
    \log p\!\left(\boldsymbol A_i^{(d-1),t-1} \mid A_{ik}^{(d)} = 0, \boldsymbol A_{i,-k}^{(d),t-1}, \boldsymbol \theta^t\right).
    \]

    \item \textbf{Bottom layer ($d = 1$):}
    \[
    \Delta_{ik}^{(1),t} =
    \log p\!\left(A_{ik}^{(1)} = 1 \mid \boldsymbol A_i^{(2),t-1}, \boldsymbol \theta^t\right)
    -
    \log p\!\left(A_{ik}^{(1)} = 0 \mid \boldsymbol A_i^{(2),t-1}, \boldsymbol \theta^t\right)
    \]
    \[
    \quad +
    \log p\!\left(\boldsymbol Z_i^{t} \mid A_{ik}^{(1)} = 1, \boldsymbol A_{i,-k}^{(1),t-1}, \boldsymbol \theta^t\right)
    -
    \log p\!\left(\boldsymbol Z_i^{t} \mid A_{ik}^{(1)} = 0, \boldsymbol A_{i,-k}^{(1),t-1}, \boldsymbol \theta^t\right).
    \]
\end{itemize}
    \item  \textbf{Sample the rank likelihood latent Gaussian variables} $\boldsymbol Z$ via Algorithm \ref{alg:rank_aug}.

    \item \textbf{Update the CUSP column-allocation variables for layer 2.}  
    
    For each column $h=1,\dots,K^{(2)}_{\text{max}}$, let $x_k^{(2),t}=B_{\cdot,h+1}^{(2),t}$ denote the $h$th non-intercept column of $\boldsymbol{B}^{(2),t}$. Define
    \[
    \log p_{\mathrm{spike}}^{(2)}(x_k^{(2),t})
    =
    -K^{(1)}_{\text{max}} \log(2\lambda_{1}^{(2)})
    -
    \frac{\|x_k^{(2),t}\|_1}{\lambda_{1}^{(2)}},
    \]
    and
    \[
    \log p_{\mathrm{slab}}^{(2)}(x_k^{(2),t})
    =
    \log 5
    -
    K^{(1), t}_{\text{active}}\log 2
    +
    \log\Gamma(1+K^{(1),t}_{\text{active}})
    -
    \log\Gamma(1)
    -
    (1+K^{(1),t}_{\text{active}})\log\!\bigl(5+\|x_k^{(2),t}\|_1\bigr).
    \]
    This derived by marginalizing the Laplace slab density with rate $\lambda^{(2)}_{0k}$, where $\lambda^{(2)}_{0k}\sim\text{Gamma}(5,1)$. $\Gamma(.)$ is the gamma function. Then for each allocation state $k=1,\dots,K^{(2)}_{\text{max}}+1$,
    \[
    \log \Pr(c_k^{(2),t}=\ell \mid \boldsymbol{B}^{(2),t},\omega^{(2),t})
    \propto
    \log \omega_k^{(2),t}
    +
    \begin{cases}
    \log p_{\mathrm{spike}}^{(2)}(x_k^{(2),t}), & \ell \le k,\\[0.3em]
    \log p_{\mathrm{slab}}^{(2)}(x_k^{(2),t}), & \ell > k.
    \end{cases}
    \]
    The implementation uses the MAP update
    \[
    c_k^{(2),t}
    =
    \mbox{argmax}_{\ell \in\{1,\dots,K^{(2)}_{\text{max}} + 1\}}
    \Pr(c_k^{(2),t}=\ell \mid \boldsymbol{B}^{(2),t},\omega^{(2),t}).
    \]

    \item \textbf{Update the CUSP stick-breaking weights for layer 2.}  
    Let
    \[
    n_\ell^{(2),t} = \sum_{h=1}^{K_{\text{max}}^{(2)}}\mathbf 1\{c_k^{(2),t}=\ell\},
    \qquad
    n_{>\ell}^{(2),t} = \sum_{h=1}^{K_{\text{max}}^{(2)}}\mathbf 1\{c_k^{(2),t}>\ell\}.
    \]
    Then for $\ell=1,\dots,K^{(2)}_{\text{max}}$,
    \[
    v_\ell^{(2),t}
    \sim
    \mathrm{Beta}\!\bigl(1+n_\ell^{(2),t},\, \alpha^{(2)} + n_{>\ell}^{(2),t}\bigr),
    \qquad \alpha^{(2)} = K^{(2)}_{\text{max}},
    \]
    and set $v_{K^{(2)}_{\text{max}} + 1}^{(2),t}=1$. The corresponding mixture weights are
    \[
    \omega_1^{(2),t}=v_1^{(2),t},
    \qquad
    \omega_\ell^{(2),t}
    =
    v_\ell^{(2),t}\prod_{m=1}^{\ell-1}(1-v_m^{(2),t}),
    \quad \ell=2,\dots,K^{(2)}_{\text{max}}+1.
    \]

    \item \textbf{Update the slab scales for layer 2.}  
    For each $h=1,\dots,K^{(2)}_{\text{max}}$ such that $c_k^{(2),t}>h$, sample
    \[
    \lambda_{1,k}^{(2),t}
    \sim
    \mathrm{Gamma}\!\left(
    1+K^{(1),t}_{\text{active}},\,
    5+\|\boldsymbol{B}_{\cdot,h+1}^{(2),t}\|_1
    \right),
    \]
    where the second argument is the rate parameter. Then define the penalty matrix
    \[
    \lambda_{kh}^{(2),t}
    =
    \begin{cases}
    \lambda_{1}^{(2)}, & c_k^{(2),t}\le h,\\
    \lambda_{1,k}^{(2),t}, & c_k^{(2),t}>h.
    \end{cases}
    \]

    \item \textbf{Sample the CUSP column-allocation variables for layer 1.}  
    For each column $k=1,\dots,K^{(1)}_{\text{max}}$, let $x_k^{(1),t}=\boldsymbol{B}_{\cdot,h+1}^{(1),t}$. Define
    \[
    \log p_{\mathrm{spike}}^{(1)}(x_k^{(1),t})
    =
    -J \log(2\lambda_{1}^{(1)})
    -
    \frac{\|x_k^{(1),t}\|_1}{\lambda_{1}^{(1)}},
    \]
    and
    \[
    \log p_{\mathrm{slab}}^{(1)}(x_k^{(1),t})
    =
    \log 5
    -
    J\log 2
    +
    \log\Gamma(1+J)
    -
    \log\Gamma(1)
    -
    (1+J)\log\!\bigl(5+\|x_k^{(1),t}\|_1\bigr).
    \]
    Then for $\ell=1,\dots,K^{(1)}_{\text{max}}+1$,
    \[
    \log \Pr(c_k^{(1),t}=\ell \mid \boldsymbol{B}^{(1),t},\omega^{(1),t})
    \propto
    \log \omega_\ell^{(1),t}
    +
    \begin{cases}
    \log p_{\mathrm{spike}}^{(1)}(x_k^{(1),t}), & \ell \le k,\\[0.3em]
    \log p_{\mathrm{slab}}^{(1)}(x_k^{(1),t}), & \ell > k.
    \end{cases}
    \]
    Again, the implementation uses the MAP choice
    \[
    c_k^{(1),t}
    =
    \arg\max_{\ell\in\{1,\dots,K^{(1)}_{\text{max}}+1\}}
    \Pr(c_k^{(1),t}=\ell \mid \boldsymbol{B}^{(1),t},\omega^{(1),t}).
    \]

    \item \textbf{Update the CUSP stick-breaking weights and slab scales for layer 1.}  
    Let
    \[
    n_\ell^{(1),t} = \sum_{h=1}^{K_{\text{max}}^{(1)}}\mathbf 1\{c_k^{(1),t}=\ell\},
    \qquad
    n_{>\ell}^{(1),t} = \sum_{h=1}^{K_{\text{max}}^{(1)}}\mathbf 1\{c_k^{(1),t}>\ell\}.
    \]
    Then for $\ell=1,\dots,K^{(1)}_{\text{max}}$,
    \[
    v_\ell^{(1),t}
    \sim
    \mathrm{Beta}\!\bigl(1+n_\ell^{(1),t},\, \alpha^{(1)} + n_{>\ell}^{(1),t}\bigr),
    \qquad \alpha^{(1)} = K^{(1)}_{\text{max}},
    \]
    with $v_{K^{(1)}_{\text{max}}+1}^{(1),t}=1$, and
    \[
    \omega_1^{(1),t}=v_1^{(1),t},
    \qquad
    \omega_\ell^{(1),t}
    =
    v_\ell^{(1),t}\prod_{m=1}^{\ell-1}(1-v_m^{(1),t}),
    \quad \ell=2,\dots,K^{(1)}_{\text{max}}+1.
    \]
    For each active slab column $h$ such that $c_k^{(1),t}>h$,
    \[
    \lambda_{0,k}^{(1),t}
    \sim
    \mathrm{Gamma}\!\left(
    1+J,\,
    5+\|\boldsymbol{B}_{\cdot,k+1}^{(1),t}\|_1
    \right),
    \]
    and
    \[
    \lambda_{jk}^{(1),t}
    =
    \begin{cases}
    \lambda_{1}^{(1)}, & c_k^{(1),t}\le k,\\
    \lambda_{0,k}^{(1),t}, & c_k^{(1),t}>k.
    \end{cases}
    \]
\end{itemize}

\bigskip

\noindent
\textbf{Conditional M-step.}
Conditional on the Monte Carlo draws from the E-step, maximize the surrogate objective $\widehat Q(\theta\mid \theta^{(t)})$ blockwise.

\begin{itemize}
    \item \textbf{Update the top-layer Bernoulli probabilities.}  
    The complete-data conditional log-likelihood for $\pi=(\pi_1,\dots,\pi_{K_{\text{max}}}^{(2)})$ is
    \[
    \sum_{i=1}^N \sum_{l=1}^{K_{\text{max}}^{(2)}}
    \Bigl[
    A_{il}^{(2)} \log \pi_l
    +
    (1-A_{il}^{(2)})\log(1-\pi_l)
    \Bigr].
    \]
    Replacing $A_{il}^{(2)}$ by its Monte Carlo expectation $q_{il}^{(t)}$ yields
    \[
    \pi_l^{(t+1)}
    =
    \arg\max_{\pi_l\in(0,1)}
    \sum_{i=1}^N
    \Bigl[
    q_{il}^{(t)} \log \pi_l
    +
    (1-q_{il}^{(t)})\log(1-\pi_l)
    \Bigr]
    =
    \frac{1}{N}\sum_{i=1}^N q_{il}^{(t)}.
    \]

    \item \textbf{Update the first-layer loading matrix $B^{(1)}$.}  
    For each observed variable $j=1,\dots,J$, define the Monte Carlo Gaussian objective for $C$ simulations from the conditional posterior in the MC E-step as
    \[
    \ell_j^{(1)}(b)
    =
    \frac{1}{2\gamma_j^{(t)}}
    \sum_{i=1}^N \sum_{c=1}^C
    \left(
    Z_{ij}^{(t)}
    -
    \widetilde{\boldsymbol{A}}_i^{(1),(c),t} \boldsymbol{B}_{j.}^{(1),t^{\top}}
    \right)^2.
    \]
    The adaptive weighted $\ell_1$ penalty is
    \[
    \mathcal P_j^{(1)}(b)
    =
    \sum_{k=1}^{K_{\text{max}}^{(1)}}
    |\boldsymbol{B}_{j k+1}|    1/\lambda_{jk}^{(1),t}.
    \]
    Thus the row update is
    \[
    \boldsymbol{B}_{j,\cdot}^{(1),t+1}
    =
    \arg\min_{b\in\mathbb R^{K^{(1)}_{\text{max}}+1}}
    \left\{
    \ell_j^{(1)}(b)
    +
    \mathcal P_j^{(1)}(b)
    \right\}.
    \]

    \item \textbf{Update the residual variances $\gamma_j$.}  
    Given the updated row $\boldsymbol{B}_{j,\cdot}^{(1),t+1}$, define
    \[
    \eta_{ij}^{(c),t+1}
    =
    \widetilde A_i^{(1),(c),t}
    \boldsymbol{B}_{j,\cdot}^{(1),t+1\top}.
    \]
    With  $\gamma_{j} \sim \mathrm{Gamma}(1,1)$, the conditional MAP update is
    \[
    \gamma_j^{(t+1)}
    =
    \frac{
    1 + \frac12 \sum_{i=1}^N \sum_{c=1}^C
    \bigl(Z_{ij}^{(t)}-\eta_{ij}^{(c),t+1}\bigr)^2
    }{
    1 + \frac12 NC + 1
    }.
    \]

    \item \textbf{Update the second-layer loading matrix $B^{(2)}$.}  
    For each hidden unit $k=1,\dots,K^{(1)}_{\text{max}}$, define the Monte Carlo logistic objective
    \[
    \ell_k^{(2)}(b)
    =
    -
    \sum_{i=1}^N \sum_{c=1}^C
    \left[
    A_{ik}^{(1),(c),t}
    \,\widetilde A_i^{(2),(c),t}B^{(2),t^{\top}}_{k.}
    -
\log\!\left(1+\exp\bigl(\widetilde A_i^{(2),(c),t} \boldsymbol{B}_{k\cdot}^{(2),t\top}\bigr)\right)
    \right].
    \]
    The adaptive weighted $\ell_1$ penalty is
    \[
    \mathcal P_k^{(2)}(b)
    =
    \sum_{l=1}^{K_{\text{max}}^{(2)}}
    |B^{(2)}_{kl+1}|1/\lambda_{kl}^{(2),t}.
    \]
    Hence
    \[
    \boldsymbol{B}_{k,\cdot}^{(2),t+1}
    =
    \arg\min_{b\in\mathbb R^{K^{(2)}_{\text{max}}+1}}
    \left\{
    \ell_k^{(2)}(b)
    +
    \mathcal P_k^{(2)}(b)
    \right\},
    \]
    again solved numerically with \texttt{fmincon}.

    \item \textbf{Threshold small coefficients.}  
    After the numerical updates, apply elementwise thresholding to the non-intercept coefficients:
    \[
    \boldsymbol{B}_{\cdot,2:(K^{(1)}_{\text{max}}+1)}^{(1),t+1}
    \leftarrow
    \operatorname{thres}\!\left(
    \boldsymbol{B}_{\cdot,2:(K^{(1)}_{\text{max}}+1)}^{(1),t+1},\, \xi
    \right),
    \]
    \[
    \boldsymbol{B}_{\cdot,2:(K^{(2)}_{\text{max}}+1)}^{(2),t+1}
    \leftarrow
    \operatorname{thres}\!\left(
    \boldsymbol{B}_{\cdot,2:(K^{(2)}_{\text{max}}+1)}^{(2),t+1},\, \xi
    \right).
    \]
Here, the $ \operatorname{thres}(.)$ function maps any entry of the matrix less than $\xi$ to zero, which speeds up convergence of the algorithm. We find this step necessary in order to minimize the noise injection from the Monte Carlo E-step. In practice, we tune $\xi$ based on the predictive evaluation procedure outlined in Section \ref{sec:realdat}. See Section \ref{sec:hyp} for suggested default values. 
    \item \textbf{Enforce hierarchical gating implied by the CSP activations in layer 1.}  
    Let
    \[
    \mathcal I_2^{(t)} = \{k : c_k^{(1),t}\le h\},
    \] Then,
    $\boldsymbol{B}_{k,.}^{(2),t+1}=0
    \quad \text{for } k\in\mathcal I_2^{(t)}.$

    \item \textbf{Update binary point estimates.}  
    After drawing $C$ Monte Carlo replicates, the binary point estimates are updated by majority vote:
    \[
    A_{ik}^{(1),t+1}
    =
    \mathbf 1\!\left(
    \frac{1}{C}\sum_{c=1}^C A_{ik}^{(1),(c),t} > \frac12
    \right),
    \qquad
    A_{il}^{(2),t+1}
    =
    \mathbf 1\!\left(
    \frac{1}{C}\sum_{c=1}^C A_{il}^{(2),(c),t} > \frac12
    \right).
    \]

    \item \textbf{Update the effective number of active nodes.}  
    The active widths of the two latent layers are
    \[
    K^{(1)^{*,t+1}_{\text{max}}}
    =
    \sum_{h=1}^{K_{\text{max}}^{(1)}}\mathbf 1\{c_k^{(1),t}>h\},
    \qquad
    K^{(2)^{*,t+1}}_{\text{max}}
    =
    \sum_{h=1}^{K_{\text{max}}^{(2)}}\mathbf 1\{c_k^{(2),t}>h\}.
    \]
\end{itemize}

\paragraph{Convergence and temperature schedule.}
The algorithm monitors the relative Frobenius changes
\[
\frac{\|\boldsymbol{B}^{(1),t+1}-\boldsymbol{B}^{(1),t}\|_F}{\|\boldsymbol{B}^{(1),t}\|_F+\varepsilon},
\qquad
\frac{\|\boldsymbol{B}^{(2),t+1}-\boldsymbol{B}^{(2),t}\|_F}{\|\boldsymbol{B}^{(2),t}\|_F+\varepsilon},
\]
and stops when their rolling-window variability is sufficiently small. Here $\epsilon$ is a constant so that the quantity remains finite.  After burn-in, the temperature is annealed upward according to
\[
\tau^{(t+1)}
=
\min\!\left\{1,\,
\tau^{(t)} + 0.01\bigl((t-\mathrm{burn})-1\bigr)
\right\}.
\]
Section \ref{sec:hyp} has recommendations default starting values.
\subsection{Rank likelihood 
stochastic approximate EM}
\cite{lee2026dde} introduce a stochastic approximate EM (SAEM) algorithm for estimation of parametric DDEs, which is adopted for DDE copula estimation in Section \ref{sec:sim} (referred to as SAEM RL in the main text).
We refer readers to their Algorithm 2 in the main text supplement S.3.2 for detail.  To extend their algorithm to the maximum likelihood estimation under the rank likelihood DDE copula requires only one additional step. Namely, in addition to sampling binary latent variables $\{\boldsymbol A^{(d)}\}_{d=1}^{D}$, we also sample rank-consistent $\boldsymbol Z$ using Algorithm \ref{alg:rank_aug}. Then, conditional on $\boldsymbol Z \in \mathcal{R}(\boldsymbol Y)$, the subsequent M-step, which maximizes a stochastically averaged objective, is identical to the M-step for a Gaussian DDE presented in \cite{lee2026dde}. The procedure is outlined in Algorithm \ref{alg:saem_dde}.

In this case, the DDE copula parameters are $\boldsymbol \theta  = (\{\boldsymbol B^{(d)}\}, \boldsymbol \gamma)$, since dimension is inferred through the sparsity pattern in the weights. We use the default hyperparameter selections for Normal DDEs recommended by \cite{lee2026dde}, Supplement S.3.4, which include thresholding values limits for small entries in the weight matrices, as well as penalty parameters for the truncated LASSO penalty function \citep{shen2012likelihood} used in their optimization.

\begin{algorithm}[h]
\caption{Penalized SAEM algorithm for the two-latent-layer DDE}
\label{alg:saem_dde}
\begin{algorithmic}

\REQUIRE $\boldsymbol Y$

\STATE Initialize $\boldsymbol A^{(1),0}, \boldsymbol A^{(2),0}$, $\boldsymbol Z^{0}$ (via Algorithm \ref{fig:specinit}), and $\boldsymbol \theta^{0}$ (DDE parameters).

\WHILE{$\|\boldsymbol \theta^{t} - \boldsymbol \theta^{t-1}\|$ is larger than a threshold}

\STATE \textbf{Iteration $t$}

\STATE \textit{// Simulation step}

\STATE Sample $\boldsymbol A^{(1),t+1}_{i,k}$ and $\boldsymbol A^{(2),t+1}_{i,k}$ from the complete conditionals using previous parameter estimates $\boldsymbol \theta^{t}$ and $\boldsymbol A^{[t]}$.
\STATE Sample $\boldsymbol{Z} \in \mathcal{R}(\boldsymbol Y)$

\STATE \textit{// Stochastic approximation M-step}

\STATE Update parameters $\boldsymbol \theta^{[t+1]}$ by maximizing the stochastic averaged objective for a Gaussian DDE

\ENDWHILE

\STATE \textbf{Output:} Estimated continuous parameters $\widehat{\boldsymbol \theta}$
\end{algorithmic}
\end{algorithm}

\subsection{Addressing latent variable permutations for simulations}

As mentioned in \cite{lee2026dde}, resolving latent variable permutation is necessary to accurately compute errors between estimated weight matrices and data generating weight matrices. Given an estimated weight matrix $\hat{\boldsymbol{B}}^{(d)}$, we resolve the permutation ambiguity by rearranging orderings of latent variables (and thus the rows and columns of each layer-specific weight matrix) and solving an assignment problem for each layer in a bottom-up fashion.

First, we construct a $K^{(1)}_{\text{max}} \times K^{(1)}_{\text{max}}$ cost matrix, where each entry is the squared $L^{2}$ norm between corresponding column vectors of $\boldsymbol B^{(1)}$ and $\hat{\boldsymbol {B}}^{(1)}$. Next, using the Hungarian algorithm of \cite{kuhn2005hungarian}, we find the optimal permutation of the columns of $\hat{\boldsymbol{B}}^{(1)}$ that minimizes assignment cost. In a recursive manner, we then permute the rows of $\hat{\boldsymbol{ B}}^{(2)}$ based on the optimal permutation of the columns in $\hat{\boldsymbol B}^{(1)}$, and solve the assignment problem Hungarian for the optimal permutation of the columns of $\hat{\boldsymbol{B}}^{(2)}$. This procedure may be generalized for deeper models by solving the assignment problem for $\hat{\boldsymbol B}^{(d)}$, permuting the rows of $\hat{\boldsymbol B}^{(d+1)}$ based on the solution in the previous layer, and then again solving the assignment problem.
\subsection{Hyperparameter selections}\label{sec:hyp}

In our simulations and real data analysis, we find that there are four parameters that meaningfully effect DDE Copula estimation: the layer-specific rates of the spike-Laplace kernel in the CSP prior $\{\lambda_{1}^{(d)}\}_{d=1}^{D}$, the temperature $\tau$, and  $\xi$, the truncation limit for thresolding  small coefficients.

\paragraph{- $\boldsymbol{\{\lambda_{1}^{(d)}\}_{d=1}^{D}}$} For 2-layer DDE copulas, we generally specify $\lambda_{1}^{(1)} <\lambda_{1}^{(2)}$, with the of each depending on the level of sparsity. For larger $\lambda_{1}^{(d)}$, there is higher prior probability that a given column will come from the spike. Thus, we encourage more shrinkage in layer 2, which is affected by noise propagation from layer 1 latent variables. For $J = 50$, we set $\lambda_{1}^{(1)} = .02$ and $\lambda_{1}^{(2)} = .04$, since there were fewer irrelevant columns in each weight matrix. For $J = 100,150$, we set $\lambda_{1}^{(1)} = .05$ and $\lambda_{1}^{(2)} = .1$, as stronger shrinkage was required. For deeper DDE copula  we recommend the ascending pattern for the penalty parameters for the first two layers, but suggest $\lambda_{1}^{(d)} < .01, d>2$.  We find that, while noise propagation through the network can create false positives in shallow layers, as the DDE becomes deeper, it is increasingly difficult to detect signal. In  our deeper simulation (Section \ref{sec:deep}), we set $\lambda_{1}^{(1)} = .01, \lambda_{1}^{(2)} = .03,\lambda_{1}^{(3)} = .005$

\paragraph{- $\boldsymbol \tau$} For the temperature, lower values will encourage more sparsity. The initial temperature should also balance the depth of the DDE. In our two-layer simulations we use $\tau = .7$, which is subsequently raised in each iteration of the fitting algorithm. We note that we also apply tempering to the RL SAEM algorithm. For the deeper simulation in Section \ref{sec:deep}, the initial temperature of .7 prevented us from estimating any non-zero entries in $\boldsymbol{B}^{(3)}$, so we used $\tau = .9$ for those simulations.  

\paragraph{- $\boldsymbol \xi$}
We follow the recommendations of \cite{lee2026dde} and use $\xi = \max(.3, 3*N^{-.3})$ for all simulations and real data analyses
\section{Additional Simulation Results}\label{sec:addsim}
We complete the results presented in Section \ref{sec:sim} by including the simulations for $J = 50$ and $J = 150$. We note largely consistent results to what is presented in the main text: RL CSP provides more accurate weight, sparsity structure, and dimension estimation, regardless of the dimension of $\boldsymbol Y$. First we include the average estimate of $\boldsymbol B^{(1)}$ across sample sizes and methods for $J = 100$ in Figure \ref{fig:B1s_J100}, analogous to what we presented in Figure \ref{fig:BsJ100} in the main text.

\begin{figure}[h]
    \centering
    \includegraphics[width=\linewidth, keepaspectratio]{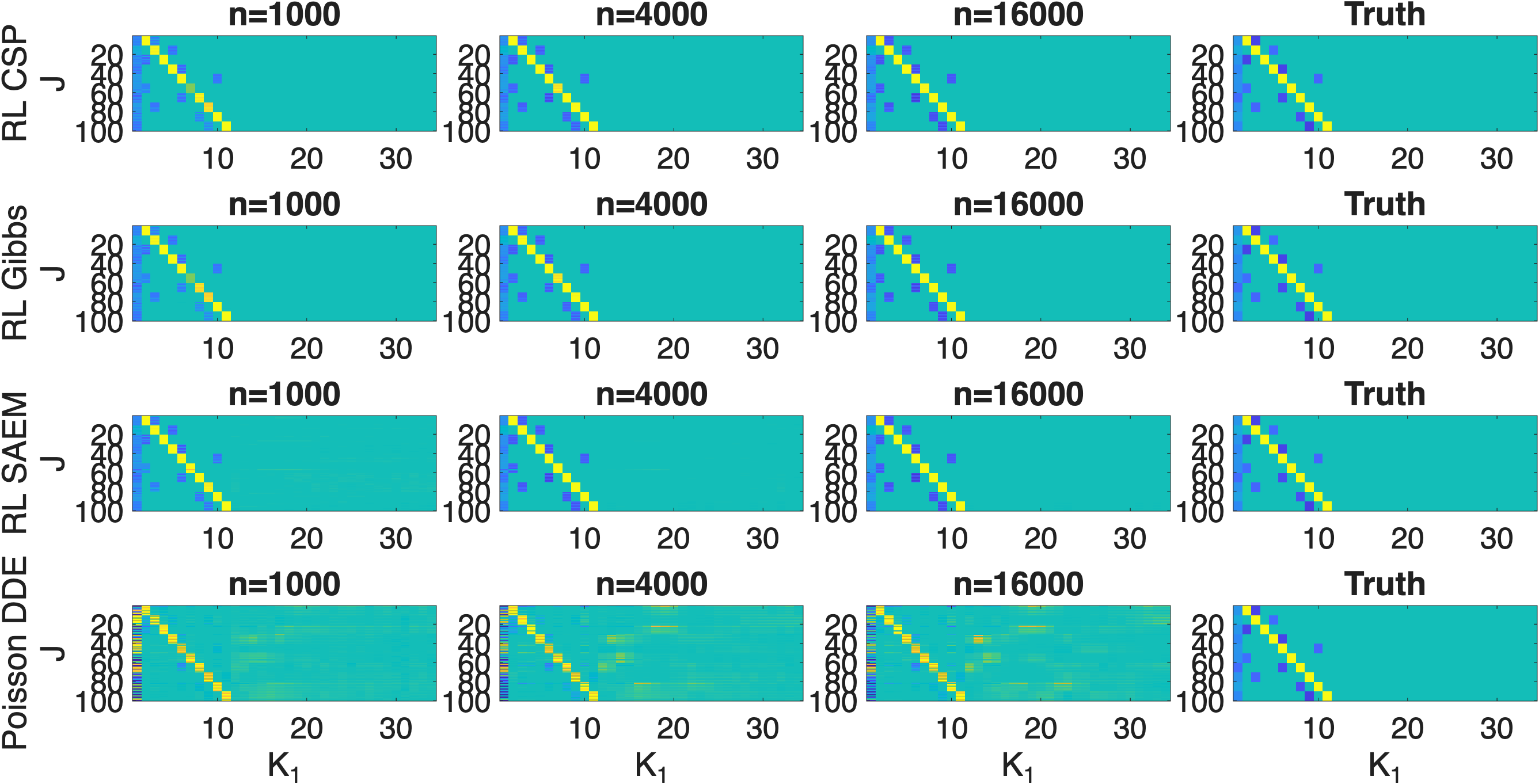}
    \caption{$J = 100$: Average estimate of $\boldsymbol{B}^{(1)}$ across methods and sample sizes }
    \label{fig:B1s_J100}
\end{figure}

We note that all rank-likelihood methods estimate $\boldsymbol{B}^{(1)}$ accurately in increasing dimensions, but that the Poisson DDE performs substantially worse.  Next we provide complete results for the remaining sample sizes.
\subsection{J = 50}
Analogous to the reuslts in the main text, in table Figure \ref{fig:DGPsim}, we include the data generating weight matrices for the $J = 50$ case. In Table \ref{tab:recovery_resultsJ50}, we provide dimension and sparsity structure estimation results, while in Figures \ref{fig:MSEBsJ50}-\ref{fig:B2sJ50} we provide weight matrix estimation statistics and visualizations.

\begin{figure}[h]
    \centering
    \includegraphics[width=0.45\linewidth]{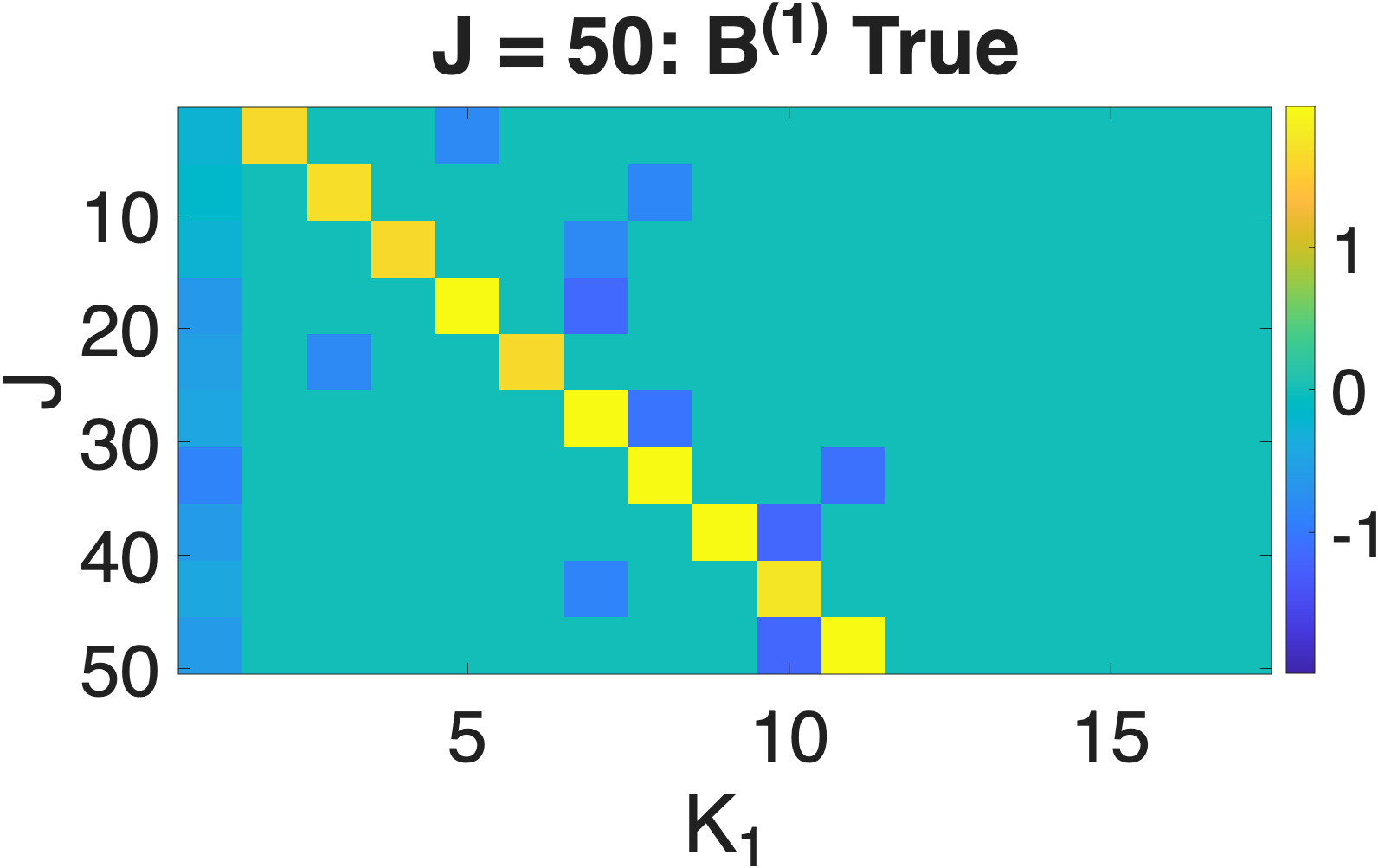}
        \includegraphics[width=0.45\linewidth]{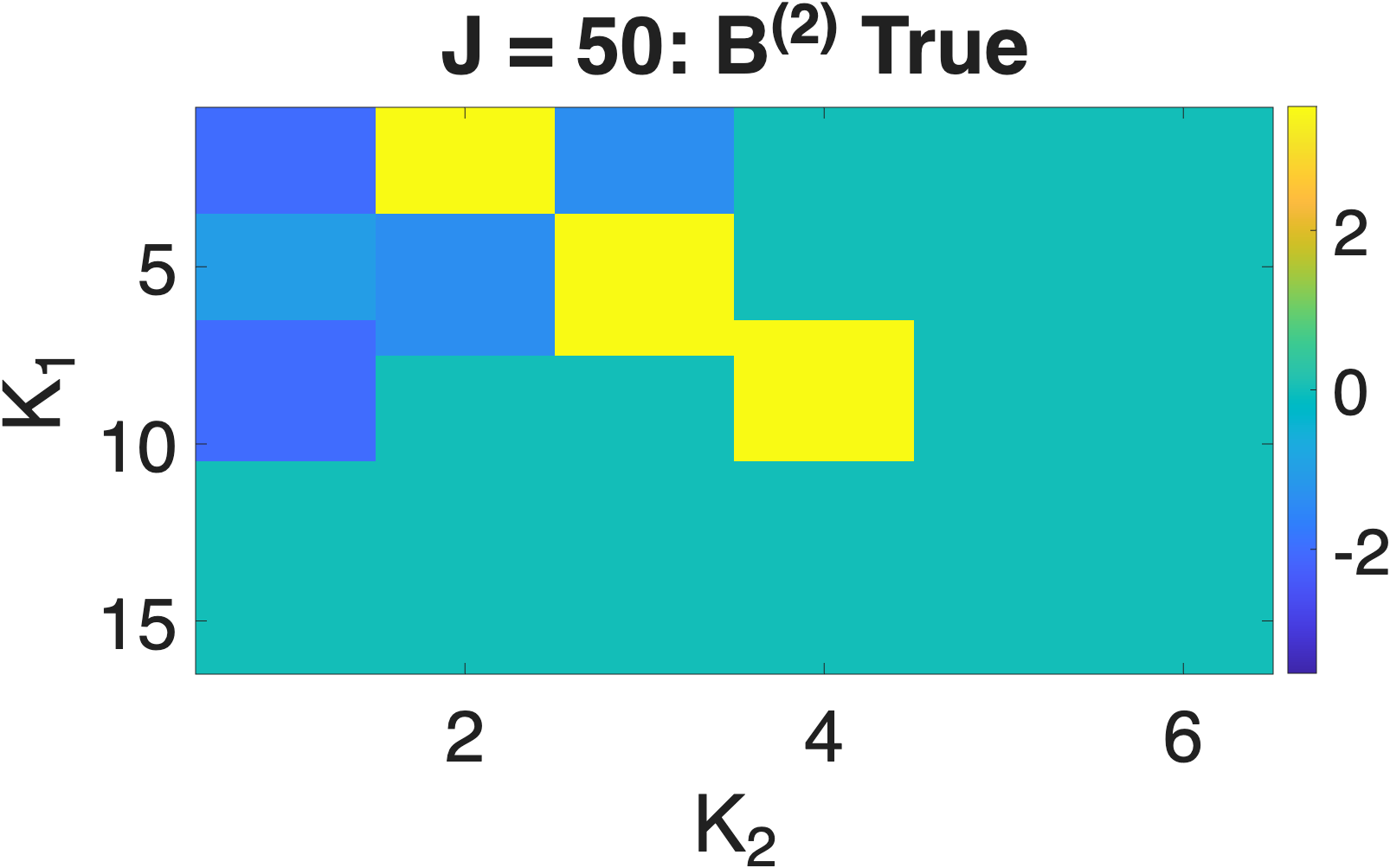}
    \caption{Data generating $\{\boldsymbol B^{(d)}\}_{d=1}^{2}$ for $J = 50$}
    \label{fig:DGPsim}
\end{figure}

\begin{table}[h]
\centering
\begin{tabular}{lcccccc}
\toprule
 & \multicolumn{6}{c}{$n$} \\
\cmidrule(lr){2-7}
 & $500$ & $1000$ & $2000$ & $4000$ & $8000$ & $16000$ \\
\midrule

\multicolumn{7}{l}{\textbf{Average recovery of $G_1$}} \\
RL CSP     & 0.976 & 0.992 & 0.998 & 1.000 & 1.000 & 1.000 \\
RL Gibbs   & 0.976 & 0.991 & 0.998 & 0.999 & 1.000 & 0.999 \\
RL SAEM    & 0.982 & 0.989 & 0.991 & 0.993 & 0.995 & 0.995 \\
Poisson DDE& 0.909 & 0.888 & 0.861 & 0.857 & 0.855 & 0.849 \\

\multicolumn{7}{l}{\textbf{Average recovery of $G_2$}} \\
RL CSP     & 0.926 & 0.957 & 0.976 & 0.995 & 0.997 & 0.999 \\
RL Gibbs   & 0.896 & 0.880 & 0.875 & 0.887 & 0.892 & 0.900 \\
RL SAEM    & 0.790 & 0.776 & 0.755 & 0.768 & 0.776 & 0.785 \\
Poisson DDE& 0.713 & 0.668 & 0.622 & 0.615 & 0.605 & 0.613 \\

\multicolumn{7}{l}{\textbf{Average MAP estimate of Layer 1 dimension $K^{(1)^{*}}$}} \\
RL CSP     & 10.020 & 10.060 & 10.090 & 10.020 & 10.000 & 10.000 \\
RL Gibbs   & 10.050 & 10.130 & 10.100 & 10.060 & 10.000 & 10.000 \\
RL SAEM    & --     & --     & --     & --     & --     & --     \\
Poisson DDE& --     & --     & --     & --     & --     & --     \\

\multicolumn{7}{l}{\textbf{Average MAP estimate of Layer 2 dimension $K^{(2)^{*}}$}} \\
RL CSP     & 3.550 & 3.130 & 3.030 & 3.010 & 3.010 & 3.010 \\
RL Gibbs   & 4.240 & 4.770 & 4.960 & 4.990 & 4.990 & 5.000 \\
RL SAEM    & --    & --    & --    & --    & --    & --    \\
Poisson DDE& --    & --    & --    & --    & --    & --    \\

\bottomrule
\end{tabular}
\caption{$J = 50$: Sparsity pattern recovery accuracy and estimated number of active nodes across methods and sample sizes.}
\label{tab:recovery_resultsJ50}
\end{table}

\begin{figure}[h]
    \centering
    \includegraphics[width=0.7\linewidth, keepaspectratio]{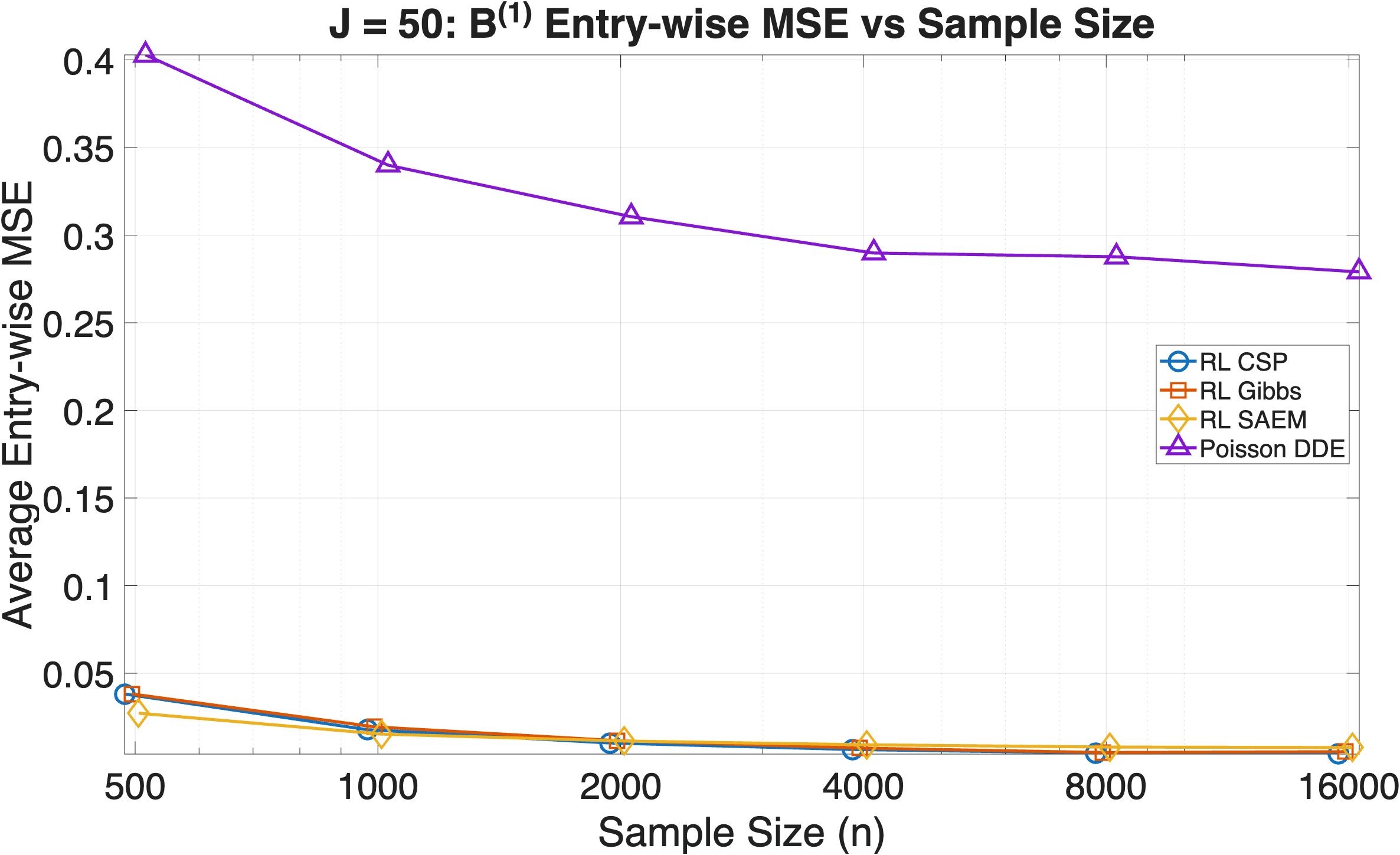}
        \includegraphics[width=0.7\linewidth,keepaspectratio]{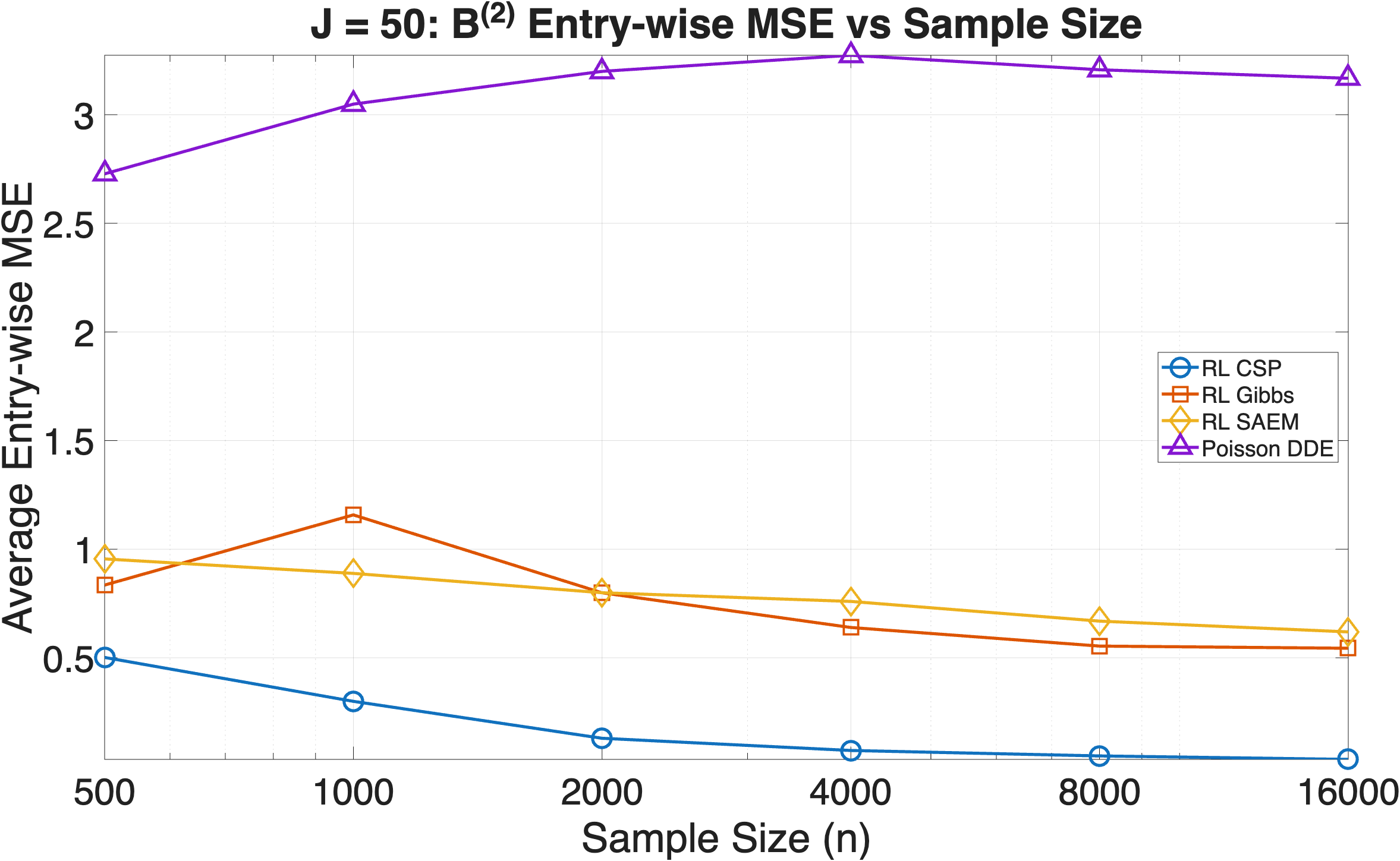}
    \caption{$J = 100$: Average entry-wise MSE for estimates of $\boldsymbol{B}^{(1)}$ (top four rows) and $\boldsymbol{B}^{(2)}$ across sample sizes and methods} 
    \label{fig:MSEBsJ50}
\end{figure}

\begin{figure}[h]
    \centering
        \includegraphics[width=\linewidth,keepaspectratio]{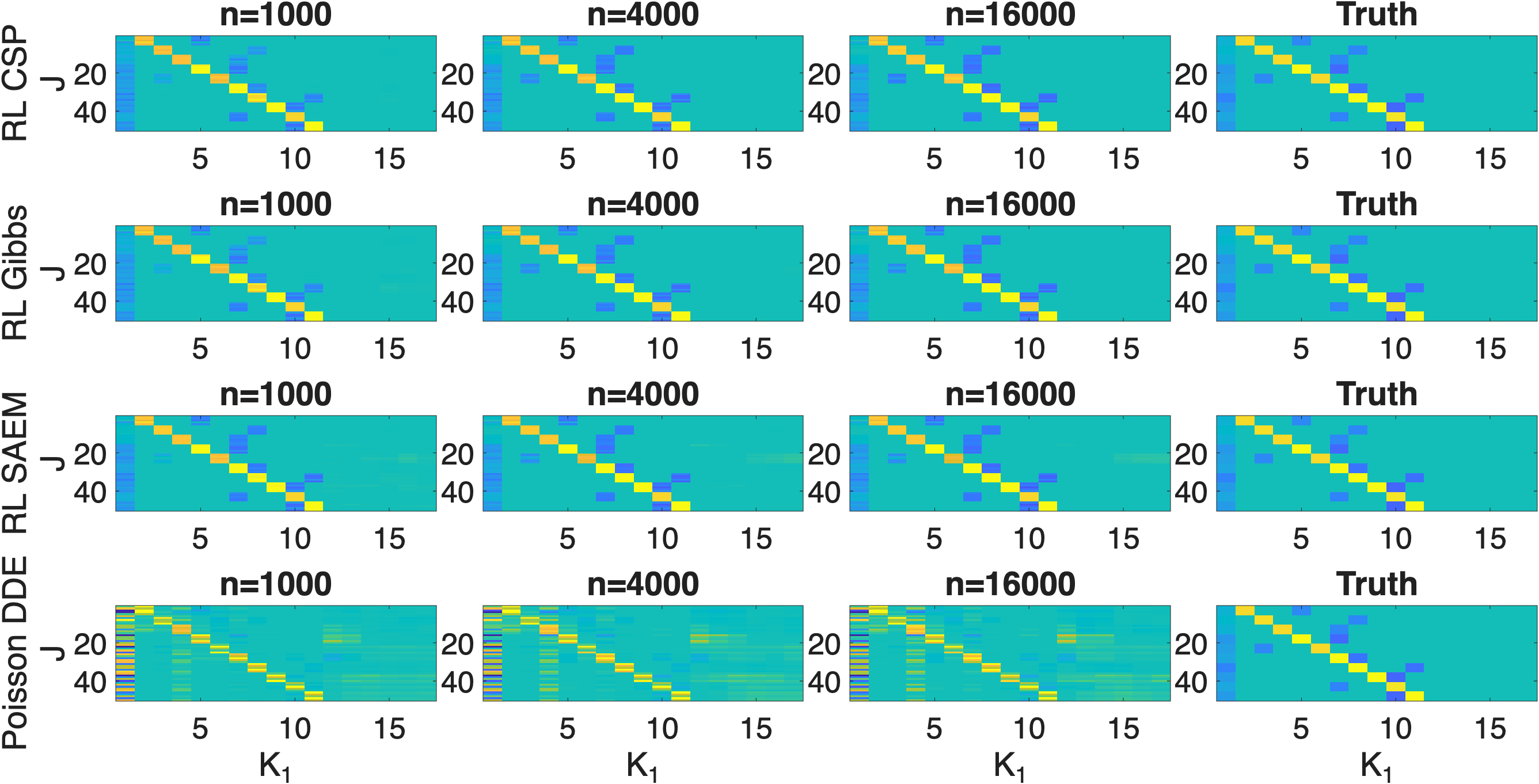}
    \caption{$J = 50$: Average point estimates of $\boldsymbol{B}^{(1)}$  across sample sizes and methods compared to the data generating values} 
    \label{fig:B1sJ50}
\end{figure}

\begin{figure}[h]
    \centering
        \includegraphics[width=\linewidth,keepaspectratio]{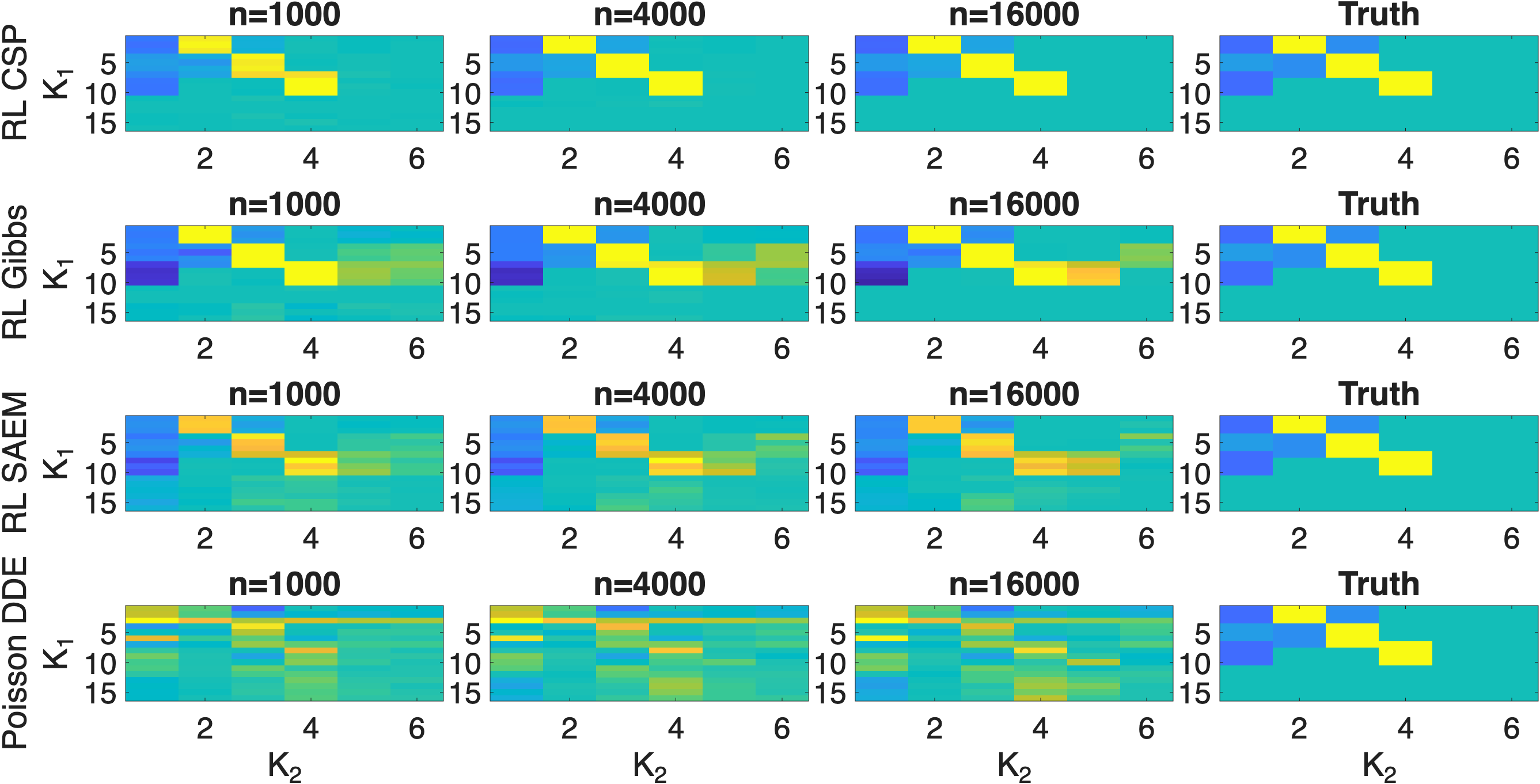}
    \caption{$J = 50$: Average point estimates of $\boldsymbol{B}^{(1)}$  across sample sizes and methods compared to the data generating values} 
    \label{fig:B2sJ50}
\end{figure}

\subsection{J = 150}
Analogous to the reuslts in the main text, in table Figure \ref{fig:DGPsim150}, we include the data generating weight matrices for the $J = 50$ case. In Table \ref{tab:recovery_resultsJ150}, we provide dimension and sparsity structure estimation results, while in Figures \ref{fig:B1sJ150}-\ref{fig:MSEBsJ150}we provide weight matrix estimation statistics and visualizations.
\begin{figure}[h]
    \centering
    \includegraphics[width=0.45\linewidth]{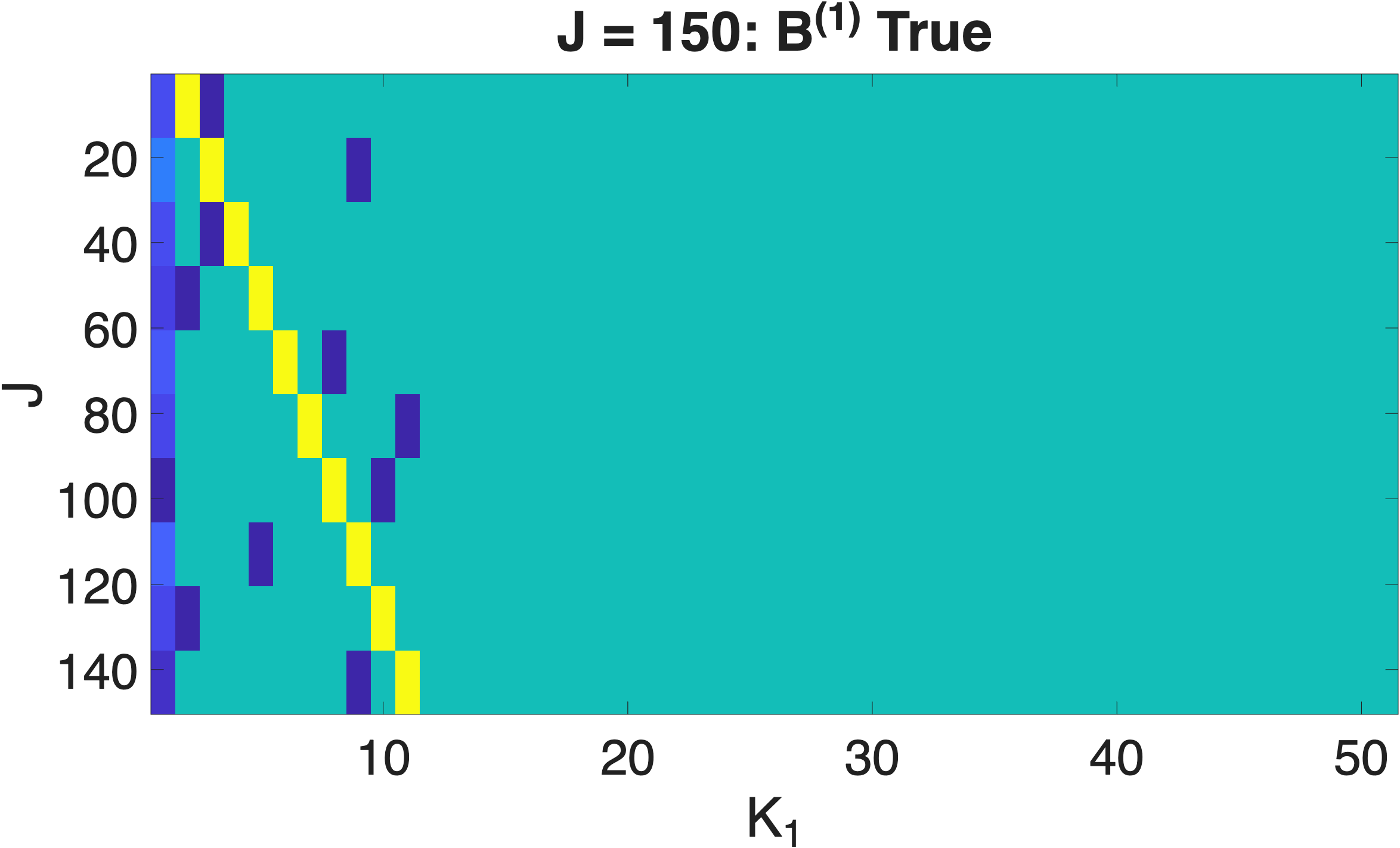}
        \includegraphics[width=0.45\linewidth]{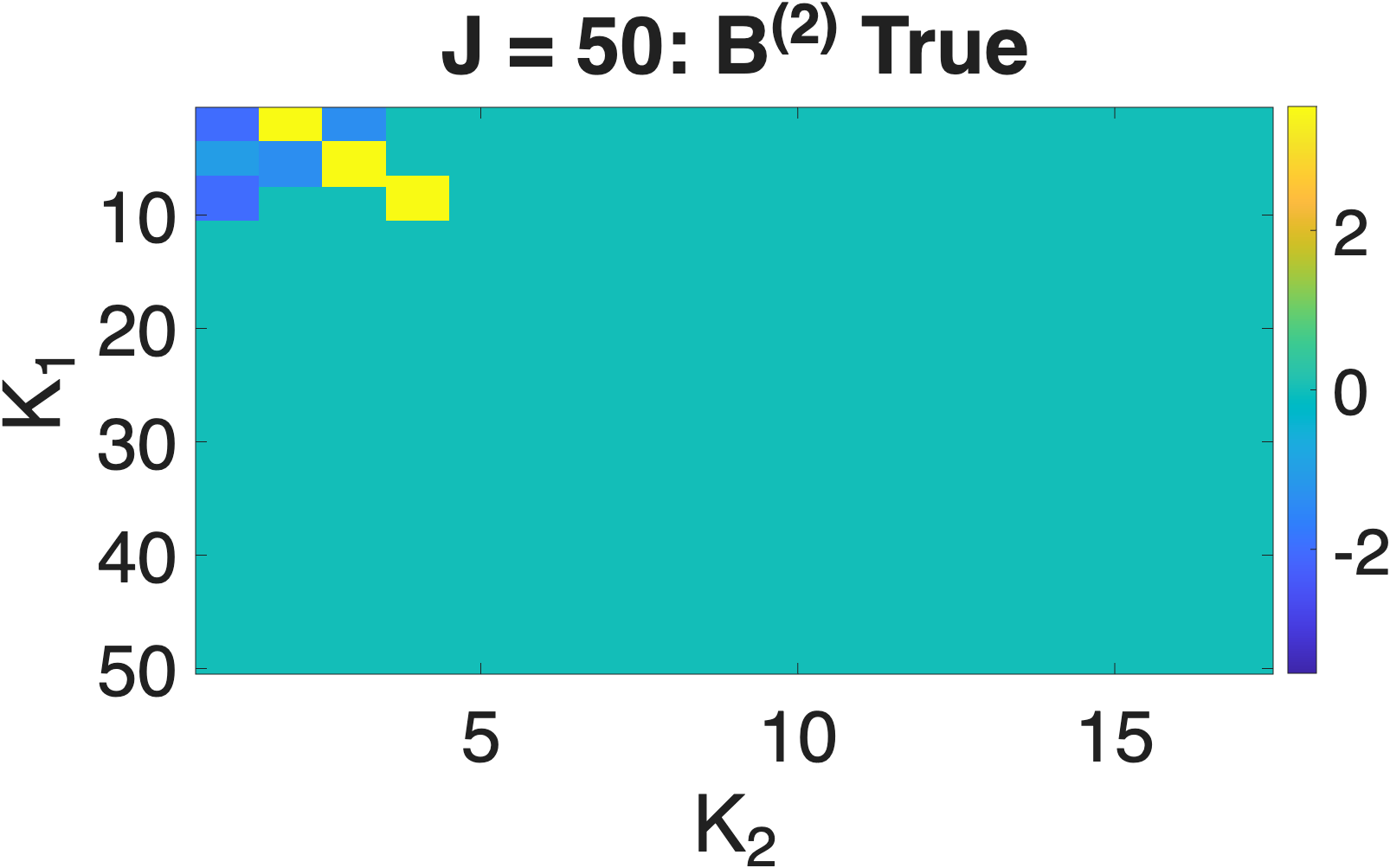}
    \caption{Data generating $\{\boldsymbol B^{(d)}\}_{d=1}^{2}$ for $J = 150$}
    \label{fig:DGPsim150}
\end{figure}
\begin{table}[h]
\centering
\small
\begin{tabular}{lcccccc}
\hline
 & \multicolumn{6}{c}{$n$} \\
\cline{2-7}
 & 500 & 1000 & 2000 & 4000 & 8000 & 16000 \\
\hline

\textbf{Average recovery of $G_1$} \\
RL CSP    & 0.982 & 0.992 & 0.996 & 0.999 & 1.000 & 1.000 \\
RL Gibbs  & 0.982 & 0.991 & 0.996 & 0.999 & 1.000 & 0.999 \\
RL SAEM   & 0.979 & 0.985 & 0.990 & 0.994 & 0.995 & 0.997 \\
Poisson DDE & 0.959 & 0.953 & 0.943 & 0.943 & 0.944 & 0.945 \\

\textbf{Average recovery of $G_2$} \\
RL CSP    & 0.985 & 0.992 & 0.995 & 0.999 & 0.999 & 0.999 \\
RL Gibbs  & 0.977 & 0.978 & 0.973 & 0.967 & 0.964 & 0.962 \\
RL SAEM   & 0.796 & 0.800 & 0.827 & 0.845 & 0.852 & 0.852 \\
Poisson DDE & 0.678 & 0.698 & 0.700 & 0.725 & 0.736 & 0.741 \\

\textbf{Average MAP estimate of Layer 1 dimension $K^{(1)}$} \\
RL CSP    & 8.180 & 8.970 & 9.250 & 9.930 & 10.020 & 10.050 \\
RL Gibbs  & 8.060 & 8.700 & 9.280 & 9.900 & 10.030 & 10.110 \\
RL SAEM   & -- & -- & -- & -- & -- & -- \\
Poisson DDE & -- & -- & -- & -- & -- & -- \\

\textbf{Average MAP estimate of Layer 2 dimension $K^{(2)}$} \\
RL CSP    & 3.680 & 3.280 & 3.010 & 3.000 & 3.020 & 3.040 \\
RL Gibbs  & 5.750 & 6.000 & 7.150 & 8.490 & 8.930 & 9.130 \\
RL SAEM   & -- & -- & -- & -- & -- & -- \\
Poisson DDE & -- & -- & -- & -- & -- & -- \\

\hline
\end{tabular}
\caption{$J=150$: Sparsity pattern recovery accuracy and estimated number of active nodes across methods and sample sizes.}\label{tab:recovery_resultsJ150}
\end{table}

\begin{figure}[h]
    \centering
        \includegraphics[width=\linewidth,keepaspectratio]{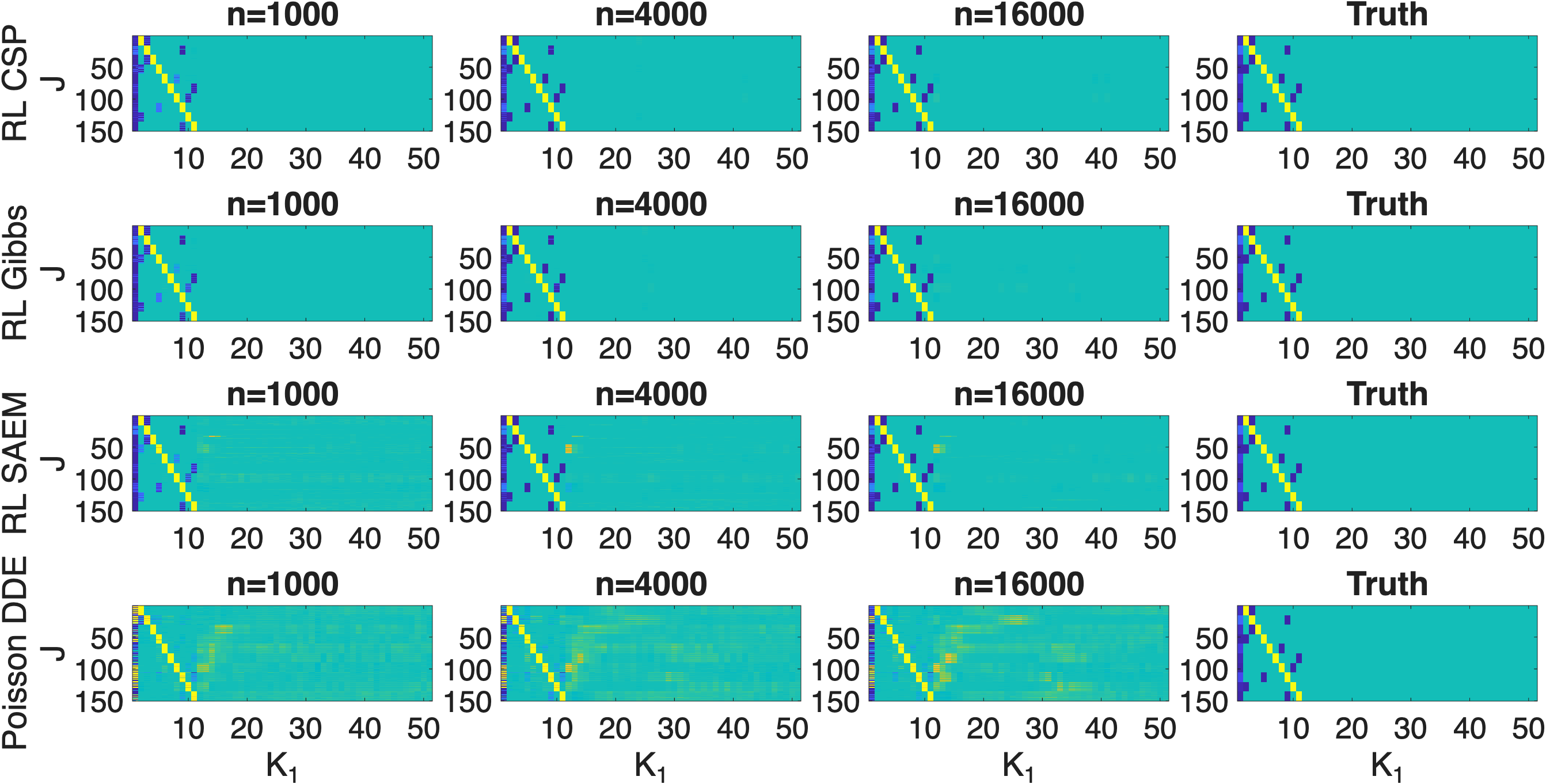}
    \caption{$J = 150$: Average point estimates of $\boldsymbol{B}^{(1)}$  across sample sizes and methods compared to the data generating values} 
    \label{fig:B1sJ150}
\end{figure}

\begin{figure}[h]
    \centering
        \includegraphics[width=\linewidth,keepaspectratio]{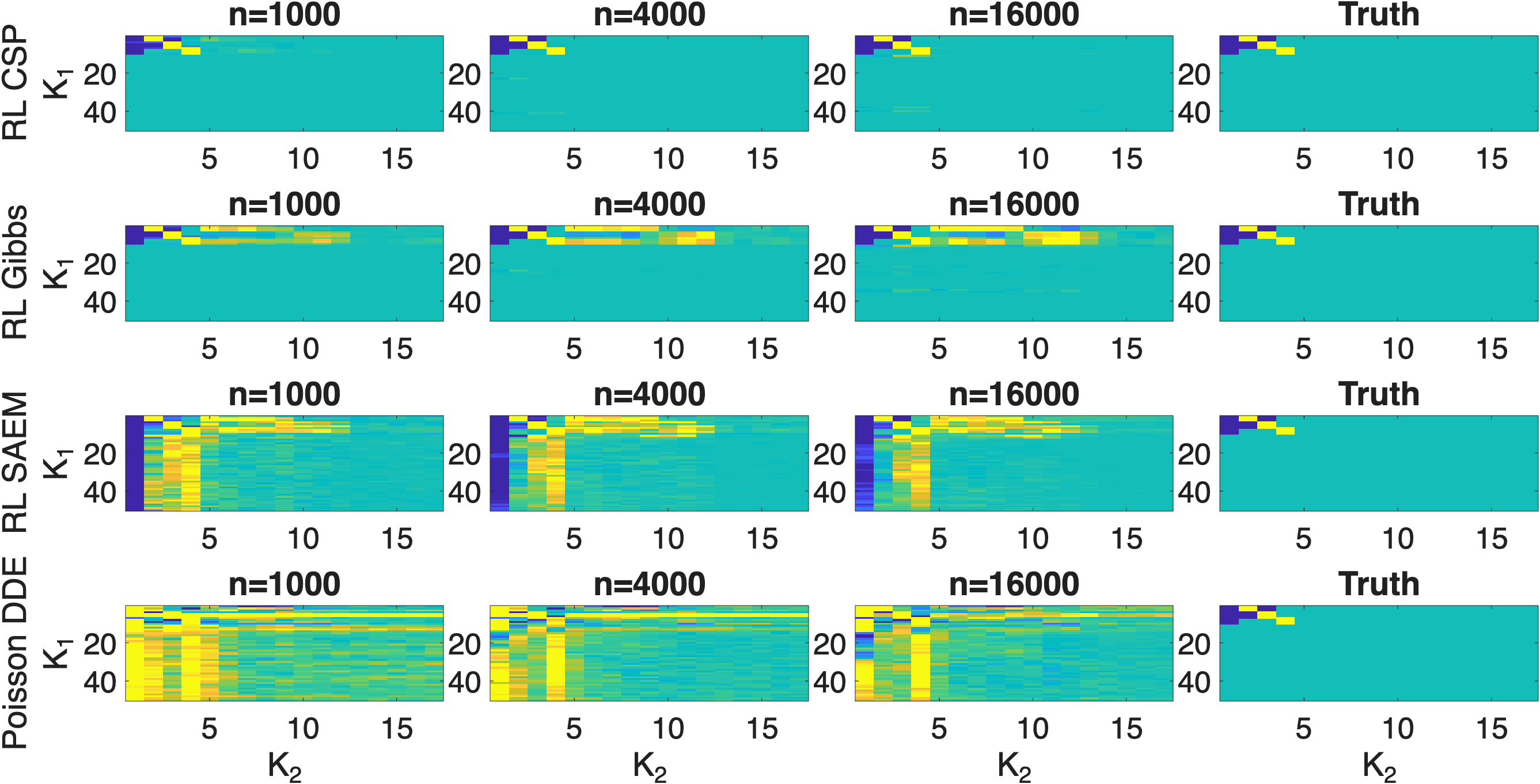}
    \caption{$J = 150$: Average point estimates of $\boldsymbol{B}^{(2)}$  across sample sizes and methods compared to the data generating values} 
    \label{fig:B2sJ150}
\end{figure}
\begin{figure}[h]
    \centering
    \includegraphics[width=0.7\linewidth, keepaspectratio]{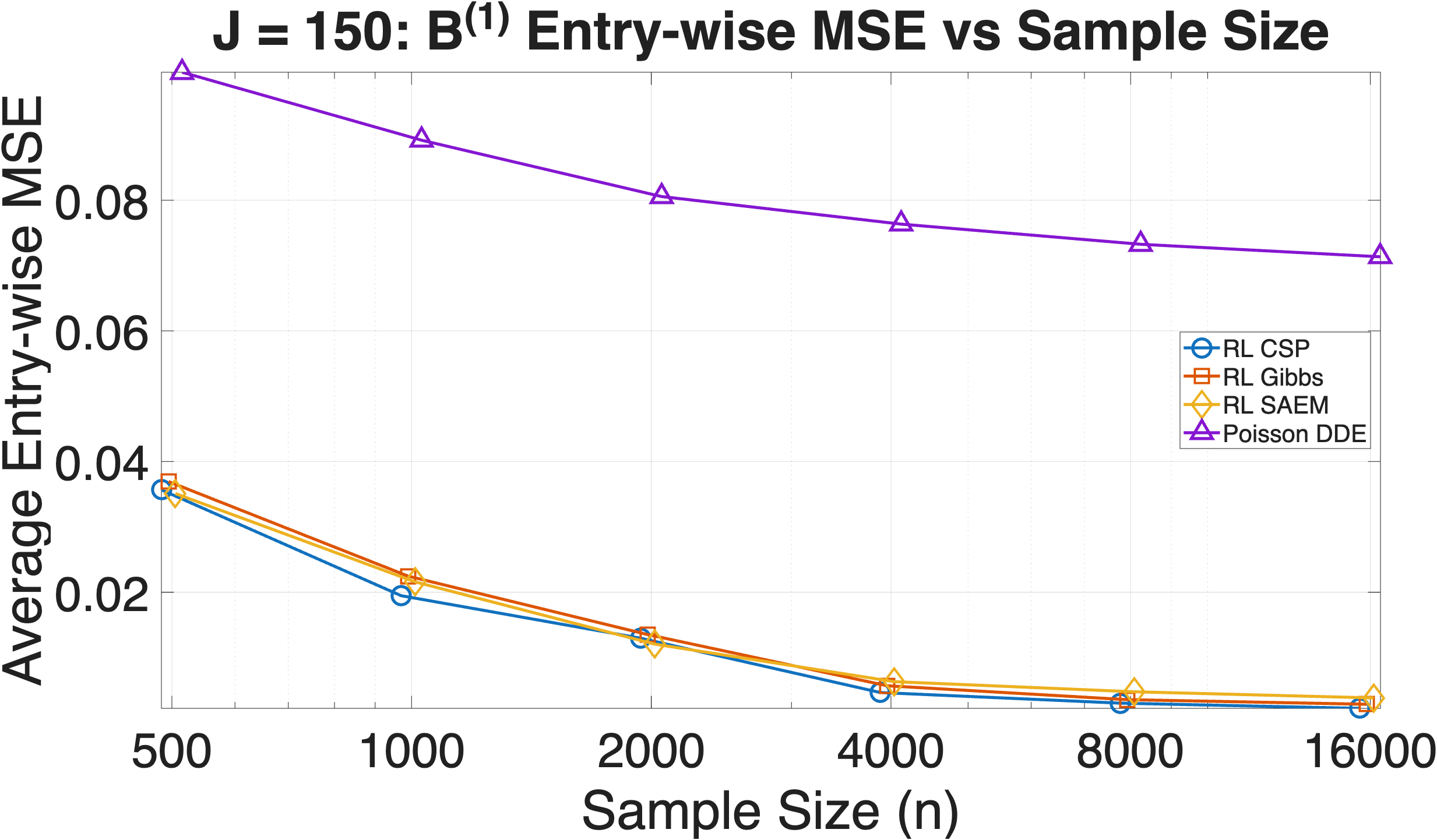}
        \includegraphics[width=0.7\linewidth,keepaspectratio]{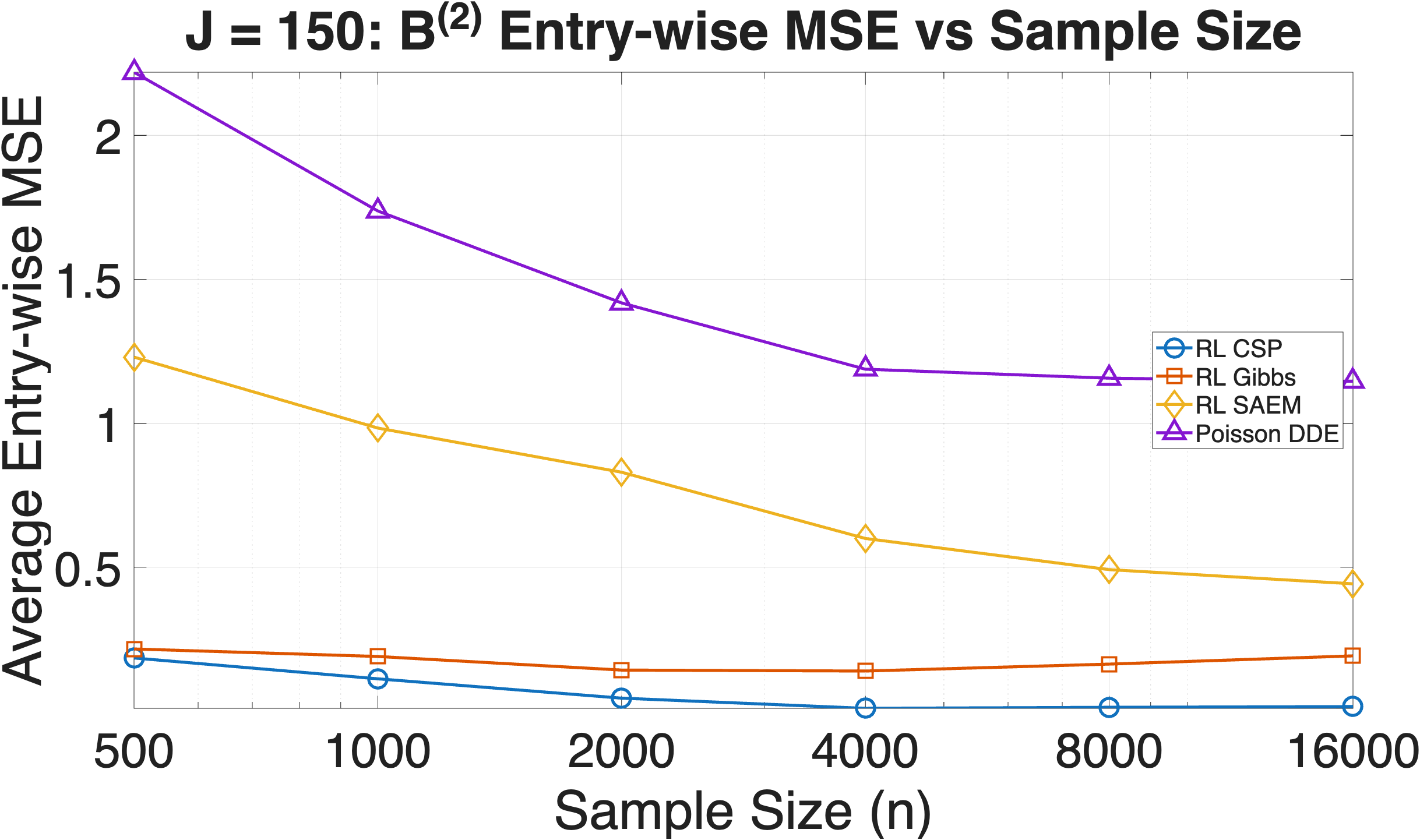}
    \caption{$J = 150$: Average entry-wise MSE for estimates of $\boldsymbol{B}^{(1)}$ (top four rows) and $\boldsymbol{B}^{(2)}$ across sample sizes and methods} 
    \label{fig:MSEBsJ150}
\end{figure}

\subsection{Deeper Simulations}\label{sec:deep}
We also ran simulations for deeper DDE Copulas ($D = 3$). We generated samples from a three layer DDE copula with $J = 108$, $K^{(1)} = 18$, $K^{(2)} = 6$, and $K^{(3)} = 2$. Data generating weight matrices are available in the right panel of Figure \ref{fig:D3avg}. We follow the pattern in Section \ref{sec:sim} for the marginals $F_{Y_{j}}$, alternating between irregular, discrete distributions. Like in Section \ref{sec:sim}, we generate 100 simulated data sets for each sample size and fit the DDE copula using Algorithm \ref{alg:mcem}, first initializing the parameters based on the maximal allowable DDE network size of $K^{(1)}_{\text{max}} = 36, K^{(2)}_{\text{max}} = 12, K^{(3)}_{\text{max}} = 4$. All models were fit under the same hyperparameter settings; $\tau = .9$, $\lambda_{1}^{(1)} = .01, \lambda_{1}^{(2)} = .03, \lambda_{1}^{(2)} = .005$.

MAP estimates for latent dimension, as well as the accuracy for the estimated sparsity structure, are available in Table \ref{tab:D3_recovery}. As the sample size increases, the latent dimension and sparsity structure is estimated with more accuracy. Similar to what was observed by \cite{lee2026dde}, estimation accuracy is generally inferior for deeper layers, though convergence is apparent. This model is considerably more dense -- marginally the induced mixture has $2^{18+6+2} = 67108864$ components -- and so larger sample sizes are required for accurate estimation. 
\begin{table}[ht]
\centering
\begin{tabular}{lcccccc}
\toprule
 & \multicolumn{6}{c}{$n$} \\
\cmidrule(lr){2-7}
 & 500 & 1000 & 2000 & 4000 & 8000 & 16000 \\
\midrule

\multicolumn{7}{l}{\textbf{Average recovery of $G_1$}} \\
Estimate & 0.9774 & 0.9840 & 0.9838 & 0.9934 & 0.9980 & 0.9990 \\

\multicolumn{7}{l}{\textbf{Average recovery of $G_2$}} \\
Estimate & 0.9131 & 0.9154 & 0.9177 & 0.9288 & 0.9366 & 0.9357 \\

\multicolumn{7}{l}{\textbf{Average recovery of $G_3$}} \\
Estimate & 0.5405 & 0.5772 & 0.6870 & 0.8010 & 0.8490 & 0.8670 \\

\multicolumn{7}{l}{\textbf{Average MAP estimate of Layer 1 dimension $K^{(1)}$}} \\
Estimate & 17.660 & 17.890 & 17.990 & 18.080 & 18.040 & 18.050 \\

\multicolumn{7}{l}{\textbf{Average MAP estimate of Layer 2 dimension $K^{(2)}$}} \\
Estimate & 9.820 & 8.930 & 7.750 & 6.660 & 6.220 & 6.310 \\

\multicolumn{7}{l}{\textbf{Average MAP estimate of Layer 3 dimension $K^{(3)}$}} \\
Estimate & 3.270 & 3.420 & 3.260 & 2.740 & 2.260 & 1.660 \\

\bottomrule
\end{tabular}
\caption{$D = 3$ Average graph recovery and MAP estimates of latent dimensions across sample sizes for the proposed method.}
\label{tab:D3_recovery}
\end{table}

\begin{figure}[h]
    \centering
    \includegraphics[width=0.5\linewidth, keepaspectratio]{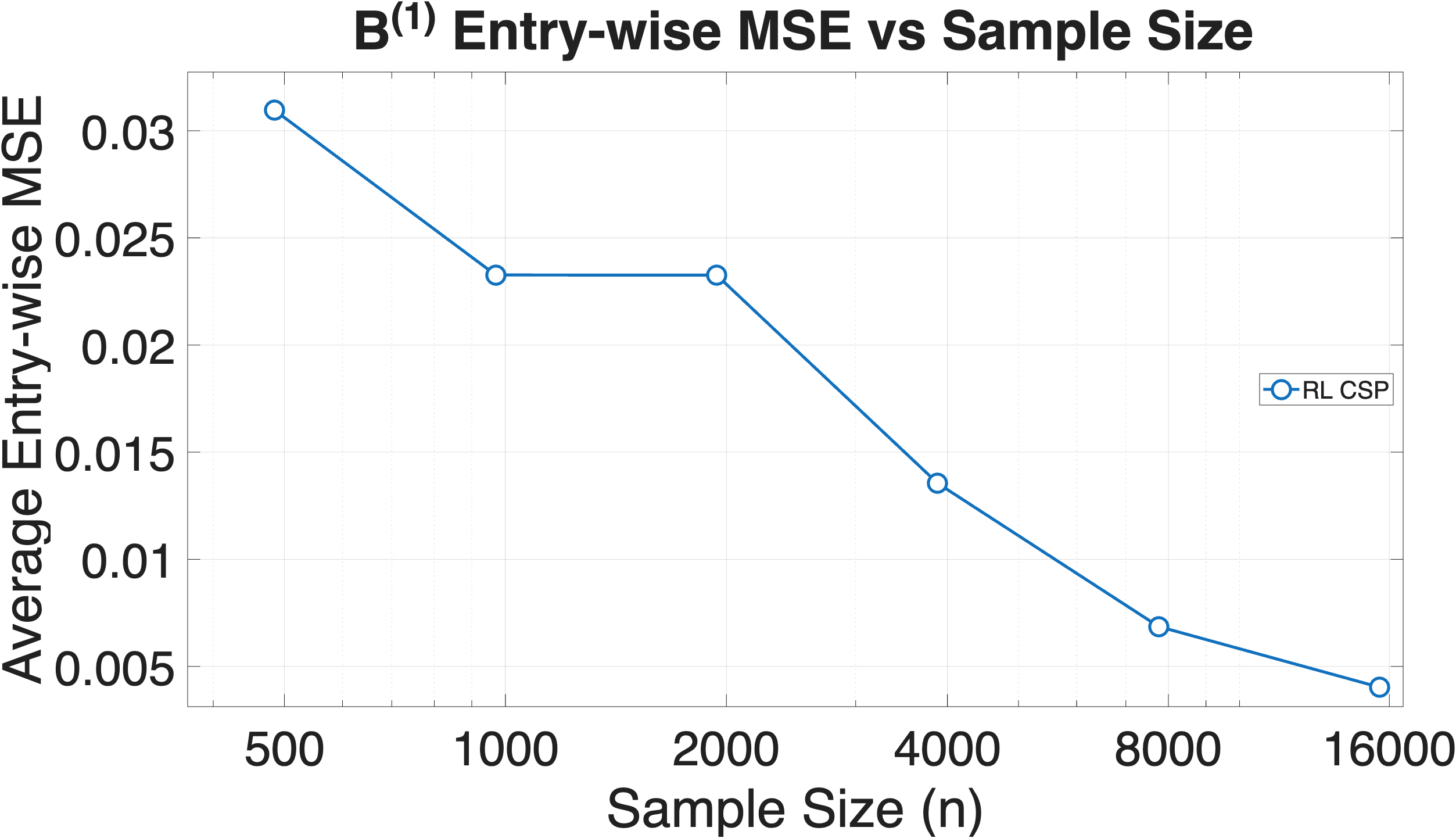}
        \includegraphics[width=0.5\linewidth,keepaspectratio]{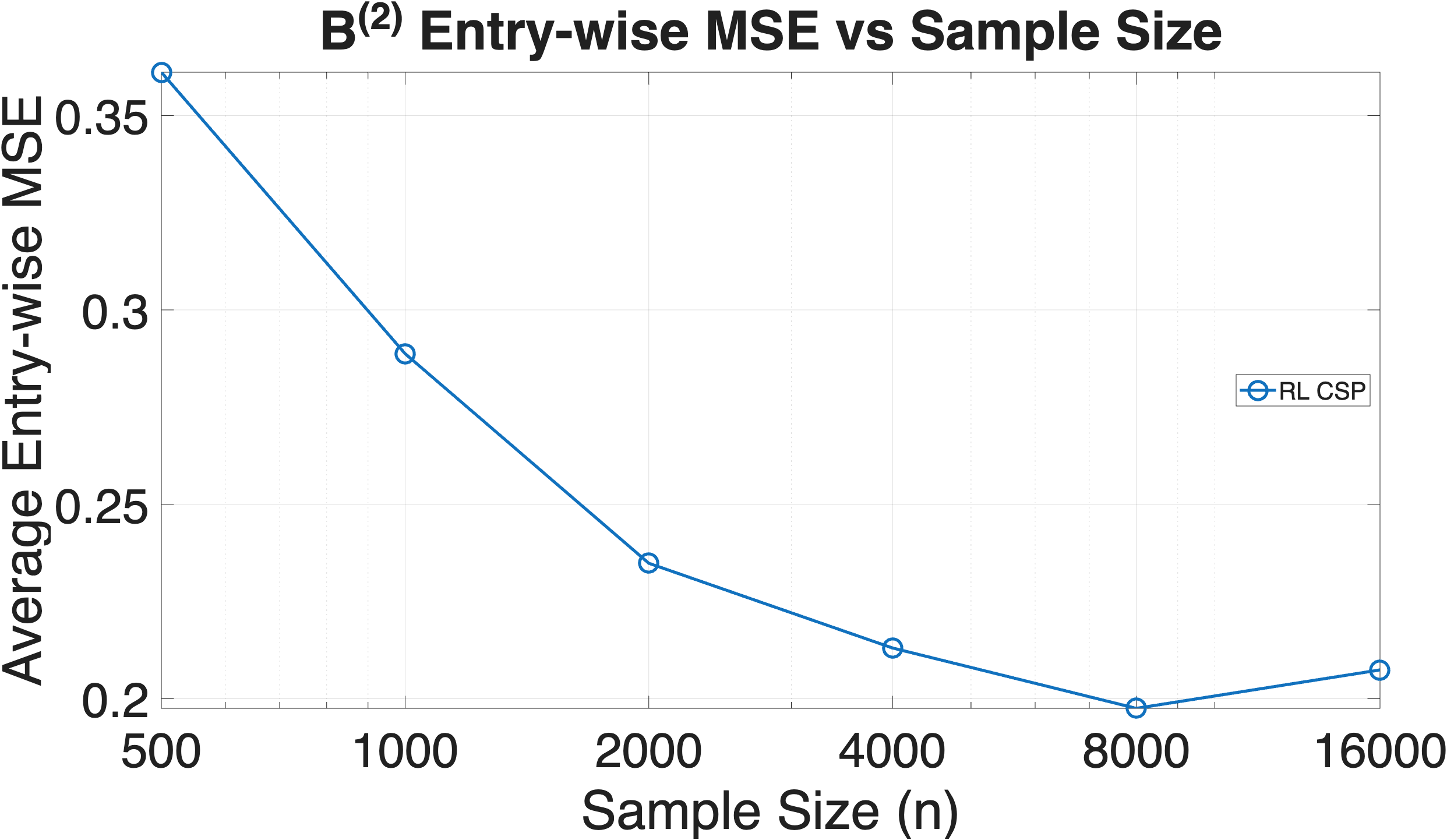}
        \includegraphics[width=0.5\linewidth,keepaspectratio]{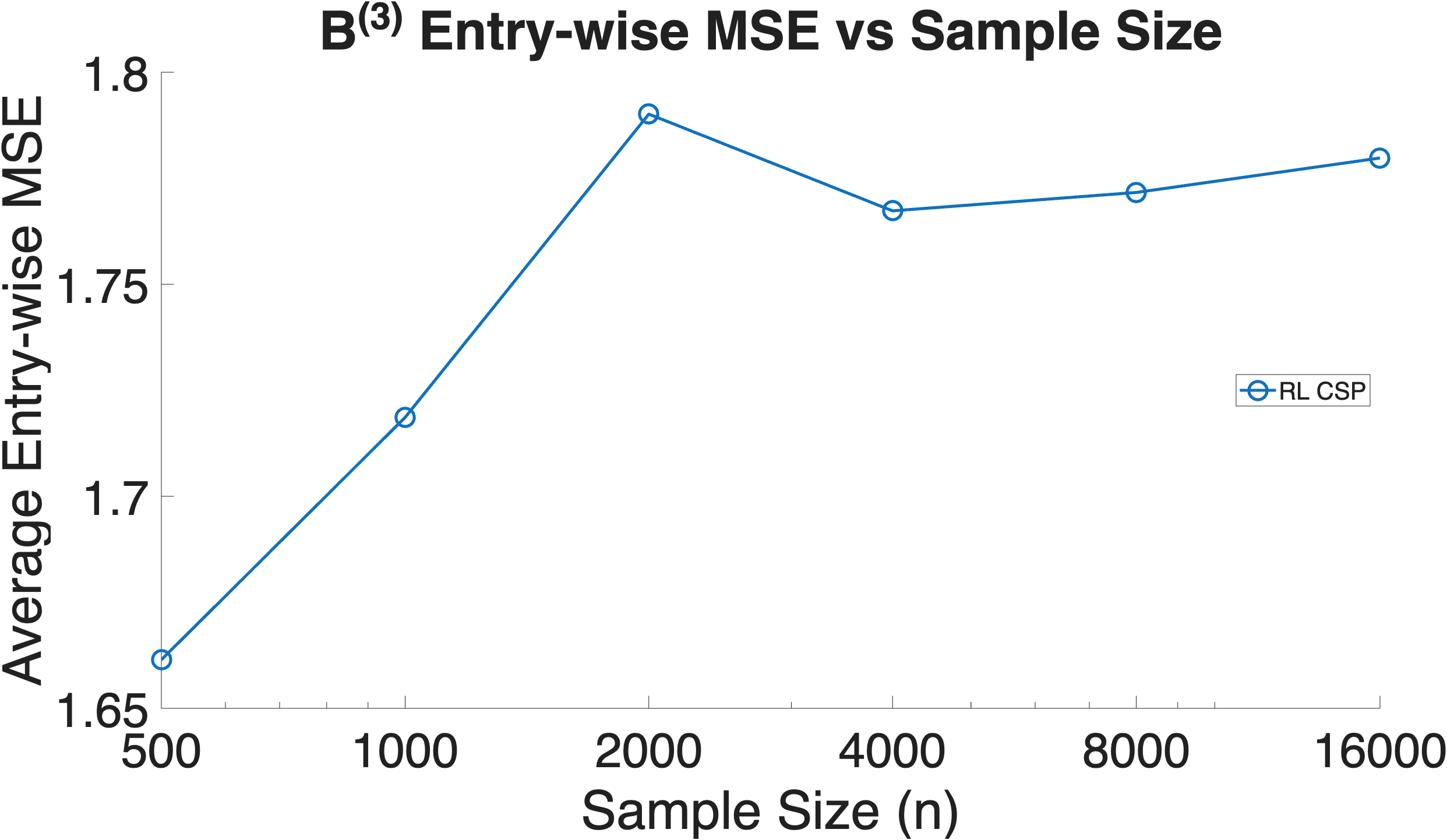}
        \includegraphics[width=0.5\linewidth,keepaspectratio]{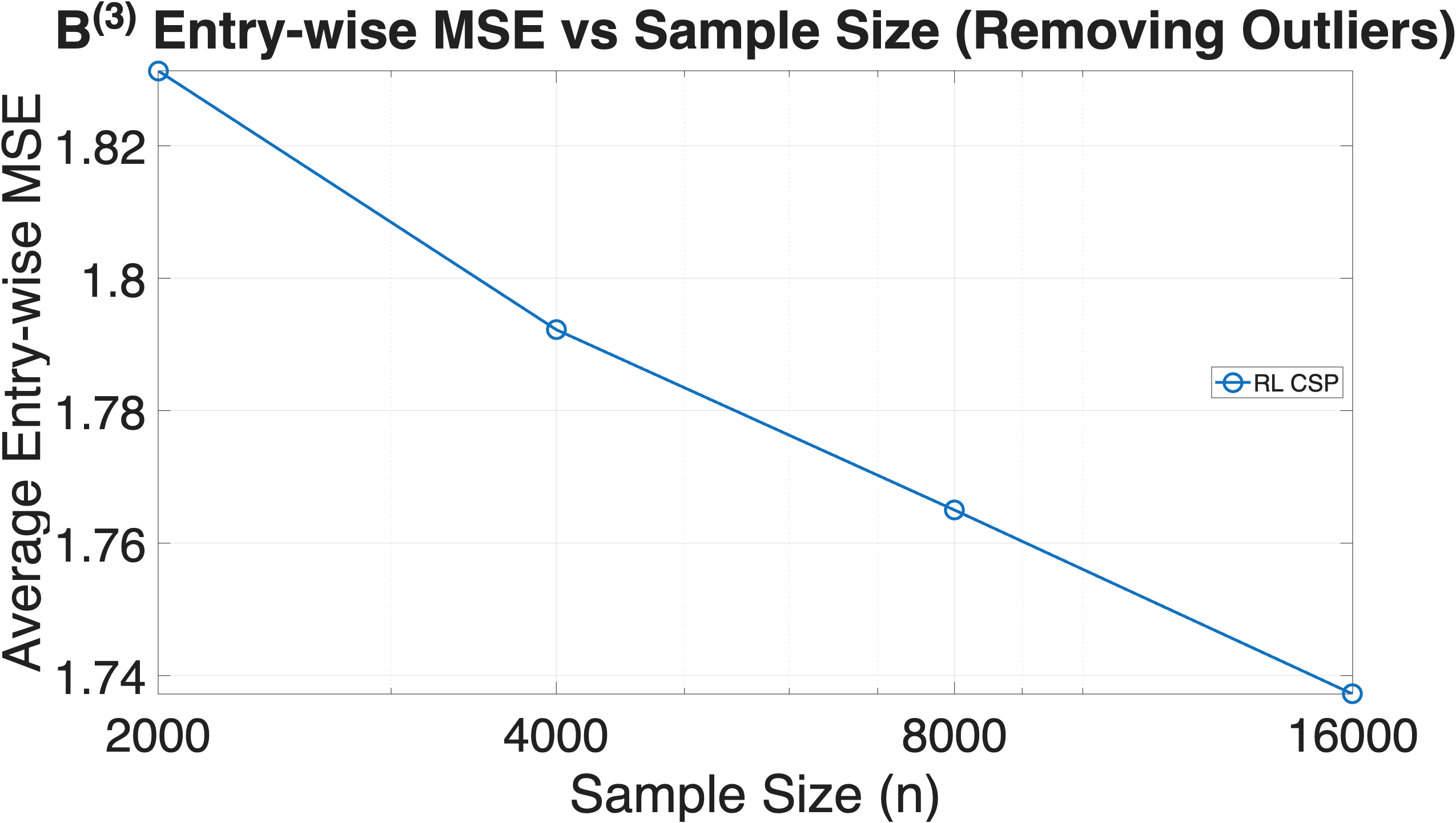}
    \caption{$D = 3$: Average entry-wise MSE for estimates of $\{\boldsymbol{B}^{(d)}\}_{d=1}^{3}$ under the RL CSP DDE Copula} 
    \label{fig:MSED3}
\end{figure}
We also include evaluations of the estimated weight matrices in Figures \ref{fig:D3avg}-\ref{fig:MSED3} via entry-wise MSE and the average MAP estimate of each layer-specific $\boldsymbol B^{(d)}$, respectively. In Figure \ref{fig:MSED3}, we point out that the apparent lack of convergence for the deepest layer is due to the fact that, for simplicity, we fit all models under the same hyperparameter settings. In many cases, this resulted in the estimated dimension for the third dimension to either 1 or 4, which severely skews entry-wise MSE. In practice, one would conduct predictive evaluations as outlined in Section \ref{sec:realdat} to tune these parameter. In addition, at the two smallest sample sizes, graph recovery is uniformly poor, that the apparent better performance in these regimes is purely an artifact of the simulation design and sampling variability. After removing these outliers from simulations and focusing on sample sizes where the graph recovery begins to stabilize $(n \geq 2000)$, the transition toward consistent recovery is most clearly visible.
\begin{figure}[h]
    \centering
    \includegraphics[width=\linewidth, keepaspectratio]{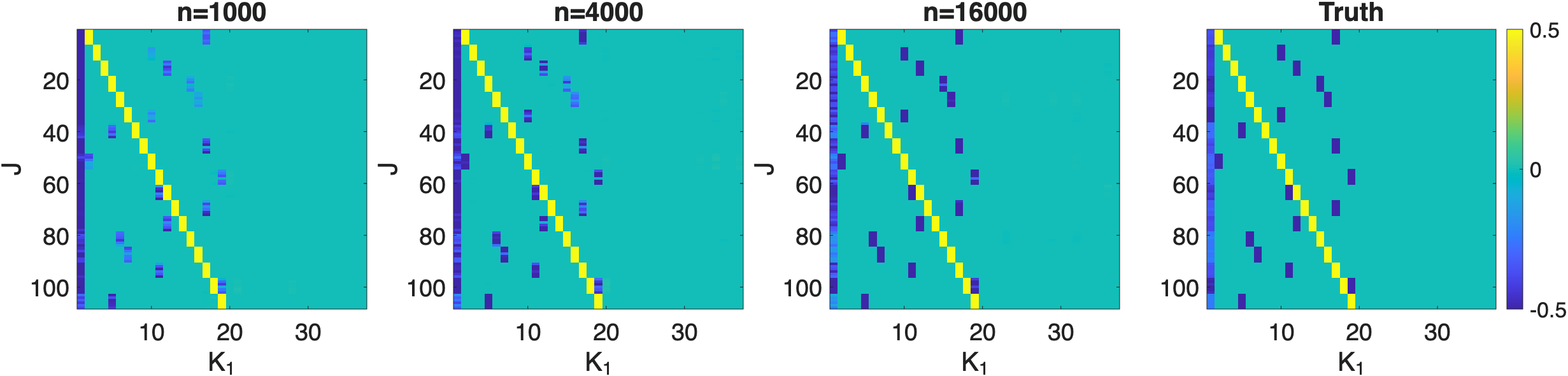}
        \includegraphics[width=\linewidth, keepaspectratio]{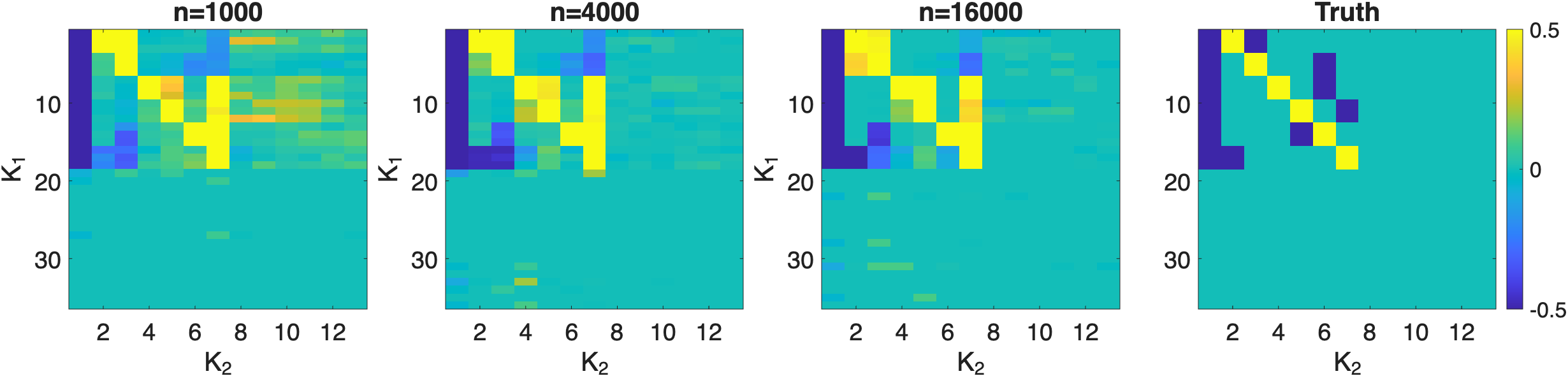}
            \includegraphics[width=\linewidth, keepaspectratio]{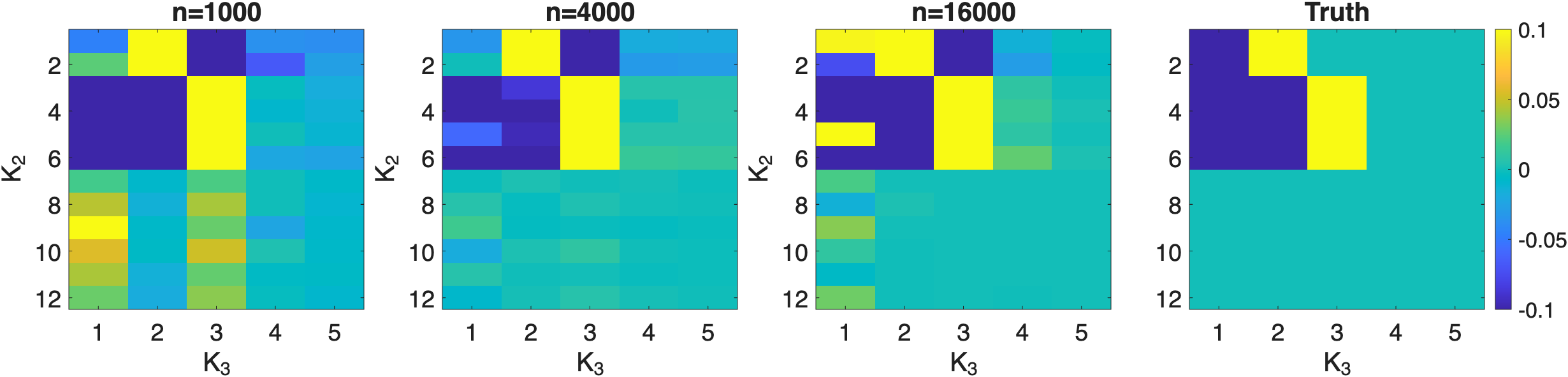}
    \caption{$D = 3$ average layer-wise MAP estimates compared to the truth across sample sizes.}
    \label{fig:D3avg}
\end{figure}

\section{Comparisons to iVAE}\label{sec:iVAE}

We provide additional comparisons by analyzing the Big5 survey with identifiable variational auto-encoders (ivae; \cite{ivae}). Motivated by our discovery of $K^{(1)^{*}} = 9$ and $K^{(2)^{*}} = 1$, we specify an iVAE with latent dimension $K = 1$, while both encoder and decoder neural network architectures (two-layer perceptrons) have 9 hidden variables in each layer. The iVAE requires auxiliary information to construct ``label" priors and we supply the model with the political ideology item used in Figure \ref{fig:ideology} in the main text. We utilize default settings provided by the authors for the algorithm.

\begin{figure}[h]
    \centering
    \includegraphics[width=0.6\linewidth]{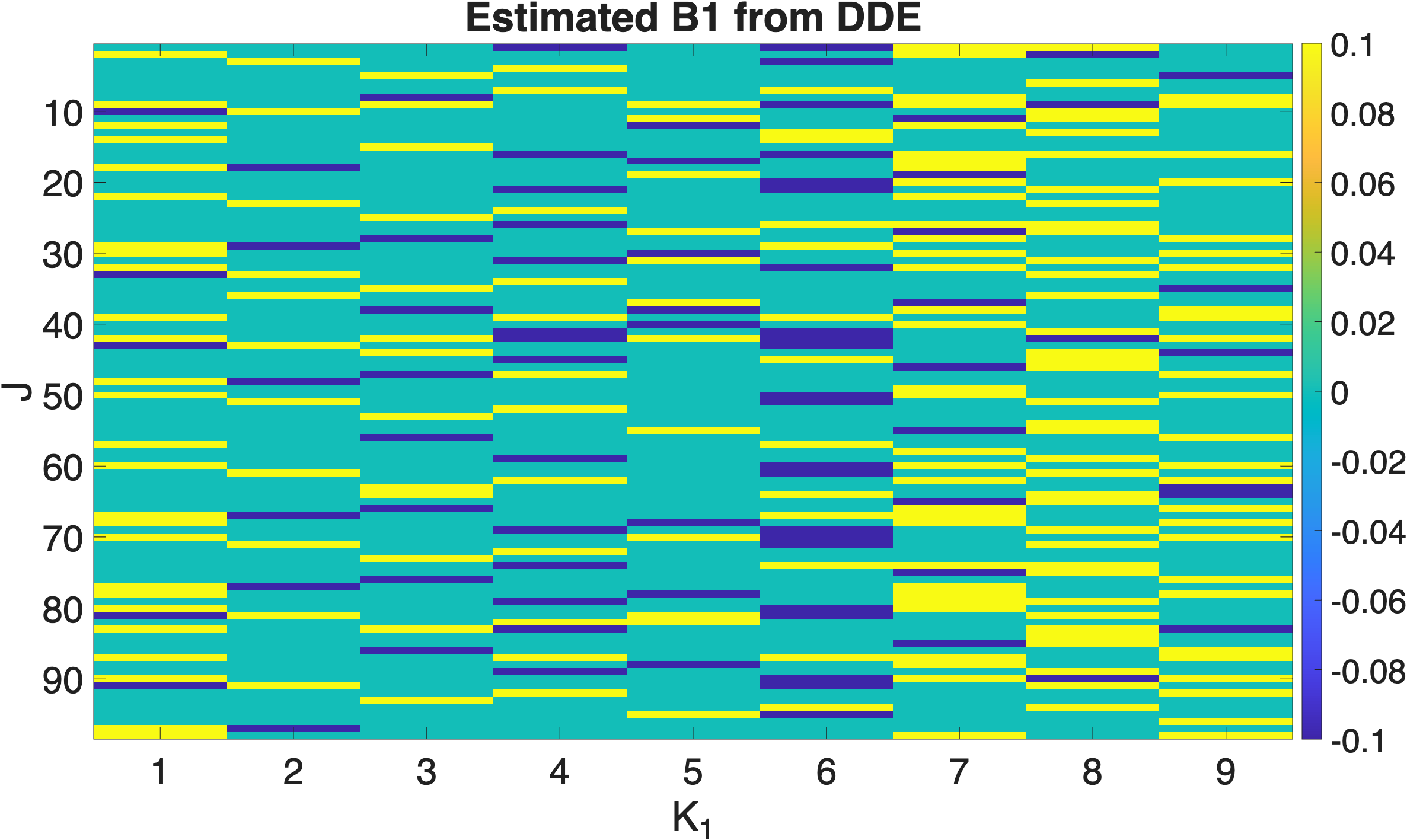}
    \includegraphics[width=0.49\linewidth]{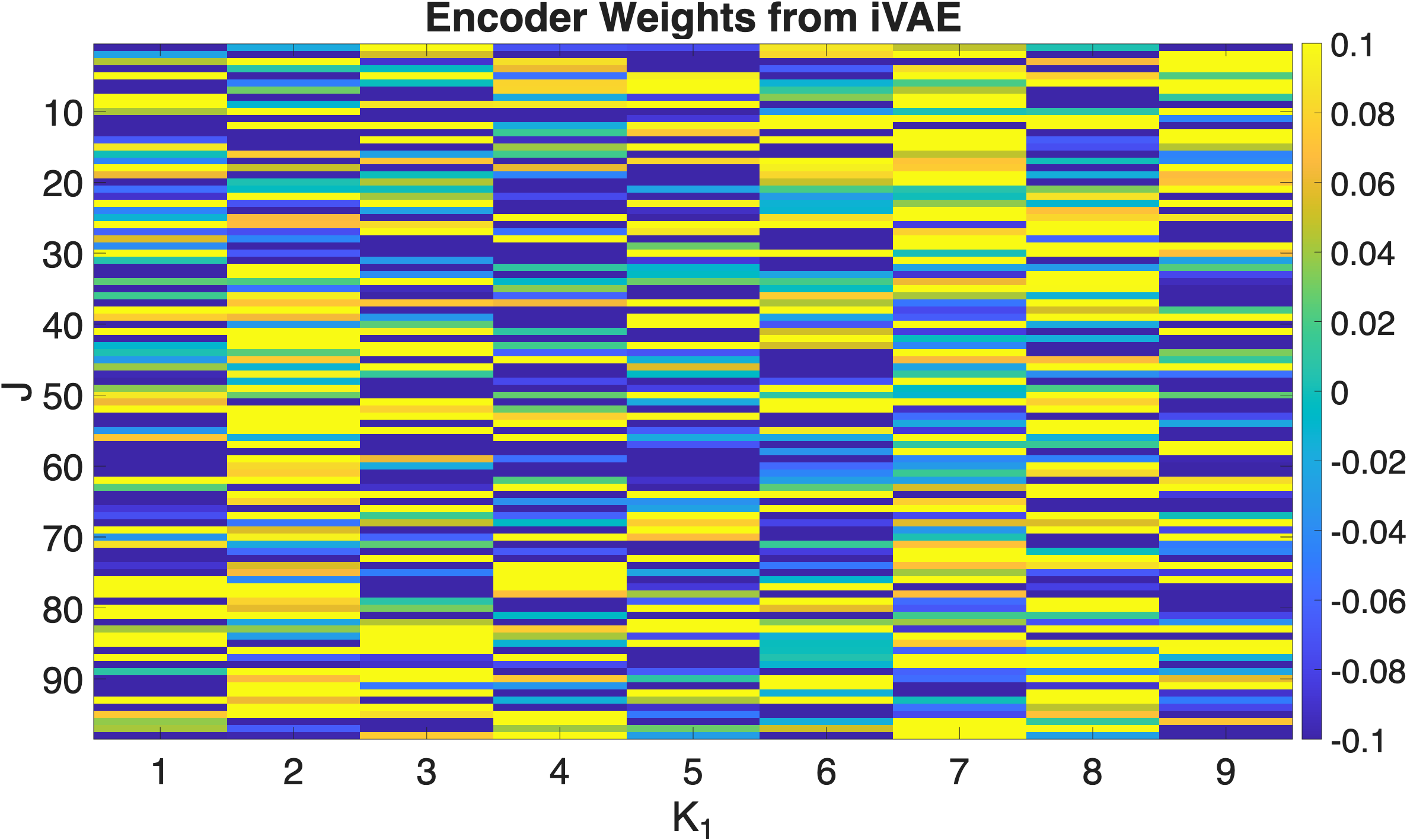}
    \includegraphics[width=0.49\linewidth]{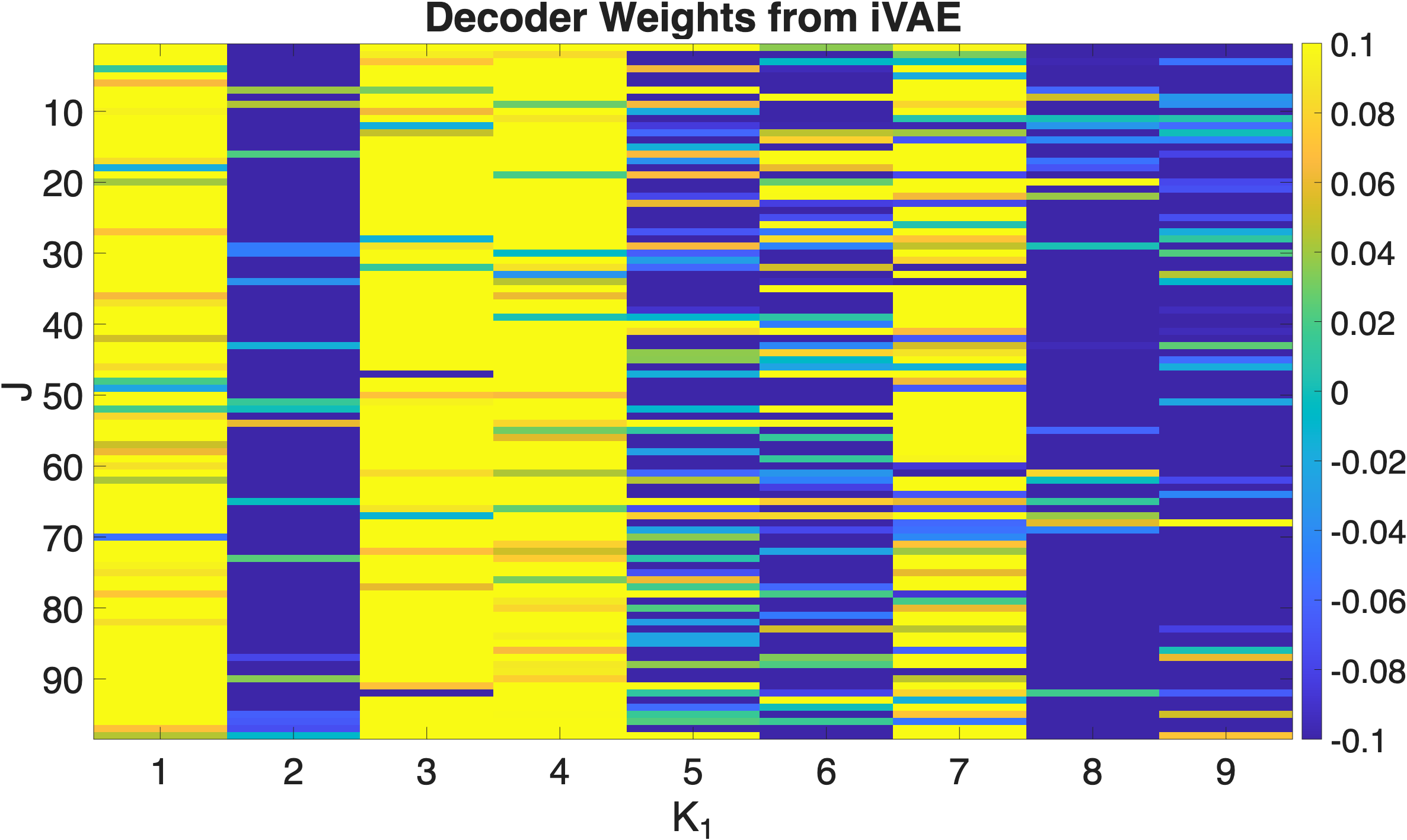}
    \caption{Comparison of estimated DDE weights to iVAE encoder and deconder weights fit to Big5 data.}
    \label{fig:iVAEcomp}
\end{figure}

We first compare the estimated encoder and decoder weights, each of which are $J \times 9$, to the estimated $\boldsymbol{B}^{(1)}$ under the DDE copula in Figure \ref{fig:iVAEcomp}. The estimated $\boldsymbol{B}^{(1)}$ is considerably more sparse than both the encoder and decoder weights. Next, we compute key items based on both the encoder and decoder weight matrices. We note that for the decoder weights, several columns were nearly entirely negative, and there were no items that surfaced based on the metric $\max\{\min_{l \neq k} (\beta_{jk} - \beta_{jl}),\, 0\}$. The results are available in Table \ref{comp:enc}.

For the iVAE weights, $A^{(1)}_{k}$ corresponds to the hidden variables in the respective perceptrons. To further facilitate comparisons, we include key items obtained from $\boldsymbol B^{(1)}$ from the DDE copula, which are presented in Table \ref{tab:keyitems} in the main text. Neither the encoder nor decoder weights capture structure in the data; key items are not meaningfully grouped across latent variables. 

{
\setlength{\itemsep}{0pt}
\setlength{\parsep}{0pt}
\setlength{\parskip}{0pt}
\renewcommand{\baselinestretch}{0.9}\normalsize
\begin{table}[ht]
\centering
\scriptsize
\setlength{\tabcolsep}{2pt}
\renewcommand{\arraystretch}{1.15}
\begin{tabular}{p{3cm} *{9}{>{\raggedright\arraybackslash}p{1.4cm}}}
\hline
& \multicolumn{9}{c}{\textbf{Binary latent feature}} \\
\cline{2-10}
& $A_1^{(1)}$ & $A_2^{(1)}$ & $A_3^{(1)}$ & $A_4^{(1)}$ & $A_5^{(1)}$ & $A_6^{(1)}$ & $A_7^{(1)}$ & $A_8^{(1)}$ & $A_9^{(1)}$ \\
\hline

\textbf{Key Item 1}
& Keep others at a distance
& Talk to a lot of different people at parties
& Follow through with my plans
& Do not like art
& Am very pleased with myself
& Do not like to draw attention to myself
& Worry about things
& Have a good word for everyone
& Shirk my duties
\\ \hline

\textbf{Key Item 2}
& Am hard to get to know
& Am the life of the party
& Carry out my plans
& Do not enjoy going to art museums
& Feel comfortable with myself
& Keep in the background
& Panic easily
& Enjoy hearing new ideas
& Do just enough work to get by
\\ \hline

\textbf{Key Item 3}
& Avoid contact with others
& Make friends easily
& Do things according to a plan
& Do not like poetry
& Seldom feel blue
& Don't talk a lot
& Get stressed out easily
& Sympathize with other's feelings
& Don't see things through
\\ \hline

\end{tabular}
\begin{tabular}{p{3cm} *{9}{>{\raggedright\arraybackslash}p{1.4cm}}}
\hline
& \multicolumn{9}{c}{\textbf{Binary latent feature}} \\
\cline{2-10}
& $A_1^{(1)}$ & $A_2^{(1)}$ & $A_3^{(1)}$ & $A_4^{(1)}$ & $A_5^{(1)}$ & $A_6^{(1)}$ & $A_7^{(1)}$ & $A_8^{(1)}$ & $A_9^{(1)}$ \\
\hline

\textbf{Key Item 1}
& Feel comfortable around people
& Do not mind being the center of attention
& Treat all people equally
& Seldom get mad
& Believe too much tax money supports artists
& Insult people
& Keep others at a distance
& Finish what I start
& Can say things beautifully
\\ \hline

\textbf{Key Item 2}
& Cut others to pieces
& Get back at others
& Have a good word for everyone
& Accept people as they are
& Seldom feel blue
& Get excited by new ideas
& Do not like art
& Make demands on others
& Avoid philosophical discussions
\\ \hline

\textbf{Key Item 3}
& Would describe my experiences as dull
& Enjoy hearing new ideas
& Cheer people up
& Do things according to a plan
& Have a sharp tongue
& Get stressed out easily
& Enjoy thinking about things
& Contradict others
& Have frequent mood swings
\\ \hline

\end{tabular}
\begin{tabular}{p{3cm} *{9}{>{\raggedright\arraybackslash}p{1.4cm}}}
\hline
& \multicolumn{9}{c}{\textbf{Binary latent feature}} \\
\cline{2-10}
& $A_1^{(1)}$ & $A_2^{(1)}$ & $A_3^{(1)}$ & $A_4^{(1)}$ & $A_5^{(1)}$ & $A_6^{(1)}$ & $A_7^{(1)}$ & $A_8^{(1)}$ & $A_9^{(1)}$ \\
\hline

\textbf{Key Item 1}
& Have a sharp tongue
& \text{---}
& Make people feel at ease
& Know how to captivate people
&\text{---}
& \text{---}
&Believe too much tax money supports artists
& \text{---}
& \text{---}
\\ \hline

\textbf{Key Item 2}
& Do things according to a plan
& \text{---}
& Am relaxed most of the time
& Pay attention to details
& \text{---}
& \text{---}
& Mess things up
& \text{---}
& \text{---}
\\ \hline

\textbf{Key Item 3}
& Am the life of the party
& \text{---}
& Have a good word for everyone
& Start conversations
& \text{---}
& \text{---}
& Seldom feel blue
& \text{---}
& \text{---}
\\ \hline

\end{tabular}
    \caption{Key items based on $\boldsymbol{B}^{(1)}$ from the DDE copula (top; as presented in Figure~\ref{tab:keyitems}), as well as the encoder (middle) and decoder (bottom) weights from the iVAE}
    \label{comp:enc}
\end{table}
}

\clearpage
\bibliographystyle{apalike}
\bibliography{bibliography}

\end{document}